\newtheorem{theorem}{Theorem}[section]
\newtheorem{prop}[theorem]{Proposition}
\newtheorem{problem}{Problem}
\newtheorem{definition}[theorem]{Definition}
\newtheorem{rem}[theorem]{Remark}
\begin{document}

\title{STyLuS*: A Temporal Logic Optimal Control Synthesis Algorithm for Large-Scale Multi-Robot Systems
}
\author{Yiannis~Kantaros,~\IEEEmembership{Student Member,~IEEE,} and Michael~M.~Zavlanos,~\IEEEmembership{Member,~IEEE}
\thanks{Yiannis Kantaros and Michael M. Zavlanos are with the Department of Mechanical Engineering and Materials Science, Duke University, Durham, NC 27708, USA. $\left\{\text{yiannis.kantaros,michael.zavlanos}\right\}$@duke.edu. This work is supported in part by ONR under agreement $\#N00014-18-1-2374$ and by AFOSR under the award $\#FA9550-19-1-0169$.}}
\maketitle


\begin{abstract}
This paper proposes a new highly scalable and asymptotically optimal control synthesis algorithm from linear temporal logic specifications, called  $\text{STyLuS}^{*}$ for large-Scale optimal Temporal Logic Synthesis, that is designed to solve complex temporal planning problems in large-scale multi-robot systems. Existing planning approaches with temporal logic specifications rely on graph search techniques applied to a product automaton constructed among the robots. In our previous work, we have proposed a more tractable sampling-based algorithm that builds incrementally trees that approximate the
state-space and transitions of the synchronous product automaton and does not require sophisticated graph search techniques.  Here, we extend our previous work by introducing bias in the sampling process which is guided by transitions in the B$\ddot{\text{u}}$chi automaton that belong to the shortest path to the accepting states. This allows us to synthesize optimal motion plans from product automata with hundreds of orders of magnitude more states than those that existing optimal control synthesis methods or off-the-shelf model checkers can manipulate. We show that $\text{STyLuS}^{*}$ is probabilistically complete and asymptotically optimal and has exponential convergence rate. This is the first time that convergence rate results are provided for sampling-based optimal control synthesis methods. We provide simulation results that show that $\text{STyLuS}^{*}$ can synthesize optimal motion plans for very large multi-robot systems which is impossible using state-of-the-art methods.
\end{abstract}
\begin{IEEEkeywords} 
Temporal logic, optimal control synthesis, formal methods, sampling-based motion planning, multi-robot systems.
\end{IEEEkeywords}

\section{Introduction}
\IEEEPARstart{C}{ontrol} synthesis for mobile robots under complex tasks, captured by Linear Temporal Logic (LTL) formulas, builds upon either bottom-up approaches when independent LTL expressions are assigned to robots \cite{kress2009temporal,kress2007s,bhatia2010sampling,ulusoy2014receding} or top-down approaches when a global LTL formula describing a collaborative task is assigned to a team of robots \cite{chen2011synthesis,chen2012formal}, as in this work. Common in the above works is that they rely on model checking theory \cite{baier2008principles,clarke1999model} to find paths that satisfy LTL-specified tasks, without optimizing task performance. Optimal control synthesis under local and global LTL specifications has been addressed in \cite{smith2011optimal,guo2015multi} and \cite{kloetzer2010automatic,ulusoy2013optimality,ulusoy2014optimal}, respectively. In top-down approaches \cite{kloetzer2010automatic,ulusoy2013optimality,ulusoy2014optimal}, optimal discrete plans are derived for every robot using the individual transition systems that capture robot mobility and a Non-deterministic B$\ddot{\text{u}}$chi Automaton (NBA) that represents the global LTL specification. Specifically, by taking the synchronous product among the transition systems and the NBA, a synchronous Product B$\ddot{\text{u}}$chi Automaton (PBA) can be constructed. Then, representing the latter automaton as a graph and using graph-search techniques, optimal motion plans can be derived that satisfy the global LTL specification and optimize a cost function. As the number of robots or the size of the NBA increases, the state-space of the product automaton grows exponentially and, as a result, graph-search techniques become intractable. The same holds for recent search-based $\text{A}^*$-type methods \cite{khalidi2018t}, although they can solve problems an order of magnitude larger than those that graph-search approaches can handle. Consequently, these motion planning algorithms scale poorly with the number of robots and the complexity of the assigned task. A more tractable approach is presented in \cite{schillinger2016decomposition,schillinger2017simultaneous} that identifies independent parts of the LTL formula and builds a local product automaton for each agent. Nevertheless, this approach can be applied only to finite LTL missions and does not have optimality guarantees.

To mitigate these issues, in our previous work we proposed a sampling-based optimal control synthesis algorithm that avoids the explicit construction of the product among the transition systems and the NBA \cite{kantaros2017Csampling}. Specifically, this algorithm builds incrementally directed trees that approximately represent the state-space and transitions among states of the product automaton. The advantage is that approximating the product automaton by a tree rather than representing it explicitly by an arbitrary graph, as existing works do, results in significant savings in resources both in terms of memory to save the associated data structures and in terms of computational cost in applying graph search techniques.


In this work, we propose a new highly scalable optimal control synthesis algorithm from LTL specifications, called STyLuS* for large-Scale optimal Temporal Logic Synthesis, that is designed to solve complex temporal planning problems in large-scale multi-robot systems. In fact, $\text{STyLuS}^{*}$ extends the sampling-based synthesis algorithm proposed in \cite{kantaros2017Csampling} by introducing bias in the sampling
process.  For this, we first exploit the structure of the atomic propositions to prune the NBA by removing transitions that can never be enabled. Then, we define a metric over the state-space of the NBA that captures the shortest path, i.e., the minimum number of feasible transitions, between any two NBA states. Given this metric, we define a probability distribution over the nodes that are reachable from the current tree so that nodes that are closer to the final/accepting states of the NBA are sampled with higher probability; no particular sampling probability is proposed in  \cite{kantaros2017Csampling}.  We show that introducing bias in the sampling process does not violate the sampling assumptions in \cite{kantaros2017Csampling} so that $\text{STyLuS}^{*}$ inherits the same probabilistic completeness and asymptotic optimality guarantees. Moreover, we provide exponential convergence bounds; the first in the field of optimal control synthesis methods. Note that such guarantees are not provided for the algorithm in \cite{kantaros2017Csampling}.
We show that by biasing the sampling process we can synthesize optimal motion plans from product automata with order $10^{800}$ states and beyond, which is hundreds of orders more states than those that any existing optimal control synthesis algorithms \cite{kantaros15asilomar,kantaros2017sampling,ulusoy2013optimality,ulusoy2014optimal,kantaros2017Csampling} can handle. For example, our algorithm in \cite{kantaros2017Csampling}, when implemented with uniform sampling, can optimally solve problems with order $10^{10}$ states, many more than existing methods \cite{kantaros15asilomar,kantaros2017sampling,ulusoy2013optimality,ulusoy2014optimal} but orders of magnitude fewer than $\text{STyLuS}^*$. Compared to off-the-shelf model-checkers, such as NuSMV \cite{cimatti2002nusmv} and nuXmv \cite{cavada2014nuxmv}, that can design feasible but not optimal motion plans, our proposed biased sampling-based algorithm can find feasible plans much faster. NuSMV can solve problems with order $10^{30}$ states, while nuXmv can handle infinite-state synchronous transition systems but it is slower than $\text{STyLuS}^{*}$. Note that $\text{STyLuS}^{*}$ can be implemented in a distributed way, as in our recent work \cite{kantaros2017Dsampling}, which can further decrease the computational time. 

Relevant sampling-based control synthesis methods are also presented in \cite{karaman2012sampling,vasile2013sampling}. These methods consider continuous state spaces and employ sampling-based methods to build discrete abstractions of the environment that are represented by graphs of arbitrary structure, e.g., as in \cite{karaman2011sampling,janson2015fast}  for point-to-point navigation. Once these abstractions become expressive enough to generate motion plans that satisfy the LTL specification, graph search methods are applied to the respective PBA to design a feasible path. However, representing the environment using graphs of arbitrary structure compromises scalability of temporal logic planning methods since, as the size of these graphs increases, more resources are needed to save the associated structure and search for optimal plans using graph search methods.
While our proposed sampling-based approach assumes that a discrete abstraction of the environment is available, as in \cite{conner2003composition,belta2004constructing,belta2005discrete,kloetzer2006reachability,boskos2015decentralized}, it build trees, instead of arbitrary graphs, to approximate the product automaton. Therefore, it is more economical in terms of memory requirements and does not require the application of expensive graph search techniques to find the optimal motion plan. Instead, it tracks sequences of parent nodes starting from desired accepting states. Combined with the proposed biased sampling approach, our method can handle much more complex planning problems with more robots and LTL tasks that correspond to larger NBAs. A more detailed comparison with \cite{karaman2012sampling,vasile2013sampling}  can be found in \cite{kantaros2017Csampling}. 

The advantage of using non-uniform sampling functions for sampling-based motion planning has also been demonstrated before, e.g., in \cite{van2005using,zucker2008adaptive,ichter2017learning}. For example, \cite{ichter2017learning} learns sampling distributions from demonstrations that are then used to bias sampling. This results in an order of magnitude improvement in success rates and path costs compared to uniform sampling-based methods. Probabilistically safe corridors that
are constructed using a learned approximate probabilistic model of a configuration space have also been recently proposed to enhance scalability of sampling-based motion planning and also minimize collision likelihood \cite{huh2019probabilistically}. Nevertheless,  all these approaches focus on simple point-to-point navigation tasks, unlike the method proposed here that can solve complex tasks captured by LTL formulas. Related are also Monte Carlo Tree Search (MCTS) algorithms that are used to find optimal decisions in games or planning problems and they also rely on building search trees \cite{zagoruyko2019monte,best2019dec,browne2012survey}. The main challenge in MCTS is to balance between exploration and exploitation, which can be achieved using, e.g., the Upper Confidence Bound for Trees (UCT)
algorithm \cite{kocsis2006bandit}. MCTS methods involve a \textit{simulation} step that requires to complete one random playout from the child node that is added to the tree. $\text{STyLuS}^*$ completely avoids this step which can be computationally expensive for complex planning problems that require a large horizon until the task is accomplished. Moreover, to the best of our knowledge, MCTS has not been applied to LTL planning problems.

A preliminary version of this work can be found in  \cite{kantaros2018largescale}. In  \cite{kantaros2018largescale} it is shown that the proposed biased sampling process satisfies the assumptions in \cite{kantaros2017Csampling} so that the proposed algorithm inherits the same probabilistic completeness and asymptotic optimality guarantees.
Compared to \cite{kantaros2018largescale}, here we additionally provide exponential convergence rate bounds. To the best of our knowledge, this the first sampling-based motion planning algorithm with temporal logic specifications that also has convergence-rate guarantees.  Moreover, we provide additional extensive simulation studies that show the effect of bias in sampling in the convergence rate of the algorithm, as well as scalability of our method with respect to the number of robots, the size of the transition systems, and the size of the NBA. We also compare our method to relevant state-of-the-art methods. To the best of our knowledge, this the first optimal control synthesis method for global temporal logic specifications with optimality and convergence guarantees that can be applied to large-scale multi-robot systems.


The rest of the paper is organized as follows. In Section \ref{sec:problem} we present the problem formulation. In Section \ref{sec:solution} we describe our proposed sampling-based planning algorithm and in Section \ref{sec:corr} we examine its correctness, optimality, and convergence rate. Numerical experiments are presented in Section \ref{sec:sim}.

\section{Problem Formulation}\label{sec:problem}
Consider $N$ mobile robots that live in a complex workspace $\mathcal{W}\subset\mathbb{R}^d$. 
We assume that there are $W$ disjoint regions of interest in $\mathcal{W}$. The $j$-th region is denoted by $r_j$ and it can be of any arbitrary shape.\footnote{For simplicity of notations we consider disjoint regions $r_j$. However, overlapping regions can also be considered by introducing additional states to  $\text{wTS}_i$ defined in Definition \ref{defn:wTS} that capture the presence of robot $i$ in more than one region. 
} Given the robot dynamics, robot mobility in the workspace $\ccalW$ can be represented by a weighted Transition System (wTS); see also Figure \ref{fig:wts}. The wTS for robot $i$ is defined as follows:

\begin{definition}[wTS]
A \textit{weighted Transition System} (wTS) for robot $i$, denoted by $\text{wTS}_{i}$ is a tuple $\text{wTS}_{i}=\left(\mathcal{Q}_{i}, q_{i}^0,\rightarrow_{i}, w_i, \mathcal{AP}_i,L_{i}\right)$ where: 
(a) $\mathcal{Q}_{i}=\bigcup_{j=1}^{W}\{q_{i}^{r_j}\}$ is the set of states, where a state $q_{i}^{r_j}$ indicates that robot $i$ is at location $r_j$; (b) $q_{i}^0\in\mathcal{Q}_{i}$ is the initial state of robot $i$; (c) $\rightarrow_{i}\subseteq\mathcal{Q}_{i}\times\mathcal{Q}_{i}$ is the transition relation for robot $i$. Given the robot dynamics, if there is a control input ${\bf{u}}_i$ that can drive robot $i$ from location $r_j$ to $r_e$, then there is a transition from state $q_i^{r_j}$ to $q_i^{r_e}$ denoted by $(q_i^{r_j}, q_i^{r_e})\in\rightarrow_i$; (d) $w_{i}:\mathcal{Q}_{i}\times\mathcal{Q}_{i}\rightarrow \mathbb{R}_+$ is a cost function that assigns weights/costs to each possible transition in wTS. For example, such costs can be associated with the distance that needs to be traveled by robot $i$ in order to move from state $q_i^{r_j}$ to state $q_i^{r_k}$; 
(e) $\mathcal{AP}_i=\bigcup_{j=1}^W\{\pi_{i}^{r_j}\}$ is the set of atomic propositions, where $\pi_{i}^{r_j}$ is true if robot $i$ is inside region $r_j$ and false otherwise; and (f) $L_{i}:\mathcal{Q}_{i}\rightarrow {\mathcal{AP}_i}$ is an observation/output function defined as $L_i(q_i^{r_j})=\pi_i^{r_j}$, for all $q_i^{r_j}\in\ccalQ_i$. 
\label{defn:wTS}
\end{definition} 

\begin{figure}[t]
\centering
  \includegraphics[width=1\linewidth]{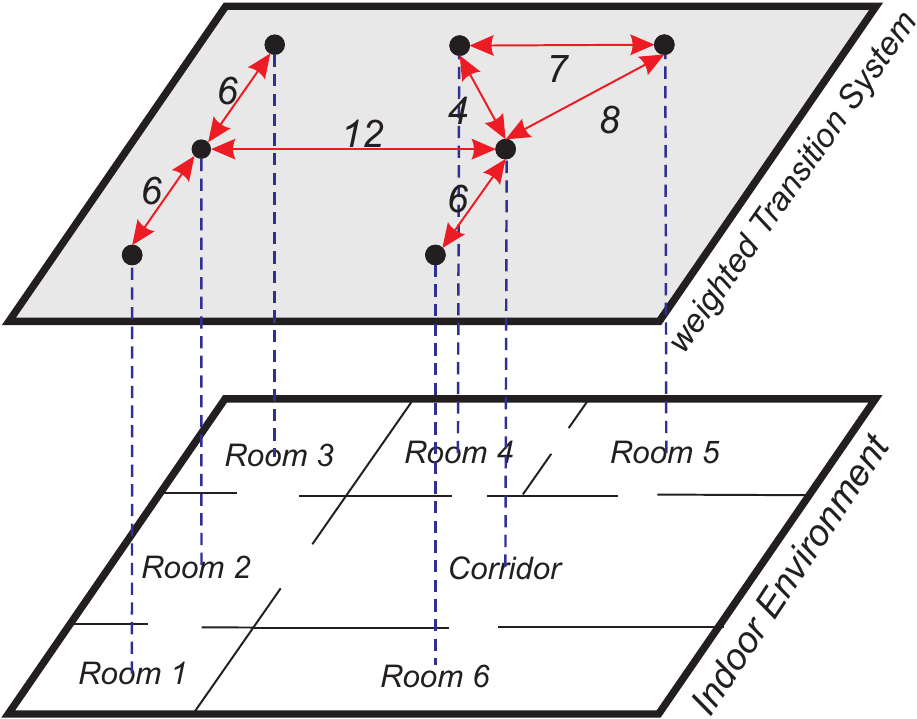}
  \caption{Graphical depiction of a wTS that abstracts robot mobility in an indoor environment. Black disks stand for the states of wTS, red edges capture transitions among states and numbers on these edges represent the cost $w_i$ for traveling from one state to another one.}
 \label{fig:wts}
\end{figure}

Given the definition of the wTS, we can define the synchronous \textit{Product Transition System} (PTS) as follows \cite{ulusoy2013optimality}:

\begin{definition}[PTS]
Given $N$ transition systems $\text{wTS}_i=(\mathcal{Q}_{i}, q_{i}^0, \rightarrow_{i},w_i,\mathcal{AP},L_{i})$, the \textit{product transition system} $\text{PTS}=\text{wTS}_{1}\otimes\text{wTS}_{2}\otimes\dots\otimes\text{wTS}_{N}$ is a tuple $\text{PTS}=(\mathcal{Q}_{\text{PTS}}, q_{\text{PTS}}^0,\longrightarrow_{\text{PTS}},w_{\text{PTS}},\mathcal{AP},L_{\text{PTS}})$ where
(a) $\mathcal{Q}_{\text{PTS}}=\mathcal{Q}_{1}\times\mathcal{Q}_{2}\times\dots\times\mathcal{Q}_{N}$ is the set of states;
(b) $q_{\text{PTS}}^0=(q_{1}^0,q_{2}^0,\dots,q_{N}^0)\in\mathcal{Q}_{\text{PTS}}$ is the initial state,
(c) $\longrightarrow_{\text{PTS}}\subseteq\mathcal{Q}_{\text{PTS}}\times\mathcal{Q}_{\text{PTS}}$ is the transition relation defined by the rule $\frac{\bigwedge _{\forall i}\left(q_{i}\rightarrow_{i}q_{i}'\right)}{q_{\text{PTS}}\rightarrow_{\text{PTS}}q_{\text{PTS}}'}$, where with slight abuse of notation $q_{\text{PTS}}=(q_{1},\dots,q_{N})\in\mathcal{Q}_{\text{PTS}}$, $q_{i}\in\mathcal{Q}_{i}$. The state $q_{\text{PTS}}'$ is defined accordingly. In words, this transition rule says that there exists a transition from $q_{\text{PTS}}$ to $q_{\text{PTS}}'$ if there exists a transition from $q_i$ to $q_i'$ for all $i\in\left\{1,\dots,N\right\}$; (d) $w_{\text{PTS}}:\mathcal{Q}_{\text{PTS}}\times\mathcal{Q}_{\text{PTS}}\rightarrow \mathbb{R}_+$ is a cost function that assigns weights/cost to each possible transition in PTS, defined as $	w_{\text{PTS}}(q_{\text{PTS}},q_{\text{PTS}}')=\sum_{i=1}^N w_i(\Pi|_{\text{wTS}_{i}}q_{\text{PTS}},\Pi|_{\text{wTS}_{i}}q_{\text{PTS}}')$, 
where $q_{\text{PTS}}',q_{\text{PTS}}\in\mathcal{Q}_{\text{PTS}}$, and $\Pi_{\text{wTS}_i}q_{\text{PTS}}$ stands for the projection of state $q_{\text{PTS}}$ onto the state space of $\text{wTS}_i$. The state $\Pi_{\text{wTS}_i}q_{\text{PTS}}\in\mathcal{Q}_i$ is obtained by removing all states in $q_{\text{PTS}}$ that do not belong to $\mathcal{Q}_i$; (e) $\mathcal{AP}=\bigcup_{i=1}^N\mathcal{AP}_i$ is the set of atomic propositions; and, (f) $L_{\text{PTS}}=\bigcup_{\forall i}L_{i}: \mathcal{Q}_{\text{PTS}}\rightarrow{\mathcal{AP}}$ is an observation/output function giving the set of atomic propositions that are satisfied at a state $q_{\text{PTS}}\in\mathcal{Q}_{\text{PTS}}$. 
\label{def:pts}
\end{definition} 

In what follows, we give definitions related to the $\text{PTS}$ that we will use throughout the rest of the paper. An \textit{infinite path} $\tau$ of a $\text{PTS}$ is an infinite sequence of states, $\tau=\tau(1)\tau(2)\tau(3)\dots$ such that $\tau(1)=q_{\text{PTS}}^0$, $\tau(k)\in\mathcal{Q}_{\text{PTS}}$, and $(\tau(k),\tau_{i}(k+1))\in\rightarrow_{\text{PTS}}$, $\forall k\in\mathbb{N}_+$, where $k$ is an index that points to the $k$-th entry of $\tau$ denoted by $\tau(k)$. The \textit{trace} of an infinite path $\tau=\tau(1)\tau(2)\tau(3)\dots$ of a PTS, denoted by $\texttt{trace}(\tau)\in\left(2^{\mathcal{AP}}\right)^{\omega}$, where $\omega$ denotes infinite repetition, is an infinite word that is determined by the sequence of atomic propositions that are true in the states along $\tau$, i.e., $\texttt{trace}(\tau)=L(\tau(1))L(\tau(2))\dots$.  A \textit{finite path} of a $\text{PTS}$ can be defined accordingly. The only difference with the infinite path is that a finite path is defined as a finite sequence of states of a $\text{PTS}$.  Given the definition of the weights $w_{\text{PTS}}$ in Definition \ref{def:pts}, the \textit{cost} of a finite path $\tau$, denoted by $\hat{J}(\tau)\geq0$, can be defined as
\begin{equation}\label{eq:cost}
\hat{J}(\tau)=\sum_{k=1}^{|\tau|-1}w_{\text{PTS}}(\tau(k),\tau(k+1)),
\end{equation} 
where, $|\tau|$ stands for the number of states in $\tau$. In words, the cost \eqref{eq:cost} captures the total cost incurred by all robots during the execution of the finite path $\tau$. 



We assume that the robots have to accomplish a complex collaborative task captured by a global LTL statement $\phi$ defined over the set of atomic propositions $\mathcal{AP}=\bigcup_{i=1}^N\mathcal{AP}_i$. Due to space limitations, we abstain from formally defining the semantics and syntax of LTL. A detailed overview can be found in \cite{baier2008principles}. Given an LTL formula $\phi$, we define the \textit{language} $\texttt{Words}(\phi)=\left\{\sigma\in (2^{\mathcal{AP}})^{\omega}|\sigma\models\phi\right\}$, where $\models\subseteq (2^{\mathcal{AP}})^{\omega}\times\phi$ is the satisfaction relation, as the set of infinite words $\sigma\in (2^{\mathcal{AP}})^{\omega}$ that satisfy the LTL formula $\phi$. Any LTL formula $\phi$ can be translated into a Nondeterministic B$\ddot{\text{u}}$chi Automaton (NBA) over $(2^{\mathcal{AP}})^{\omega}$ denoted by $B$  defined as follows \cite{vardi1986automata}.
\begin{definition}[NBA]
A \textit{Nondeterministic B$\ddot{\text{u}}$chi Automaton} (NBA) $B$ over $2^{\mathcal{AP}}$ is defined as a tuple $B=\left(\ccalQ_{B}, \ccalQ_{B}^0,\Sigma,\rightarrow_B,\mathcal{Q}_B^F\right)$, where $\ccalQ_{B}$ is the set of states, $\ccalQ_{B}^0\subseteq\ccalQ_{B}$ is a set of initial states, $\Sigma=2^{\mathcal{AP}}$ is an alphabet, $\rightarrow_{B}\subseteq\ccalQ_{B}\times \Sigma\times\ccalQ_{B}$ is the transition relation, and $\ccalQ_B^F\subseteq\ccalQ_{B}$ is a set of accepting/final states. 
\end{definition}

Given the $\text{PTS}$ and the NBA $B$ that corresponds to the LTL $\phi$, we can now define the \textit{Product B$\ddot{\text{u}}$chi Automaton} (PBA) $P=\text{PTS}\otimes B$, as follows \cite{baier2008principles}:

\begin{definition}[PBA]\label{defn:pba}
Given the product transition system $\text{PTS}=(\mathcal{Q}_{\text{PTS}},\allowbreak q_{\text{PTS}}^0,\longrightarrow_{\text{PTS}},w_{\text{PTS}},\mathcal{AP},L_{\text{PTS}})$ and the NBA $B=(\mathcal{Q}_{B}, \mathcal{Q}_{B}^0,\Sigma,\rightarrow_B,\mathcal{Q}_{B}^{F})$, we can define the \textit{Product B$\ddot{\text{u}}$chi Automaton} $P=\text{PTS}\otimes B$ as a tuple $P=(\mathcal{Q}_P, \mathcal{Q}_P^0,\longrightarrow_{P},w_P,\mathcal{Q}_P^F)$ where
(a) $\mathcal{Q}_P=\mathcal{Q}_{\text{PTS}}\times\mathcal{Q}_{B}$ is the set of states; (b) $\mathcal{Q}_P^0=q_{\text{PTS}}^0\times\mathcal{Q}_B^0$ is a set of initial states;
(c) $\longrightarrow_{P}\subseteq\mathcal{Q}_P\times 2^{\mathcal{AP}}\times\mathcal{Q}_P$ is the transition relation defined by the rule: $\frac{(q_{\text{PTS}}\rightarrow_{\text{PTS}} q_{\text{PTS}}')\wedge( q_{B}\xrightarrow{L_{\text{PTS}}\left(q_{\text{PTS}}\right)}q_{B}')}{q_{P}=\left(q_{\text{PTS}},q_{B}\right)\longrightarrow_P q_{P}'=\left(q_{\text{PTS}}',q_{B}'\right)}$. Transition from state $q_P\in\mathcal{Q}_P$ to $q_P'\in\mathcal{Q}_P$, is denoted by $(q_P,q_P')\in\longrightarrow_P$, or $q_P\longrightarrow_P q_P'$; (d) $w_P(q_{\text{P}},q_{\text{P}}')=w_{\text{PTS}}(q_{\text{PTS}},q_{\text{PTS}}')$,  where $q_{\text{P}}=(q_{\text{PTS}},q_B)$ and $q_{\text{P}}'=(q_{\text{PTS}}',q_B')$; and
(e) $\mathcal{Q}_P^F=\mathcal{Q}_{\text{PTS}}\times\mathcal{Q}_B^F$ is a set of accepting/final states. 
\end{definition} 

Given $\phi$ and the PBA an infinite path $\tau$ of a $\text{PTS}$ satisfies $\phi$ if and only if $\texttt{trace}(\tau)\in\texttt{Words}(\phi)$, which is equivalently denoted by $\tau\models\phi$. Specifically, if there is
a path satisfying $\phi$, then there exists a path  $\tau\models\phi$ that can be written in a finite representation, called prefix-suffix structure, i.e., $\tau=\tau^{\text{pre}}[\tau^{\text{suf}}]^{\omega}$, where the prefix part $\tau^{\text{pre}}$ is executed only once followed by the indefinite execution of the suffix part $\tau^{\text{suf}}$. The prefix part $\tau^{\text{pre}}$ is the projection of a finite path $p^{\text{pre}}$ that lives in $\ccalQ_P$ onto $\ccalQ_{\text{PTS}}$. The path $p^{\text{pre}}$ starts from an initial state $q_P^0\in\ccalQ_P^0$ and ends at a final state $q_P^F\in\ccalQ_P^F$, i.e., it has the following structure $p^{\text{pre}}=(q_{\text{PTS}}^0,q_B^0)(q_{\text{PTS}}^1,q_B^1)\dots (q_{\text{PTS}}^K,q_B^K)$ with $(q_{\text{PTS}}^K,q_B^K)\in\ccalQ_P^F$. The suffix part $\tau^{\text{suf}}$ is the projection of a finite path $p^{\text{suf}}$ that lives in $\ccalQ_P$ onto $\ccalQ_{\text{PTS}}$. The path $p^{\text{suf}}$ is a cycle around the final state $(q_{\text{PTS}}^K,q_B^K)$, i.e., it has the following structure $p^{\text{suf}}=(q_{\text{PTS}}^{K},q_B^K)(q_{\text{PTS}}^{K+1},q_B^{K+1})\dots (q_{\text{PTS}}^{K+S},q_B^{K+S})(q_{\text{PTS}}^{K+S+1},q_B^{K+S+1})$, where $(q_{\text{PTS}}^{K+S+1},q_B^{K+S+1})=(q_{\text{PTS}}^{K},q_B^{K})$. Then our goal is to compute a plan $\tau=\tau^{\text{pre}}[\tau^{\text{suf}}]^{\omega}=\Pi|_{\text{PTS}}p^{\text{pre}}[\Pi|_{\text{PTS}}p^{\text{pre}}]^{\omega}$, where $\Pi|_{\text{PTS}}$ stands for the projection on the state-space $\ccalQ_{\text{PTS}}$, so that the following objective function is minimized
\begin{align}\label{eq:cost2}
J(\tau)=\beta\hat{J}(\tau^{\text{pre}})+(1-\beta)\hat{J}(\tau^{\text{suf}}),
\end{align}
where $\hat{J}(\tau^{\text{pre}})$ and $\hat{J}(\tau^{\text{suf}})$ stand for the cost of the prefix and suffix part, respectively and $\beta\in[0,1]$ is a user-specified parameter. Specifically, in this paper we address the following problem. 

\begin{problem}\label{pr:problem}
Given a global LTL specification $\phi$, and transition systems $\text{wTS}_i$, for all robots $i$, determine a discrete team plan $\tau$ that satisfies $\phi$, i.e., $\tau\models\phi$, and minimizes the cost function \eqref{eq:cost2}.
\end{problem}

\subsection{A Solution to Problem \ref{pr:problem}}\label{sec:prelim}
Problem \ref{pr:problem} is typically solved by applying graph-search methods to the PBA. Specifically, to generate a motion plan $\tau$ that satisfies $\phi$, the PBA is viewed as a weighted directed graph $\mathcal{G}_P=\{\mathcal{V}_P, \mathcal{E}_P, w_P\}$, where the set of nodes $\mathcal{V}_P$ is indexed by the set of states $\mathcal{Q}_P$, the set of edges $\mathcal{E}_P$ is determined by the transition relation $\longrightarrow_P$, and the weights assigned to each edge are determined by the function $w_P$. Then, to find the optimal plan $\tau\models\phi$, shortest paths towards final states and shortest cycles around them are computed. More details about this approach can be found in \cite{smith2011optimal,ulusoy2013optimality,ulusoy2014optimal,kantaros15asilomar} and the references therein.

\section{Sampling-based Optimal Control Synthesis}\label{sec:solution}

In this section, we build upon our previous work \cite{kantaros2017Csampling} and propose a biased sampling-based optimal control synthesis algorithm that can synthesize optimal motion plans $\tau$ in prefix-suffix structure, i.e., $\tau=\tau^{\text{pre}}[\tau^{\text{suf}}]^{\omega}$, that satisfy a given global LTL specification $\phi$ from PBA with arbitrarily large state-space. The procedure is based on the incremental construction of a directed tree that approximately represents the state-space $\mathcal{Q}_P$ and the transition relation $\rightarrow_P$ of the PBA defined in Definition \ref{defn:pba}. In what follows, we denote by $\mathcal{G}_T=\{\mathcal{V}_T,\mathcal{E}_T,\texttt{Cost}\}$ the tree that approximately represents the PBA $P$. Also, we denote by $q_P^r$ the root of $\ccalG_T$. The set of nodes $\mathcal{V}_T$ contains the states of $\mathcal{Q}_P$ that have already been sampled and added to the tree structure. The set of edges $\mathcal{E}_T$ captures transitions between nodes in $\mathcal{V}_T$, i.e., $(q_P,q_P')\in\mathcal{E}_T$, if there is a transition from state $q_P\in\mathcal{V}_T$ to state $q_P'\in\mathcal{V}_T$. The function $\texttt{Cost}:\ccalV_T:\rightarrow\mathbb{R}_+$ assigns the cost of reaching node $q_P\in\mathcal{V}_T$ from the root $q_P^r$ of the tree. In other words, $\texttt{Cost}(q_P)=\hat{J}(\tau_T)$, where $q_P\in\ccalV_T$ and $\tau_T$ is the path in the tree $\ccalG_T$ that connects the root to $q_P$.

The construction of the prefix and the suffix part is described in Algorithm \ref{alg:plans}. In lines \ref{alg1:line1}-\ref{alg1:GB}, first the LTL formula is translated to an NBA $B$ and then $B$ is pruned by removing transitions that can never happen. Then, in lines \ref{alg1:line2}-\ref{alg1:line6}, the prefix parts $\tau^{\text{pre},a}$ are constructed, followed by the construction of their respective suffix parts $\tau^{\text{suf},a}$ in lines \ref{alg1:line7}-\ref{alg1:line15}. Finally, using the constructed prefix and suffix parts, the optimal plan $\tau=\tau^{\text{pre},a^{*}}[\tau^{\text{suf},a^{*}}]^{\omega}\models\phi$ is synthesized in lines \ref{alg1:line15a}-\ref{alg1:line16}.

\begin{algorithm}[t]
\caption{$\text{STyLuS}^{*}$: large-Scale optimal Temporal Logic Synthesis}
\LinesNumbered
\label{alg:plans}
\KwIn {LTL formula $\phi$, $\{\text{wTS}_i\}_{i=1}^N$, $q_{\text{PTS}}^0\in\ccalQ_{\text{PTS}}$, maximum numbers of iterations $n_{\text{max}}^{\text{pre}}$, $n_{\text{max}}^{\text{suf}}$}
\KwOut {Optimal plans $\tau\models\phi$}
Convert $\phi$ to an NBA $B=\left(\ccalQ_B,\ccalQ_B^0,\rightarrow_B,\ccalQ_B^F\right)$\;\label{alg1:line1} 
$[\{\Sigma_{q_B,q_B'}^{\text{feas}}\}_{\forall q_B, q_B'\in\ccalQ_B}]=\texttt{FeasibleWords}(\{\Sigma_{q_B,q_B'}\}_{\forall q_B, q_B'\in\ccalQ_B})$\;\label{alg1:feas} 
Construct graph $\ccalG_B$ and $d(q_B,q_B')$\;\label{alg1:GB} 
Define goal set: $\mathcal{X}_{\text{goal}}^{\text{pre}}$\;\label{alg1:line2} 
\For{$b_0=1:|\ccalQ_B^0|$}{\label{alg1:line2a} 
Initial NBA state: $q_B^0=\ccalQ_B^0(b_0)$\;\label{alg1:initNBA}
Root of the tree: $q_P^r=(q_\text{PTS}^0,q_B^0)$\;\label{alg1:line3}
\footnotesize{
$\left[\ccalG_T,\ccalP\right]=\texttt{ConstructTree}(\mathcal{X}_{\text{goal}}^{\text{pre}},\{\text{wTS}_i\}_{i=1}^N,B,q_P^r,n_{\text{max}}^{\text{pre}})$}\;\label{alg1:line4}
\normalsize
\For {$a=1:|\ccalP|$}{\label{alg1:line5} 
$\tau^{\text{pre},a}=\texttt{FindPath}(\mathcal{G}_T,q_P^r,\mathcal{P}(a))$\;}\label{alg1:line6} 
%
\For {$a=1:|\ccalP|$ }{\label{alg1:line7}
Root of the tree: $q_P^r=\ccalP(a)$\;\label{alg1:line8} 
Define goal set: $\mathcal{X}_{\text{goal}}^{\text{suf}}(q_P^r)$\;\label{alg1:line9}
\If{$(q_P^r\in\ccalX_{\text{goal}}^{\text{suf}}) ~\wedge~(w_P(q_P^r,q_P^r)=0)$}{\label{alg1:line9a}
$\ccalG_T=(\{q_P^r\},\{q_P^r,q_P^r\},0)$\;\label{alg1:line9b}
$\ccalS_a=\{q_P^r\}$\;}\label{alg1:line9c}
\Else{\label{alg1:line9d} 
\footnotesize{
$\left[\ccalG_T,\ccalS_a\right]=\texttt{ConstructTree}(\mathcal{X}_{\text{goal}}^{\text{suf}},\{\text{wTS}_i\}_{i=1}^N,B,q_P^r,n_{\text{max}}^{\text{suf}})$}\;\label{alg1:line10}}
}
Compute $\tau^{\text{suf},a}$ (see \cite{kantaros2017Csampling})\;\label{alg1:suf}
$a_{q_B^0}=\argmin_a(\hat{J}(\tau^{\text{pre},a})+\hat{J}(\tau^{\text{suf},a}))$\;\label{alg1:line15}} 
$a^*=\argmin_{a_{q_B^0}}(\hat{J}(\tau^{\text{pre}}_{q_B^0})+\hat{J}(\tau^{\text{suf}}_{{q_B^0}}))$\;\label{alg1:line15a} 
Optimal Plan: $\tau=\tau^{\text{pre},a^{*}}[\tau^{\text{suf},a^{*}}]^{\omega}$\;\label{alg1:line16} 
\end{algorithm}

\subsection{Feasible Symbols}

In this section, given the NBA $B$ that corresponds to the assigned LTL formula $\phi$, we define a function $d:\ccalQ_B\times\ccalQ_B\rightarrow \mathbb{N}$ that returns the minimum number of \textit{feasible} NBA transitions that are required to reach a state $q_B'\in\ccalQ_B$ starting from a state $q_B\in\ccalQ_B$ [lines \ref{alg1:line1}- \ref{alg1:GB}, Alg. \ref{alg:plans}]. This function will be used in the construction of the prefix and suffix parts to bias the sampling process. A \textit{feasible} NBA transition is defined as follows; see also Figure \ref{fig:nba}.

\begin{figure}[t]
  \centering
  \includegraphics[width=1\linewidth]{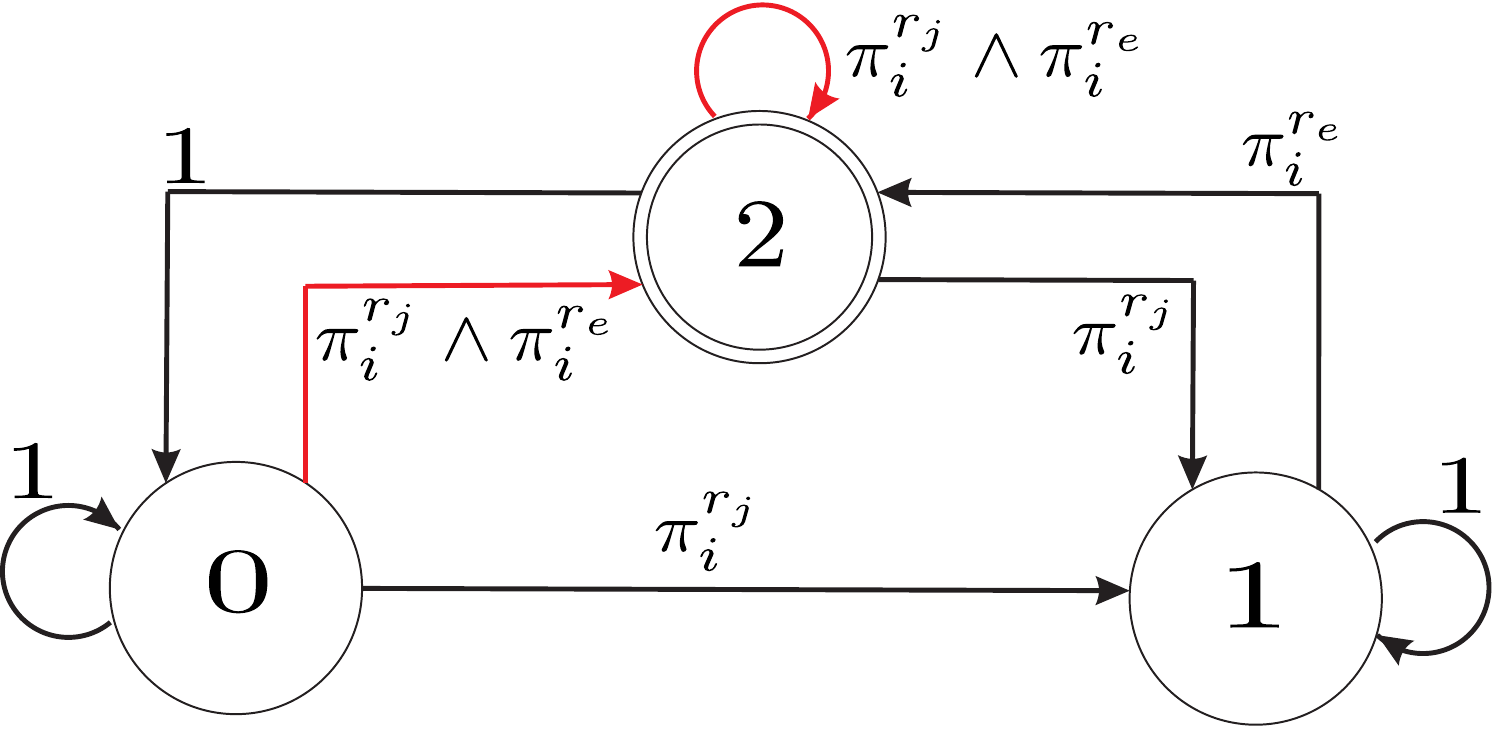}
   \caption{Graphical depiction of infeasible transitions of the NBA that corresponds to the LTL formula $\phi=\square\Diamond(\pi_i^{r_j})\wedge \square\Diamond(\pi_i^{r_e})$. The states `0' and `2' correspond to the initial and final state, respectively. The transition from the initial to the final state and the self-loop around the final state are infeasible transitions, since they are activated only by the symbol $\bar{\sigma}=\pi_i^{r_j}\pi_i^{r_e}$ which can never be generated by $\text{wTS}_i$, assuming disjoint regions $r_j$ and $r_e$.}
  \label{fig:nba}
\end{figure}


\begin{definition}[Feasible NBA transitions]
A transition $(q_B,\bar{\sigma},q_B')\in\rightarrow_B$ is feasible if the finite symbol $\bar{\sigma}\in\Sigma=2^{\mathcal{AP}}$ is a \textit{feasible} symbol, i.e., if $\bar{\sigma}$ can be generated by the PTS defined in Definition \ref{def:pts}.
\end{definition}

To characterize the symbols $\bar{\sigma}\in\Sigma$ that are feasible, we first need to define the symbols $\bar{\sigma}_i\in\Sigma_i=2^{\mathcal{AP}_i}$ that are feasible, i.e, the symbols that can be generated by $\text{wTS}_i$ defined in Definition \ref{defn:wTS}. 
\begin{definition}[Feasible symbols $\bar{\sigma}_i\in\Sigma_i$]\label{defn:infTSi}
A symbol $\bar{\sigma}_i\in\Sigma_i$ is \textit{feasible} if and only if $\bar{\sigma}_i\not\models b_i^{\text{inf}}$, where $b_i^{\text{inf}}$ is a Boolean formula defined as 
\begin{equation}\label{bInfTSi}
b_i^{\text{inf}}=\vee_{\forall r_j}( \vee_{\forall r_e} (\pi_i^{r_j}\wedge\pi_i^{r_e})).
\end{equation}
\end{definition}
Note that the Boolean formula $b_i^{\text{inf}}$ is satisfied by any finite symbol $\bar{\sigma}_i\in\Sigma_i$ that requires robot $i$ to be present in two or more disjoint regions, simultaneously. For instance, the symbol $\bar{\sigma}_i=\pi_i^{r_j}\pi_i^{r_e}$ satisfies $b_i^{\text{inf}}$.\footnote{Note that if we consider regions $r_j$ that are not necessarily disjoint, then the Boolean formula $b_i^{\text{inf}}$ is defined as $b_i^{\text{inf}}=\vee_{\forall r_j}( \vee_{\forall r_e \text{s.t.} r_j \cap r_e=\emptyset} (\pi_i^{r_j}\wedge\pi_i^{r_e}))$. Note also that definition of infeasible symbols depends on the problem at hand, i.e, the definition of atomic propositions included in the sets $\mathcal{AP}_i$.} Next, we define the feasible symbols $\bar{\sigma}\in\Sigma$.

\begin{definition}[Feasible symbols  $\bar{\sigma}\in\Sigma$]\label{defn:infPTS}
A symbol $\bar{\sigma}\in\Sigma$ is \textit{feasible} if and only if $\bar{\sigma}_i\not\models b_i^{\text{inf}}$, for all robots $i$,
where $\bar{\sigma}_i=\Pi|_{\Sigma_i}\bar{\sigma}$, $b_i^{\text{inf}}$ is defined in \eqref{bInfTSi}, and $\Pi|_{\Sigma_i}\bar{\sigma}$ stands for the projection of the symbol $\bar{\sigma}$ onto $\Sigma_i=2^{\mathcal{AP}_i}$.\footnote{For instance, $\Pi|_{\Sigma_i}(\pi_i^{r_e}\pi_m^{r_h})=\pi_i^{r_e}$.}
\end{definition}

To define the proposed function $d:\ccalQ_B\times\ccalQ_B\rightarrow \mathbb{N}$, we first construct sets $\Sigma_{q_B,q_B'}^\text{feas}\subseteq\Sigma$ that collect feasible finite symbols $\bar{\sigma}$ that enable a transition from a state $q_B\in\ccalQ_B$ to $q_B'\in\ccalQ_B$ according to $\rightarrow_B$  [line \ref{alg1:feas}, Alg. \ref{alg:plans}]. 
To construct these sets, sets $\Sigma_{q_B,q_B'}\subseteq\Sigma$ that collect all finite (feasible or infeasible) symbols $\bar{\sigma}\in\Sigma$ that enable a transition from $q_B\in\ccalQ_B$ to $q_B'\in\ccalQ_B$, for all $q_B,q_B'\in\ccalQ_B$, are required.\footnote{Note that the sets $\Sigma_{q_B,q_B'}$ can be computed during the translation of an LTL formula $\phi$ to an NBA; see e.g., the software package \cite{ltl2nbaBelta} that relies on \cite{gastin2001fast} for the construction of the NBA.} Then, the sets $\Sigma_{q_B,q_B'}^\text{feas}\subseteq \Sigma_{q_B,q_B'}$ can be constructed by removing from $\Sigma_{q_B,q_B'}$  all symbols $\bar{\sigma}$ that are not feasible, for all $q_B,q_B'\in\ccalQ_{B}$.

Next, we view the NBA as a directed graph $\ccalG_B=\{\ccalV_B,\ccalE_B\}$, where the set of nodes $\ccalV_B$ is indexed by the states $q_B\in\ccalQ_B$ and the set of edges $\ccalE_B\subseteq\ccalV_B\times\ccalV_B$ collects the edges from nodes/states $q_B$ to $q_B'$ denoted by $(q_B,q_B')$, where $(q_B,q_B')$ exists if $\Sigma_{q_B,q_B'}^{\text{feas}}\neq \emptyset$ [line \ref{alg1:GB}, Alg. \ref{alg:plans}]. 
Assigning weights equal to one to all edges in the set $\ccalE_B$, we define the function $d:\ccalQ_B\times\ccalQ_B\rightarrow \mathbb{N}$ as

\begin{equation}
d(q_B,q_B')=\left\{
                \begin{array}{ll}
                  |SP_{q_B,q_B'}|, \mbox{if $SP_{q_B,q_B'}$ exists,}\\
                  \infty, ~~~~~~~~~\mbox{otherwise}, 
                \end{array}
              \right.
\end{equation}
\normalsize
where $SP_{q_B,q_B'}$ denotes the shortest path in $\ccalG_B$ from $q_B\in\ccalV_B$ to $q_B'\in\ccalV_B$ and $|SP_{q_B,q_B'}|$ stands for its cost., i.e., the number of transitions/edges in $SP_{q_B,q_B'}$. 


\subsection{Construction of Optimal Prefix Parts}\label{sec:prefix}

In this section we describe the construction of the tree $\mathcal{G}_T=\{\mathcal{V}_T,\mathcal{E}_T,\texttt{Cost}\}$ that will be used for the synthesis of the prefix part [lines \ref{alg1:line2}-\ref{alg1:line6}, Alg. \ref{alg:plans}]. Since the prefix part connects an initial state $q_P^0=(q_\text{PTS}^0,q_B^0)\in\ccalQ_P^0$ to an \textit{accepting} state $q_P=(q_{\text{PTS}},~q_B)\in\ccalQ_P^F$, with $q_B\in\mathcal{Q}_B^{F}$, we can define the goal region for the tree $\ccalG_T$, as  [line \ref{alg1:line2}, Alg. \ref{alg:plans}]:
\begin{equation}\label{eq:goalPre}
\mathcal{X}_{\text{goal}}^{\text{pre}}=\{q_P=(q_{\text{PTS}},~q_B)\in\ccalQ_P~|~q_B\in\mathcal{Q}_B^{F}\}.
\end{equation}
The root $q_P^r$ of the tree is an initial state $q_P^0=(q_\text{PTS}^0,q_B^0)$ of the PBA and the following process is repeated for each initial state $q_B^0\in\ccalQ_B^0$, in parallel [line \ref{alg1:line2a}-\ref{alg1:line3}, Alg. \ref{alg:plans}]. The construction of the tree is described in Algorithm \ref{alg:tree} [line \ref{alg1:line4}, Alg. \ref{alg:plans}]. In line \ref{alg1:initNBA} of Algorithm \ref{alg:plans}, $\ccalQ_B^0(b_0)$ stands for the $b_0$-th initial state assuming an arbitrary enumeration of the elements of the set $\ccalQ_B^0$. The set $\mathcal{V}_T$ initially contains only the root $q_P^r$, i.e., an initial state of the PBA [line \ref{tree:line1} , Alg. \ref{alg:tree}] and, therefore, the set of edges is initialized as $\mathcal{E}_T=\emptyset$ [line \ref{tree:line2}, Alg. \ref{alg:tree}]. By convention, we assume that the cost of $q_P^r$ is zero [line \ref{tree:line3}, Alg. \ref{alg:tree}]. 
Given the root $q_P^r$ we select a \textit{feasible} final state $q_B^{F}\in\ccalQ_B^F$, such that (i) $d(q_B^0,q_B^{F})\neq \infty$ and (ii) $d(q_B^{F},q_B^{F})\neq\infty$. Among all final states that satisfy both (i) and (ii), we select one randomly denoted by $q_B^{F,\text{feas}}$ [line \ref{tree:feasF}, Alg. \ref{alg:tree}]. 
If there does not exist such a state $q_B^{F,\text{feas}}$, then this means that there is no prefix-suffix plan associated with the initial state $q_B^0$. In this case, the construction of the tree stops without having detected any final states around which a loop exists [lines \ref{tree:infeasF}-\ref{tree:exit}, Alg. \ref{alg:tree}]. The final state $q_B^{F,\text{feas}}$ will be used in the following subsection in order to bias the exploration of the PBA towards this state. We also define the set $\ccalD_{\text{min}}$ that collects the nodes $q_P=(q_{\text{PTS}},q_B)\in\ccalV_T$ that have the minimum distance $d(q_B,q_B^{F,\text{feas}})$ among all nodes in $\ccalV_T$, i.e., 
\begin{equation}
\ccalD_{\text{min}}=\{q_P=(q_{\text{PTS}},q_B)\in\ccalV_T~|~d(q_B,q_B^{F,\text{feas}})=d_{\text{min}} \},
\end{equation}
 where $d_{\text{min}}=\min \cup\{d(q_B,q_B^{F,\text{feas}})\}_{\forall q_B\in\Pi|_B\ccalV_T}$ and $\Pi|_B\ccalV_T\subseteq\ccalQ_B$ stands for the projection of all states $q_P\in\ccalV_T\subseteq\ccalQ_P$ onto $\ccalQ_B$. The set $\ccalD_{\text{min}}$ initially collects only the root [line \ref{tree:Dmin}, Alg. \ref{alg:tree}].

\begin{algorithm}[t]
\caption{\texttt{Function} $[\mathcal{G}_T,~\ccalZ]=\texttt{ConstructTree}(\ccalX_{\text{goal}},~\{\text{wTS}_i\}_{i=1}^N,~B,~q_P^r,~n_{\text{max}})$}
\LinesNumbered
\label{alg:tree}
$\mathcal{V}_T=\{q_P^r\}$\;\label{tree:line1} 
$\mathcal{E}_T=\emptyset$\; \label{tree:line2} 
$\texttt{Cost}(q_P^r)=0$\;\label{tree:line3}
\If{\texttt{prefix}}{
Select a feasible NBA final state $q_B^{F,\text{feas}}\in\ccalQ_B^F$.\;\label{tree:feasF}
\If{ $q_B^{F,\text{feas}}$ does not exist}{ \label{tree:infeasF}
Exit the function and set $\ccalZ=\emptyset$\;}}\label{tree:exit}
$\ccalD_{\text{min}}=\{q_P^r\}$\;\label{tree:Dmin} 
\For {$n=1:n_{\text{max}}$}{\label{tree:line4} 
\If{\texttt{prefix}}{
$q^{\text{new}}_{\text{PTS}}=\texttt{Sample}(\ccalV_T,\text{wTS}_1,\dots,\text{wTS}_N,q_B^{F,\text{feas}})$\;}\label{tree:samplePre} 
\If{\texttt{suffix}}{
$q^{\text{new}}_{\text{PTS}}=\texttt{Sample}(\ccalV_T,\text{wTS}_1,\dots,\text{wTS}_N,q_B^r)$\;}\label{tree:sampleSuf} 
	\For {$b=1:|\mathcal{Q}_B|$}{\label{tree:line6} 
			 $q_B^{\text{new}}=\mathcal{Q}_B(b)$\;\label{tree:line7} 
			 $q_{P}^{\text{new}}=(q^{\text{new}}_{\text{PTS}},q_B^{\text{new}})$\;\label{tree:line8} 
				\If {$q_{P}^{\text{new}}\notin\mathcal{V}_T$}{\label{tree:line9} 
						$[\mathcal{V}_T,~\mathcal{E}_T,\texttt{Cost}]=\texttt{Extend}(q_{P}^{\text{new}},\rightarrow_P)$\;\label{tree:line10}
						Update $\ccalD_{\text{min}}$\;}\label{tree:updDmin}
				\If{$q_{P}^{\text{new}}\in\mathcal{V}_T$}{\label{tree:line11} 				
						$[\mathcal{E}_T,\texttt{Cost}]=\texttt{Rewire}(q_P^{\text{new}},\mathcal{V}_T,\mathcal{E}_T,\texttt{Cost})$\;} }}\label{tree:line12} 
$\ccalZ=\ccalV_T\cap\ccalX_{\text{goal}}$\;\label{tree:line15} 
\end{algorithm}

\subsubsection{Sampling a state $q_P^{\text{new}}\in\mathcal{Q}_P$}\label{sec:sample}
The first step in the construction of the graph $\mathcal{G}_T$ is to sample a state $q_P^{\text{new}}$ from the state-space of the PBA. This is achieved by a sampling function $\texttt{Sample}$; see Algorithm \ref{alg:sample1}. Specifically, we first create a state $q_{\text{PTS}}^{\text{rand}}=\Pi|_{\text{PTS}}q_P^{\text{rand}}$, where $q_P^{\text{rand}}$ is sampled from a given discrete distribution $f_{\text{rand}}(q_P|\ccalV_T):\ccalV_T\rightarrow[0,1]$ and $\Pi|_{\text{PTS}}q_P^{\text{rand}}$ stands for the projection of  $q_P^{\text{rand}}$ onto the state-space of the PTS [line \ref{s:line2}, Alg. \ref{alg:sample1}]. The probability density function $f_{\text{rand}}(q_P|\ccalV_T)$ defines the probability of selecting the state $q_P\in\ccalV_T$ as the state $q_P^{\text{rand}}$ at iteration $n$ of Algorithm \ref{alg:tree} given the set $\ccalV_T$. The distribution $f_{\text{rand}}$ is defined as follows:

\begin{equation}\label{eq:frand}
f_{\text{rand}}(q_P|\ccalV_T,\ccalD_{\text{min}})=\left\{
                \begin{array}{ll}
                  p_{\text{rand}}\frac{1}{|\ccalD_{\text{min}}|}, ~~~~~~~~~~~~\mbox{if}~q_P\in\ccalD_{\text{min}}\\
                  (1-p_{\text{rand}})\frac{1}{|\ccalV_T\setminus\ccalD_{\text{min}}|},~\mbox{otherwise},
                \end{array}
              \right.
\end{equation}

\normalsize
\noindent where $p_{\text{rand}}\in(0.5,1)$ stands for the probability of selecting \textit{any} node $q_P\in\ccalD_{\text{min}}$ to be $q_P^{\text{rand}}$. Note that $p_{\text{rand}}$ can change with iterations $n$ but it should always satisfy $p_{\text{rand}}\in(0.5,1)$ so that states $q_P=(q_{\text{PTS}},q_B)\in\ccalD_\text{min}\subseteq\ccalV_T$ are selected more often to be $q_P^{\text{rand}}$ [line \ref{s:line2}, Alg. \ref{alg:sample1}]. 
\begin{rem}[Density function $f_{\text{rand}}$]\label{rem:frand}
Observe that the discrete density function $f_{\text{rand}}$ in \eqref{eq:frand} is defined in a uniform-like way, since all states in the set $\ccalD_{\text{min}}$ are selected with probability  $\frac{p_{\text{rand}}}{|\ccalD_{\text{min}}|}$ while the states in $\ccalV_T\setminus\ccalD_{\text{min}}$ are selected with probability $\frac{1-p_{\text{rand}}}{|\ccalV_T\setminus\ccalD_{\text{min}}|}$, where $p_{\text{rand}}\in(0.5,1)$. However, alternative definitions for $f_{\text{rand}}$ are possible as long as $f_{\text{rand}}$ (a) satisfies Assumptions 4.1(i)-(iii) made in \cite{kantaros2017Csampling}; and (b) is biased to generate states that belong to $\ccalD_{\text{min}}$ more often than states that belong to $\ccalV_T\setminus\ccalD_{\text{min}}$. Assumptions 4.1(i) and 4.1(iii) in \cite{kantaros2017Csampling} are required to guarantee that the proposed algorithm is probabilistically complete while Assumptions 4.1(i)-(iii) are required for the asymptotic optimality of the proposed method; see Section \ref{sec:corr}. The fact that the density function \eqref{eq:frand} satisfies Assumption 4.1 in \cite{kantaros2017Csampling} is shown in Section \ref{sec:corr}. Finally, bias in the sampling increases scalability of the proposed control synthesis method; see Section \ref{sec:sim}. Note that, as it will be discussed later, $f_{\text{rand}}$ can also change with iterations $n$ of Algorithm \ref{alg:tree}.
\end{rem}

Given  $q_{\text{PTS}}^{\text{rand}}$, in our previous work \cite{kantaros2017Csampling}, we sample a state $q_{\text{PTS}}^{\text{new}}$ from a discrete distribution $f_{\text{new}}(q_{\text{PTS}}|q_{\text{PTS}}^{\text{rand}})$ so that $q_{\text{PTS}}^{\text{new}}$ is reachable from $q_{\text{PTS}}^{\text{rand}}$. Here, we sample a state $q_{\text{PTS}}^{\text{new}}$ so that it is both reachable from $q_{\text{PTS}}^{\text{rand}}$ and also it can lead to a final state $q_P^F=(q_{\text{PTS}},q_B^{F,\text{feas}})\in\ccalQ_P^F$ by following the shortest path in $\ccalG_B$ that connects $q_B^0$ to $q_B^{F,\text{feas}}$. The proposed biased sampling process is illustrated in Figure \ref{fig:sampling}.

\begin{figure}[t]
  \centering
  \includegraphics[width=1\linewidth]{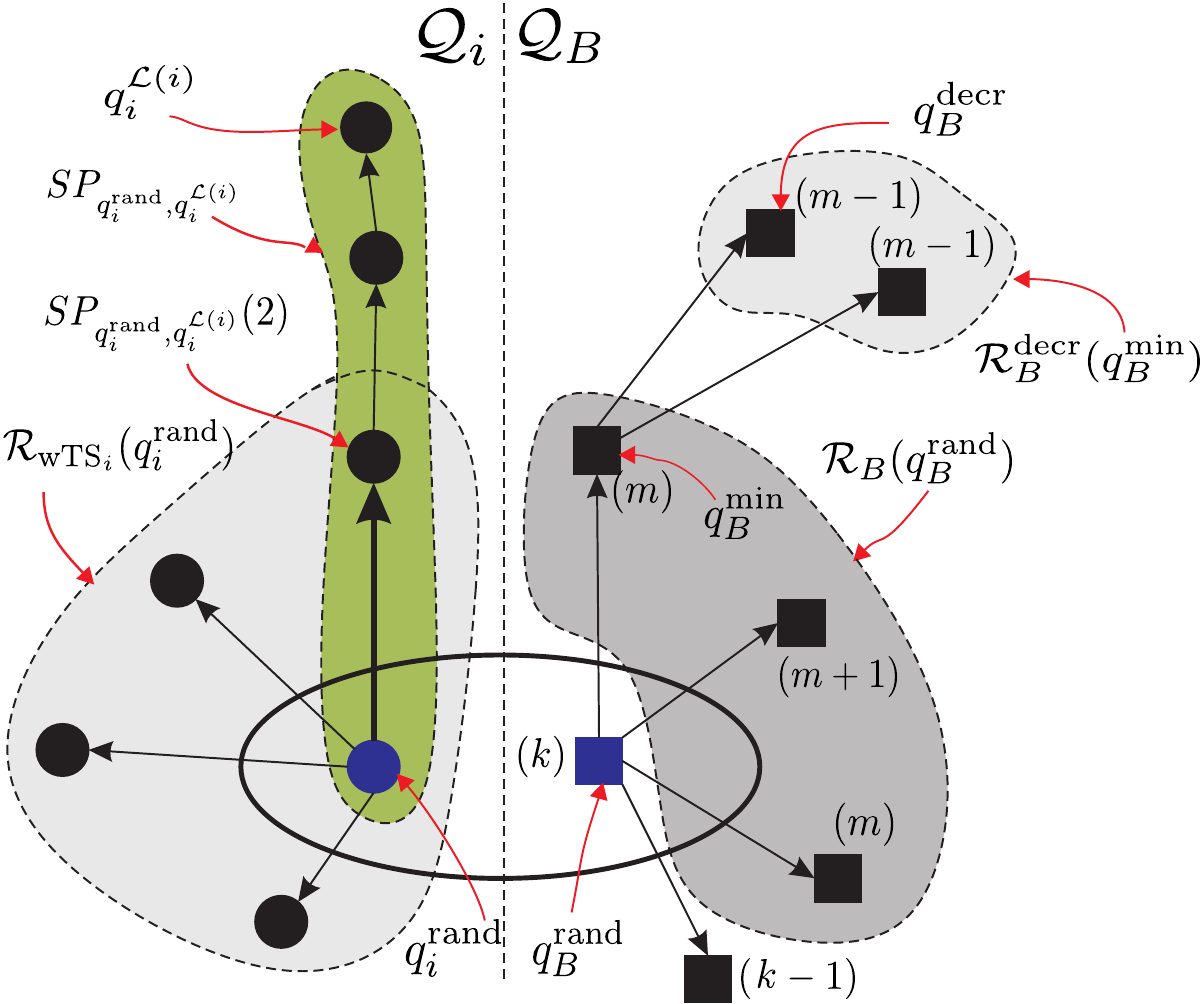}
\caption{{Graphical depiction of the proposed sampling function when $N=1$. The thick black arrow ends in the state that will be selected with probability $p_{\text{new}}$. Note that for $\bar{\sigma}=L_{i}(q_{i}^{\ccalL(i)})$, it holds that $(q_B^{\text{min}},\bar{\sigma}, q_B^{\text{decr}})\in \rightarrow_B$. Next to each state $q_B\in\ccalQ_B$, we also note inside parentheses the value of $d(q_B,q_B^{F,\text{feas}})$.}}
  \label{fig:sampling}
\end{figure}

\paragraph{Selection of $q_B^{\text{min}}\in\ccalQ_B$} First, given  $q_P^{\text{rand}}=(q_{\text{PTS}}^{\text{rand}},q_B^{\text{rand}})$, we first construct the reachable set $\ccalR_B(q_B^{\text{rand}})$ that collects all states $q_B\in\ccalQ_B$ that can be reached in one hop in $B$ from $q_B^{\text{rand}}$ given the observation $L_{\text{PTS}}(q_{\text{PTS}}^{\text{rand}})$ (see also Figure \ref{fig:sampling}) [line \ref{s:RBrand}, Alg. \ref{alg:sample1}], i.e., 
\begin{equation}\label{eq:RBrand}
\ccalR_B(q_B^{\text{rand}})=\{q_B\in\ccalQ_B~|~(q_B^{\text{rand}},L_{\text{PTS}}(q_{\text{PTS}}^{\text{rand}}), q_B)  \in\rightarrow_B\}.
\end{equation}
Given $\ccalR_B(q_B^{\text{rand}})$ we construct the set $\ccalR_B^{\text{min}}(q_B^{\text{rand}})$ that collects the states $q_B\in\ccalR_B(q_B^{\text{rand}})$ that have the minimum distance from $q_B^{F,\text{feas}}$ among all other nodes $q_B\in\ccalR_B(q_B^{\text{rand}})$, i.e., 
\begin{align}\label{eq:RBmin}
&\ccalR_B^{\text{min}}(q_B^{\text{rand}})=\{q_B\in\ccalR_B(q_B^{\text{rand}})~|\\&d(q_B,q_B^{F,\text{feas}})=\min_{q_B'\in\ccalR_B(q_B^{\text{rand}})} d(q_B',q_B^{F,\text{feas}}) \}\subseteq\ccalR_B(q_B^{\text{rand}}),\nonumber
\end{align}

In what follows, we denote by  $q_B^{\text{cand,min}}$ the states that belong to $\ccalR_B^{\text{min}}(q_B^{\text{rand}})$. For every state $q_B^{\text{cand,min}}\in\ccalR_B^{\text{min}}(q_B^{\text{rand}})$, we construct the set $\ccalR_B^{\text{decr}}(q_B^{\text{cand,min}})$ that collects all states $q_B\in\ccalQ_B$, for which (i) there exists a feasible symbol $\bar{\sigma}\in\Sigma_{q_B^{\text{cand,min}},q_B}^{\text{feas}}$ such that $q_B^{\text{cand,min}}\xrightarrow{\bar{\sigma}}q_B$, and (ii) $q_B$ is closer to $q_B^{F,\text{feas}}$ than $q_B^{\text{cand,min}}$ is, i.e., 
\begin{align}\label{eq:RBfeas}
&\ccalR_B^{\text{decr}}(q_B^{\text{cand,min}})=\{q_B\in\ccalQ_B~|~(\Sigma_{q_B^{\text{cand,min}},q_B}^{\text{feas}}\neq\emptyset)\\&\wedge
d(q_B,q_B^{F,\text{feas}})=d(q_B^{\text{cand,min}},q_B^{F,\text{feas}})-1 \}\subseteq\ccalQ_B.\nonumber
\end{align}
We collect all states $q_B^{\text{cand,min}}\in\ccalR_B^{\text{min}}(q_B^{\text{rand}})$ that satisfy $\ccalR_B^{\text{decr}}(q_B^{\text{cand,min}})\neq\emptyset$ in the set $\ccalM(q_B^{\text{rand}})$ [line \ref{s:M}, Alg.\ref{alg:sample1}] defined as
$$\ccalM(q_B^{\text{rand}})=\{q_B^{\text{cand,min}}~|~ \ccalR_B^{\text{decr}}(q_B^{\text{cand,min}})\neq\emptyset \}\subseteq \ccalR_B^{\text{min}}(q_B^{\text{rand}}).$$
Next, given the set $\ccalM(q_B^{\text{rand}})$, we sample a state $q_B^{\text{min}}\in\ccalM(q_B^{\text{rand}})$ from a discrete distribution $f_B^{\text{min}}(q_B|\ccalM(q_B^{\text{rand}})):\ccalM(q_B^{\text{rand}})\rightarrow[0,1]$ (see also Figure \ref{fig:sampling}) [line \ref{s:qBmin}, Alg. \ref{alg:sample1}] defined as $$f_B^{\text{min}}(q_B|\ccalM(q_B^{\text{rand}}))=\frac{1}{|\ccalM(q_B^{\text{rand}})|}.$$
%
Note that if $\ccalM(q_B^{\text{rand}})=\emptyset$, then no sample will be taken at this iteration [line \ref{s:empty2}, Alg. \ref{alg:sample1}].
%
%
\normalsize
\paragraph{Selection of $q_B^{\text{decr}}\in\ccalR_B^{\text{decr}}(q_B^{\text{min}})$} Given the state $q_B^{\text{min}}$ and its respective set $\ccalR_B^{\text{decr}}(q_B^{\text{min}})$, we sample a state $q_B^{\text{decr}}\in\ccalR_B^{\text{decr}}(q_B^{\text{min}})$ from a given discrete distribution $f_B^{\text{decr}}(q_B|\ccalR_B^{\text{decr}}(q_B^{\text{min}})):\ccalR_B^{\text{decr}}(q_B^{\text{min}})\rightarrow[0,1]$ [line \ref{s:qBfeas}, Alg. \ref{alg:sample1}]  defined as
$$f_B^{\text{decr}}(q_B|\ccalR_B^{\text{decr}}(q_B^{\text{min}}))=\frac{1}{|\ccalR_B^{\text{decr}}(q_B^{\text{min}})|},$$ 
for all $q_B\in \ccalR_B^{\text{decr}}(q_B^{\text{min}})$; see also Figure \ref{fig:sampling}.
%
Note that if $\ccalR_B^{\text{decr}}(q_B^{\text{min}})=\emptyset$, then no sample $q_{\text{PTS}}^{\text{new}}$ is generated [line \ref{s:empty1}, alg. \ref{alg:sample1}].

\normalsize{Given $q_B^{\text{min}}$ and $q_B^{\text{decr}}$, we select a symbol $\bar{\sigma}$ from $\Sigma_{q_B^{\text{min}},q_B^{\text{decr}}}\neq\emptyset$ [line \ref{s:sigmamin}, Alg. \ref{alg:sample1}].\footnote{To speed up the detection of final states, we always select the same symbol
$\bar{\sigma}\in\Sigma_{q_B^{\text{min}},q_B^{\text{decr}}}$ for a given pair of states $q_B^{\text{min}}$ and $q_B^{\text{decr}}$. Also, by construction of $q_B^{\text{decr}}$, it holds that $\Sigma_{q_B^{\text{min}},q_B^{\text{decr}}}\neq\emptyset$.} Given the symbol $\bar{\sigma}$, we construct the set $\ccalL$ so that the $i$-th element of $\ccalL$ captures the region where robot $i$ has to be so that the symbol $\bar{\sigma}$ can be generated. 
For instance, the set $\ccalL$ corresponding to the symbol $\bar{\sigma}=\pi_{i}^{r_j}\pi_z^{r_e}$ is constructed so that $\ccalL(i)=r_j$, $\ccalL(z)=r_e$ and $\ccalL(h)=\varnothing$, for all robots $h\neq i, z$. Then, we sample a state $q_{i}^{\text{new}}$, for all robots $i$, from a discrete distribution $f_{\text{new},i}^{\text{pre}}(q_{i}|q_{i}^{\text{rand}}):\ccalR_{\text{wTS}_i}(q_{i}^{\text{rand}})\rightarrow[0,1]$ [line \ref{s:fnewi}, Alg. \ref{alg:sample1}] defined as} 

\footnotesize{
\begin{equation}\label{eq:fnew}
f_{\text{new},i}^{\text{pre}}(q_{i}|q_{i}^{\text{rand}})=\left\{
                \begin{array}{ll}
                \frac{1}{|\ccalR_{\text{wTS}_i}(q_{i}^{\text{rand}})|},~\mbox{if}~\ccalL(i)=\varnothing,\\
                 p_{\text{new}},\mbox{if}~(L(i)\neq\varnothing)\wedge
                   (q_{i}=SP_{q_{i}^{\text{rand}},q_{i}^{\ccalL(i)}}(2))\\
                  (1-p_{\text{new}})\frac{1}{|\ccalR_{\text{wTS}_i}(q_{i}^{\text{rand}})|-1},\mbox{if}~(\ccalL(i)\neq\varnothing)\wedge\\ \indent(q_{i}\in \ccalR_{\text{wTS}_i}(q_{i}^{\text{rand}})\setminus SP_{q_{i}^{\text{rand}},q_{i}^{\ccalL(i)}}(2)),
                \end{array}
              \right.
\end{equation}}
\normalsize where $\ccalR_{\text{wTS}_i}(q_{i}^{\text{rand}})$ collects all states in $\ccalQ_i$ that can be reached in one hop from $q_{i}^{\text{rand}}=\Pi|_{\text{wTS}_i}q_{\text{PTS}}^{\text{rand}}$, i.e., 
\begin{equation}
\ccalR_{\text{wTS}_i}(q_{i}^{\text{rand}})=\{q_i\in\ccalQ_i~|~q_{i}^{\text{rand}}\rightarrow_i q_i\}, \nonumber
\end{equation}
where $\Pi|_{\text{wTS}_i}q_{\text{PTS}}^{\text{rand}}$ stands for the projection of $q_{\text{PTS}}^{\text{rand}}$ onto the state-space of $\text{wTS}_i$. Also, in \eqref{eq:fnew}, viewing $\text{wTS}_i$ as a graph, $SP_{q_{i}^{\text{rand}},q_{i}^{\ccalL(i)}}$ stands for the shortest path from the node/state $q_{i}^{\text{rand}}$ to the state $q_{i}^{\ccalL(i)}$, i.e., $SP_{q_{i}^{\text{rand}},q_{i}^{\ccalL(i)}}$ is a finite sequence of states in $\text{wTS}_i$ that start from  $q_{i}^{\text{rand}}$ and end at the state $q_{i}^{\ccalL(i)}$; see also Figure \ref{fig:sampling}. Also, $SP_{q_{i}^{\text{rand}},q_{i}^{\ccalL(i)}}(2)$ stands for the second state in this sequence. Moreover, in \eqref{eq:fnew}, $p_{\text{new}}$ stands for the probability of selecting the state $SP_{q_{i}^{\text{rand}},q_{i}^{\ccalL(i)}}(2)$ to be $q_{i}^{\text{new}}$ if $\ccalL(i)\neq\varnothing$. Note that $p_{\text{new}}$ can change with iterations $n$ but it should always satisfy $p_{\text{new}}\in(0.5,1)$ so that the state $SP_{q_{i}^{\text{rand}},q_{i}^{\ccalL(i)}}(2)$ is selected more often to be $q_{i}^{\text{new}}$, as it is closer to the state $q_i^{\ccalL(i)}$. Finally, given the states $q_i^{\text{new}}$, we construct $q_{\text{PTS}}^{\text{new}}=(q_{1}^{\text{new}},\dots,q_{N}^{\text{new}})\in\ccalQ_{\text{PTS}}$. Observe that by construction of $f_{\text{new},i}^{\text{pre}}$ the state $q_i^{\text{new}}$ always lies in $\ccalR_{\text{wTS}_i}(q_{i}^{\text{rand}})$. As a result, $q_{\text{PTS}}^{\text{new}}$ is reachable from $q_{\text{PTS}}^{\text{rand}}$.



\begin{algorithm}[t]
\caption{Function $q_{\text{PTS}}^{\text{new}}=\texttt{Sample}(\ccalV_T, \ccalD_{\text{min}},\{\text{wTS}_i\}_{i=1}^N,q_B^{\text{goal}})$}
\label{alg:sample1}
Pick a state $q_{P}^{\text{rand}}=(q_{\text{PTS}}^{\text{rand}},q_B^{\text{rand}})\in\ccalV_T$ from  $f_{\text{rand}}$\;\label{s:line2}
Compute $\ccalR_B(q_B^{\text{rand}})$\;\label{s:RBrand}
\If{$\ccalR_B(q_B^{\text{rand}})\neq\emptyset$}{
Compute $\ccalM(q_B^{\text{rand}})\subseteq \ccalR_B(q_B^{\text{rand}})$\;\label{s:M}
\If{$\ccalM(q_B^{\text{rand}})\neq\emptyset$}{
Sample $q_B^{\text{min}}\in\ccalM(q_B^{\text{rand}})$ from $f_B^{\text{min}}$\;\label{s:qBmin}
Select $q_B^{\text{decr}}\in\ccalR_B^{\text{decr}}(q_B^{\text{min}})$\;\label{s:qBfeas}
Pick $\bar{\sigma}\in\Sigma_{q_B^{\text{min}},q_B^{\text{decr}}}$\;\label{s:sigmamin}
}
\Else{$q_{\text{PTS}}^{\text{new}}=\varnothing$\;}}\label{s:empty1}
\Else{$q_{\text{PTS}}^{\text{new}}=\varnothing$\;}\label{s:empty2}
\If{$(\ccalR_B(q_B^{\text{rand}})\neq\emptyset)\wedge(\ccalM(q_B^{\text{rand}})\neq\emptyset)$}{
Pick a state $q_{i}^{\text{new}}$ from a given probability distribution $f_{\text{new},i}$, for all robots $i$\;\label{s:fnewi}
Construct $q_{\text{PTS}}^{\text{new}}=(q_1^{\text{new}},\dots,q_N^{\text{new}})$\;}\label{s:line5}
\end{algorithm}


In order to build incrementally a graph whose set of nodes approximates the state-space $\mathcal{Q}_P$ we need to append to $q_{\text{PTS}}^{\text{new}}$ a state from the state-space $\mathcal{Q}_B$ of the NBA $B$. Let $q_B^{\text{new}}=\mathcal{Q}_B(b)$ [line \ref{tree:line7}, Alg. \ref{alg:tree}] be the candidate B$\ddot{\text{u}}$chi state that will be attached to $q_{\text{PTS}}^{\text{new}}$, where $\mathcal{Q}_B(b)$ stands for the $b$-th state in the set $\ccalQ_B$ assuming an arbitrary enumeration of the elements of the set $\ccalQ_B$. The following procedure is repeated for all $q_B^{\text{new}}=\ccalQ_B(b)$ with $b\in\{1,\dots,|\mathcal{Q}_B|\}$. First, we construct the state $q_{P}^{\text{new}}=(q^{\text{new}}_{\text{PTS}},q_B^{\text{new}})\in\mathcal{Q}_P$ [line \ref{tree:line8}, Alg. \ref{alg:tree}] and then we check if this state can be added to the tree $\mathcal{G}_T$ [lines \ref{tree:line9}-\ref{tree:line10}, Alg. \ref{alg:tree}]. If the state $q_{P}^{\text{new}}$ does not already belong to the tree from a previous iteration of Algorithm \ref{alg:tree}, i.e, if $q_{P}^{\text{new}}\notin\mathcal{V}_T$ [line \ref{alg1:line9}, Alg. \ref{alg:tree}], we check which node in $\mathcal{V}_T$ (if there is any) can be the parent of $q_{P}^{\text{new}}$ in the tree $\mathcal{G}_T$. If there exist candidate parents for  $q_{P}^{\text{new}}$ then the tree is \textit{extended} towards  $q_{P}^{\text{new}}$ and the set $\ccalD_{\text{min}}$ is updated [line \ref{tree:updDmin}, Alg. \ref{alg:tree}]. If $q_P^{\text{new}}\in\ccalV_T$, then the \textit{rewiring} step follows [lines \ref{tree:line11}-\ref{tree:line12}, Alg. \ref{alg:tree}] that aims to reduce the cost of nodes $q_P\in\ccalV_T$. A detailed description of the `extend' and `rewire' steps can be found in \cite{kantaros2017Csampling}.

\begin{rem}[Density function $f_{\text{new},i}^{\text{pre}}$]\label{rem:fnew}
Observe that the discrete density function $f_{\text{new},i}^{\text{pre}}$ in \eqref{eq:fnew} is defined in a uniform-like way, similar to $f_{\text{rand}}$ in \eqref{eq:frand}. However, alternative definitions for $f_{\text{new},i}^{\text{pre}}$ are possible as long as $f_{\text{new},i}^{\text{pre}}(q_i|q_i^{\text{rand}})$ (a) satisfies Assumptions 4.2(i)-(iii) in \cite{kantaros2017Csampling}; and (b) is biased to generate the state $SP_{q_{i}^{\text{rand}},q_{i}^{\ccalL(i)}}(2)$ (if defined) more often than the other states that belong to $\ccalR_{\text{wTS}_i}(q_i^{\text{rand}})$. Assumptions 4.2(i) and 4.2(iii) in \cite{kantaros2017Csampling} are required to guarantee that the proposed algorithm is probabilistically complete while Assumptions 4.2(i)-(iii) are required to prove the asymptotic optimality of the proposed method; see Section \ref{sec:corr}. Finally, bias in the sampling increases scalability of the proposed control synthesis method; see Section \ref{sec:sim}. The fact that the density function \eqref{eq:fnew} satisfies Assumption 4.2 of \cite{kantaros2017Csampling} is shown in Section \ref{sec:corr}. Note that, as it will be discussed later, $f_{\text{new}}^{\text{pre}}$ can also change with iterations $n$ of Algorithm \ref{alg:tree}. The same remark also holds for the density functions $f_{\text{new},i}^{\text{suf}}$ that will be introduced in Section \ref{sec:suffix} for the construction of the suffix parts.
\end{rem}

\subsubsection{Construction of Paths}
The construction of the tree $\ccalG_T$ ends after $n_{\text{max}}^{\text{pre}}$ iterations, where $n_{\text{max}}^{\text{pre}}$ is user specified [line \ref{tree:line4}, Alg. \ref{alg:tree}]. Then, we construct the set $\ccalP=\ccalV_T\cap\ccalX_{\text{goal}}^{\text{pre}}$ [line \ref{tree:line15}, Alg. \ref{alg:tree}] that collects all the states $q_P\in\ccalV_T$ that belong to the goal region $\ccalX_{\text{goal}}^{\text{pre}}$. Given the tree $\ccalG_T$ and the set $\ccalP$ [line \ref{alg1:line4}, Alg. \ref{alg:plans}] that collects all states $q_P\in\ccalX_{\text{goal}}^{\text{pre}}\cap\ccalV_T$, we can compute the prefix plans [lines \ref{alg1:line5}-\ref{alg1:line6}, Alg. \ref{alg:plans}]. In particular, the path that connects the $a$-th state in the set $\mathcal{P}$, denoted by $\ccalP(a)$, to the root $q_P^r$ constitutes the $\alpha$-th prefix plan and is denoted by $\tau^{\text{pre},a}$ [line \ref{alg1:line6}, Algorithm \ref{alg:plans}]. Specifically, the prefix part $\tau^{\text{pre},a}$ is constructed by tracing the sequence of parent nodes starting from the node that represents the accepting state $\mathcal{P}(a)$ and ending at the root of the tree.

\begin{rem}[Bias in Prefix Parts]\label{rem:biasPre}
During the construction of the prefix parts, the sampling density functions $f_{\text{rand}}$ and $f_{\text{new},i}^{\text{pre}}$ are biased towards detecting states $q_P=(q_{\text{PTS}},q_B^{F,\text{feas}})\in\ccalQ_P^F$, where $q_{\text{PTS}}$ can be any state in $\ccalQ_{\text{PTS}}$ and $q_B^{F,\text{feas}}$ is given feasible final state of the NBA. Once such a state $q_P=(q_{\text{PTS}},q_B^{F,\text{feas}})$ is detected, we can switch the bias towards a different feasible final state. Also, once all such feasible final states $q_B^{F,\text{feas}}$ are detected or after a pre-determined number of iterations $n$, we can switch from biased density functions to unbiased (uniform) density functions $f_{\text{rand}}$ and $f_{\text{new},i}^{\text{pre}}$, that favor exploration of $\ccalQ_P$ towards all directions, by selecting $p_{\text{rand}}=|\ccalD_{\text{min}}|/|\ccalV_T|$ and $p_{\text{new}}=1/|\ccalR_{\text{wTS}_i}(q_i^{\text{rand}})|$, where recall that $q_i^{\text{rand}}=\Pi|_{\text{wTS}_i}(q_P^{\text{rand}})$.
\end{rem} 

\subsection{Construction of Optimal Suffix Parts}\label{sec:suffix}
Once the prefix plans $\tau^{\text{pre},a}$ for all $a\in\{1,\dots,|\ccalP|\}$ are constructed, the corresponding suffix plans $\tau^{\text{suf},a}$ are constructed [lines \ref{alg1:line7}-\ref{alg1:suf}, Alg. \ref{alg:plans}]. Specifically, every suffix part $\tau^{\text{suf},a}$ is a sequence of states in $\ccalQ_P$ that starts from the state $\mathcal{P}(a)$ and ends at the same state $\mathcal{P}(a)$.
To construct the suffix part $\tau_i^{\text{suf},a}$ we build a tree $\mathcal{G}_T=\{\mathcal{V}_T, \mathcal{E}_T, \texttt{Cost}\}$ that approximates the PBA $P$, in a similar way as in Section \ref{sec:prefix}, and implement a cycle-detection mechanism to identify cycles around the state $P(a)$. The only differences are that: (i) the root of the tree is now $q_P^{r}=\ccalP(a)$ [line \ref{alg1:line8}, Alg. \ref{alg:plans}] detected during the construction of the prefix plans, (ii) the goal region corresponding to the root $q_P^{r}=\ccalP(a)$, is defined as 
\begin{align}\label{eq:goalSuf}
\mathcal{X}_{\text{goal}}^{\text{suf}}(q_P^r)=&\{q_P=(q_{\text{PTS}},~q_B)\in\ccalQ_P~|\nonumber\\&(q_P,L(q_{\text{PTS}}),q_P^r)\in\rightarrow_P\},
\end{align}
(iii) we first check if $q_P^r\in\ccalX_{\text{goal}}^{\text{suf}}$ [line \ref{alg1:line9a}, Alg. \ref{alg:plans}], and (iv) a probability density function $f_{\text{new},i}^{\text{suf}}$ that is different from \eqref{eq:fnew} is employed. As for (iii), if $q_P^r\in\ccalX_{\text{goal}}^{\text{suf}}$, the construction of the tree is trivial, as it consists of only the root, and a loop around it with zero cost [line \ref{alg1:line9b}, Alg. \ref{alg:plans}]. If $q_P^r\notin\ccalX_{\text{goal}}^{\text{suf}}$, then the tree $\ccalG_T$ is constructed by Algorithm \ref{alg:tree} [line \ref{alg1:line10}, Alg. \ref{alg:plans}]. As for (iv), $f_{\text{new},i}^{\text{suf}}(q_{i}|q_{i}^{\text{rand}}):\ccalR_{\text{wTS}_i}(q_{i}^{\text{rand}})\rightarrow[0,1]$ [line \ref{s:fnewi}, Alg. \ref{alg:sample1}] that generates $q_{i}^{\text{new}}$ is defined as
\begin{align}\label{eq:fnewsuf1}
f_{\text{new},i}^{\text{suf}}(q_{i}|q_{i}^{\text{rand}})=f_{\text{new},i}^{\text{pre}}(q_{i}|q_{i}^{\text{rand}}),
\end{align}
if $q_B^r\notin\ccalR_B^{\text{decr}}$, where $q_B^r=\Pi|_{B}q_P^r\in\ccalQ_B^F$. If $q_B^r\in\ccalR_B^{\text{decr}}$, then this means that the NBA part of the root (final state), i.e., $q_B^r\in\ccalQ_B^F$, can be reached in one hop from $q_{B}^{\text{min}}$. Then in this case, to steer the tree towards the root $q_P^r=(q_{\text{PTS}}^r,q_B^r)$, we select the following probability density function $f_{\text{new},i}^{\text{suf}}(q_{i}|q_{i}^{\text{rand}})$ to generate the samples $q_{\text{PTS}}^{\text{new}}$.

\footnotesize{
\begin{align}\label{eq:fnewsuf2}
f_{\text{new},i}^{\text{suf}}(q_{i}|q_{i}^{\text{rand}})=\left\{
                \begin{array}{ll}
                 \frac{1-p_{\text{new}}}{|\ccalR_{\text{wTS}_i}(q_{i}^{\text{rand}})|},\mbox{if}~(\ccalL(i)=\varnothing)\wedge
                   (q_{i}\neq SP_{q_{i}^{\text{rand}},q_{i}^{r}}(2)),\\
                p_{\text{new}}, \mbox{if}~(\ccalL(i)=\varnothing) \wedge
                   (q_{i}= SP_{q_{i}^{\text{rand}},q_{i}^{r}}(2))\\
                 p_{\text{new}},
                  \mbox{if}~(L(i)\neq\varnothing)\wedge
                   (q_{i}=SP_{q_{i}^{\text{rand}},q_{i}^{\ccalL(i)}}(2))\\
                 \frac{1-p_{\text{new}}}{|\ccalR_{\text{wTS}_i}(q_{i}^{\text{rand}})|-1},
                    \mbox{if}~(\ccalL(i)\neq\varnothing)\wedge\\ \indent
                   (q_{i}\in \ccalR_{\text{wTS}_i}(q_{i}^{\text{rand}})\setminus SP_{q_{i}^{\text{rand}},q_{i}^{\ccalL(i)}}(2))
                \end{array}
              \right.
\end{align}}
\normalsize
where $q_i^r=\Pi|_{\text{wTS}_i}q_{\text{PTS}}^r$. Note that when \eqref{eq:fnewsuf1} is employed, the sampling process is biased towards any state $q_P=(q_{\text{PTS}},q_B^r)$ (and not $q_P=(q_{\text{PTS}},q_B^{F, \text{feas}})$ as in the prefix part); see also [line \ref{tree:sampleSuf}, Alg. \ref{alg:tree}]. On the other hand, when \eqref{eq:fnewsuf2} is employed the sampling process is biased towards the root $q_P^r=(q_{\text{PTS}}^r,q_B^r)$.
Once a tree rooted at $q_P^r=\ccalP(a)$ is constructed, a set $\ccalS_a\subseteq\ccalV_T$ is formed that collects all states $q_P\in\ccalV_T\cap\ccalX_{\text{goal}}^{\text{suf}}(q_P^r)$ [lines \ref{alg1:line9c}, \ref{alg1:line10}, Alg. \ref{alg:plans}]. 
Given the set $\ccalS_a$, we compute the best suffix plan $\tau^{\text{suf},a}$ associated with $q_P^r=\ccalP(a)\in\ccalQ_P^F$, as in \cite{kantaros2017Csampling}, [line \ref{alg1:suf}, Alg. \ref{alg:plans}]. This process is repeated for all $a\in\{1,\dots,|\ccalP|\}$, in parallel [line \ref{alg1:line7}, Alg. \ref{alg:plans}]. In this way, for each prefix plan $\tau^{\text{pre},a}$ we construct its corresponding suffix plan $\tau^{\text{suf},a}$, if it~exists.

\begin{rem}[Bias in Suffix Parts]
During the construction of the suffix parts, the sampling density functions $f_{\text{rand}}$ and $f_{\text{new},i}^{\text{suf}}$ are biased towards detecting the root $q_P^r$, so that a loop around $q_P^r$, i.e., a suffix part is detected. Once this happens, we can switch from biased density functions to unbiased (uniform) density functions $f_{\text{rand}}$ and $f_{\text{new},i}^{\text{suf}}$, that favor exploration of $\ccalQ_P$ towards all directions, by selecting $p_{\text{rand}}=|\ccalD_{\text{min}}|/|\ccalV_T|$ and $p_{\text{new}}=1/|\ccalR_{\text{wTS}_i}(q_i^{\text{rand}})|$, where recall that $q_i^{\text{rand}}=\Pi|_{\text{wTS}_i}(q_P^{\text{rand}})$.
\end{rem}

\subsection{Construction of Optimal Discrete Plans}\label{sec:plan}
By construction, any motion plan $\tau^a=\tau^{\text{pre},a}[\tau^{\text{suf},a}]^{\omega}$, with $\mathcal{S}_a\neq\emptyset$, and $a\in\{1,\dots,|\mathcal{P}|\}$ satisfies the LTL task $\phi$. The cost $J(\tau^a)$ of each plan $\tau^a$  is defined in \eqref{eq:cost2}. Given an initial state $q_B^0\in\ccalQ_B^0$, among all the motion plans $\tau^a\models\phi$, we select the one with the smallest cost $J(\tau^a)$ [line \ref{alg1:line15}, Alg. \ref{alg:plans}]. The plan with the smallest cost given an initial state $q_B^0$ is denoted by $\tau_{q_B^0}$. Then, among all plans $\tau_{q_B^0}$, we select again the one with smallest cost $J(\tau_{q_B^0})$, i.e., $\tau=\tau^{a_*}$, where $a_*=\text{argmin}_{a_{q_B^0}}{J(\tau_{q_B^0})}$ [lines \ref{alg1:line15a}-\ref{alg1:line16}, Alg. \ref{alg:plans}].

\subsection{Complexity Analysis}\label{sec:complexity}
The memory resources needed to store the PBA as a graph structure $\ccalG_P=\{\ccalV_P, \ccalE_P, w_P\}$, defined in Section \ref{sec:problem}, using its adjacency list is $O(|\ccalV_P|+|\ccalE_P|)$ \cite{sedgewick2011algorithms}. On the other hand, the memory needed to store a tree, constructed by Algorithm \ref{alg:tree}, that approximates the PBA is $O(|\ccalV_T|)$, since $|\ccalE_T|=|\ccalV_T|-1$. Due to the incremental construction of the tree we get that $|\ccalV_T|\leq|\ccalV_P|<|\ccalV_P|+|\ccalE_P|$ which shows that our proposed algorithm requires fewer memory resources compared to existing optimal control synthesis algorithms that rely on the construction of the PBA \cite{ulusoy2013optimality,ulusoy2014optimal}.

Moreover, the time complexity of sampling the state $q_\text{PTS}^{\text{new}}$ in Algorithm \ref{alg:sample1} is $O(|\ccalQ_B|+\max_i\{|\ccalQ_i|\} + \max_i\{(|\rightarrow_i|+|\ccalQ_i|\log(|\ccalQ_i|)\})$, where $|\rightarrow_i|$ denotes the total number of edges in the $\text{wTS}_i$ viewing it as a graph. The terms $|\ccalQ_B|$ and $\max_i\{|\ccalQ_i|\}$ capture the computational time to construct the reachable sets $\ccalR_B(q_B^{\text{rand}})$ and $\ccalR_{\text{wTS}_i}(q_i^{\text{rand}})$ for the largest state-space $\ccalQ_i$ among all robots, respectively; note that the sets $\ccalR_{\text{wTS}_i}(q_i^{\text{rand}})$ can be constructed simultaneously, i.e., in parallel, across the robots, which explains the $\max$ operator. Also, these reachable sets do not need to be computed on-the-fly; instead they can be pre-computed making the sampling process more computationally efficient.
Similarly, the term $ \max_i\{(|\rightarrow_i|+|\ccalQ_i|\log(|\ccalQ_i|)\})$ is due to the computation of the  shortest paths $SP_{q_i^{\text{rand}},q_i^{\ccalL(i)}}$ using the Djikstra algorithm. 
Moreover, the time complexity of `extend' and the `rewire' step is $O(|\ccalV_T|(N+1))$, as shown in Section IV-D in \cite{kantaros2017sampling}.

\begin{rem}[Implementation]
Note that in practice the prefix and suffix parts can be constructed in parallel. Specifically, as soon as a new final state  is detected during the construction of the prefix parts, the construction of a new tree rooted at this final state can be triggered immediately so that the respective suffix part is detected. Moreover, observe that the for-loop over all the initial states of the PBA in line \ref{alg1:line2a} of Algorithm \ref{alg:plans} can also be executed in parallel. A distributed implementation of the `sample', `extend', and `rewire' step is also presented in \cite{kantaros2017Dsampling}.
\end{rem}

\section{Convergence Analysis} \label{sec:corr}
In this section, we examine, the correctness, optimality, and convergence rate of $\text{STyLuS}^{*}$ described in Algorithm \ref{alg:plans}. Specifically, in Section \ref{sec:complOpt}, we show that $\text{STyLuS}^{*}$ is probabilistically complete and asymptotically optimal. Then, in Section \ref{sec:rate}, we show that Algorithm \ref{alg:tree} converges exponentially fast to the optimal solution of Problem \ref{pr:problem}. In what follows, we denote by $\mathcal{G}_T^n=\{\mathcal{V}_T^{n}, \mathcal{E}_T^{n}, \texttt{Cost}\}$ the tree that has been built by Algorithm \ref{alg:tree} at the $n$-th iteration for the construction of either a prefix or suffix part. We also denote the nodes $q_P^{\text{rand}}$ and $q_P^{\text{new}}$ at iteration $n$ by $q_P^{\text{rand},n}$ and $q_P^{\text{new},n}$, respectively. The same notation also extends to $f_{\text{new},i}$, $f_{\text{rand}}$, $p_{\text{rand}}$, and $p_{\text{new}}$. Also, in what follows we define the reachable set of state $q_P\in\ccalQ_P$ as follows
\begin{equation}\label{eq:reach}
\ccalR_P^{\rightarrow}(q_P)=\{q_P'\in\ccalQ_P~|~q_P'\rightarrow_P q_P\}\subseteq\ccalQ_P,
\end{equation}
i.e., $\ccalR_P^{\rightarrow}(q_P)$ collects all states $q_P'\in\ccalQ_P$ that can be reached in one hop from $q_P\in\ccalQ_P$

\subsection{Completeness and Optimality}\label{sec:complOpt}

In this section, we show that $\text{STyLuS}^{*}$ is probabilistically complete and asymptotically optimal. To show this, we first show the following results. The proofs of the following propositions can be found in Appendix \ref{sec:prop}.
\begin{prop}[Biased Probability Density $f_{\text{rand}}^n$]\label{prop:frand}
Assume that $p_{\text{rand}}^n>\epsilon$, for all $n\geq 1$ and for any $\epsilon>0$. Then, the probability density function $f_{\text{rand}}^n(q_P|\ccalV_T^n):\ccalV_T^n\rightarrow[0,1]$ defined in \eqref{eq:frand} is (i) bounded away from zero on $\ccalV_T^n$ (ii) bounded below by a sequence $g^{n}(q_P|\ccalV_T^n)$, such that $\sum_{n=1}^{\infty}g^n(q_P|\ccalV_T^n)=\infty$, for any given node $q_P\in\ccalV_T^n$, and 
(iii) can generate independent samples $q_P^{\text{rand},n}$.
\end{prop}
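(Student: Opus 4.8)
The plan is to verify the three claimed properties of $f_{\text{rand}}^n$ directly from its explicit definition in \eqref{eq:frand}, treating the cases $q_P \in \ccalD_{\text{min}}$ and $q_P \in \ccalV_T^n \setminus \ccalD_{\text{min}}$ separately and in each case producing an explicit lower bound. For part (i), observe that the two possible values of $f_{\text{rand}}^n(q_P|\ccalV_T^n)$ are $p_{\text{rand}}^n / |\ccalD_{\text{min}}|$ and $(1-p_{\text{rand}}^n)/|\ccalV_T^n \setminus \ccalD_{\text{min}}|$. Since $p_{\text{rand}}^n \in (0.5,1)$ by construction, we have $p_{\text{rand}}^n > 1/2$ and $1 - p_{\text{rand}}^n > 0$; more precisely, using the hypothesis $p_{\text{rand}}^n > \epsilon$ (and noting that $p_{\text{rand}}^n \in (0.5,1)$ also gives a uniform gap below $1$, say $1 - p_{\text{rand}}^n$ is bounded below once we also invoke that $p_{\text{rand}}^n$ is bounded away from $1$, which follows from the same interval membership after fixing any such bound), both expressions are strictly positive. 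Since $|\ccalD_{\text{min}}| \le |\ccalV_T^n|$ and $|\ccalV_T^n \setminus \ccalD_{\text{min}}| \le |\ccalV_T^n| \le |\ccalQ_P|$, which is finite, each value is bounded below by a strictly positive constant depending only on $\epsilon$ and $|\ccalQ_P|$; hence $f_{\text{rand}}^n$ is bounded away from zero on $\ccalV_T^n$.

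For part (ii), fix a node $q_P$. From part (i) we have a bound of the form $f_{\text{rand}}^n(q_P|\ccalV_T^n) \ge g^n(q_P|\ccalV_T^n)$ where we may take $g^n(q_P|\ccalV_T^n) = \min\{\epsilon, \, 1 - \bar{p}\}/|\ccalQ_P|$ with $\bar p$ an upper bound on $p_{\text{rand}}^n$ strictly less than $1$ (guaranteed by $p_{\text{rand}}^n \in (0.5,1)$ together with the standing convention; if one wants to be fully careful, one instead notes that even if $p_{\text{rand}}^n \to 1$, the bound $p_{\text{rand}}^n/|\ccalD_{\text{min}}| \ge \epsilon/|\ccalQ_P|$ still handles the $\ccalD_{\text{min}}$ case, so for the divergent-series property it suffices to take $g^n(q_P|\ccalV_T^n)$ to be the smaller of the two case-bounds, both of which are $\ge c/|\ccalQ_P|$ for a fixed constant $c = c(\epsilon) > 0$). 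Since $g^n$ is thus bounded below by a fixed positive constant $c/|\ccalQ_P|$ independent of $n$, the series $\sum_{n=1}^\infty g^n(q_P|\ccalV_T^n)$ dominates $\sum_{n=1}^\infty c/|\ccalQ_P| = \infty$, as required.

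For part (iii), the claim is that at each iteration $n$ the draw of $q_P^{\text{rand},n}$ from the distribution $f_{\text{rand}}^n$ can be realized as an independent sample. This is immediate from the algorithmic description: the sampling in line \ref{s:line2} of Algorithm \ref{alg:sample1} is performed by an independent call to a random number generator at each iteration, and although the \emph{distribution} $f_{\text{rand}}^n$ depends on the current tree $\ccalV_T^n$ and $\ccalD_{\text{min}}$, the randomness used to generate the sample is drawn afresh and independently of the randomness used at previous iterations; conditioned on $\ccalV_T^n$ and $\ccalD_{\text{min}}$, $q_P^{\text{rand},n}$ is independent of $\{q_P^{\text{rand},k}\}_{k<n}$. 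One states this as: the sampler produces a sequence of conditionally independent samples, which is exactly the independence notion used in the sampling assumptions of \cite{kantaros2017Csampling}.

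The main obstacle is purely a matter of bookkeeping the uniform positivity constant: one must be careful that the lower bound in (i)--(ii) is uniform in $n$, which requires not only $p_{\text{rand}}^n > \epsilon$ but also control of $|\ccalD_{\text{min}}|$ and $|\ccalV_T^n \setminus \ccalD_{\text{min}}|$ from above by the fixed finite quantity $|\ccalQ_P|$; both follow from $\ccalV_T^n \subseteq \ccalQ_P$ and finiteness of the PBA, so no real difficulty arises, but it is the only place where the argument is not a one-line observation.
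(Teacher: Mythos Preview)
Your proposal is correct and follows essentially the same route as the paper: bound the denominators $|\ccalD_{\text{min}}|$ and $|\ccalV_T^n\setminus\ccalD_{\text{min}}|$ above by the fixed finite quantity $|\ccalQ_P|$, combine with the positivity of $p_{\text{rand}}^n$ and $1-p_{\text{rand}}^n$ to obtain a constant lower bound $g^n=\min(\epsilon,1-\epsilon)/|\ccalQ_P|$, and conclude that the series diverges; part (iii) is dismissed as immediate from the sampling mechanism. You are in fact slightly more careful than the paper in flagging that an upper bound on $p_{\text{rand}}^n$ away from $1$ is needed to control the $\ccalV_T^n\setminus\ccalD_{\text{min}}$ case uniformly in $n$, which the paper's proof glosses over by writing $\min(p_{\text{rand}}^n,1-p_{\text{rand}}^n)\geq\min(\epsilon,1-\epsilon)$ without further comment.
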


Proposition \ref{prop:frand} implies that $f_{\text{rand}}^n$ defined in \eqref{eq:frand} satisfies Assumption 4.1 (i) and (iii) in \cite{kantaros2017Csampling} and the relaxed version of Assumption 4.1 (ii) discussed in Remark A.1 in  \cite{kantaros2017Csampling}. Note that to ensure that the sampling process is biased towards a final state, $\epsilon$ should satisfy $\epsilon\geq0.5$. Note that, since $p_{\text{rand}}^n$ can change with $n$, as long as $p_{\text{rand}}^n>\epsilon$ for all $n\geq 1$, we can switch from biased to unbiased (uniform) sampling process by selecting $p_{\text{rand}}^n=|\ccalD_{\text{min}}^n|/|\ccalV_T^n|$, where in this case $\epsilon=1/|\ccalQ_P|$. Finally, due to Proposition \ref{prop:frand}, we can show that there exists an infinite sequence $\ccalK$, so that for all $k\in\ccalK$ it holds $q_{\text{PTS}}^{\text{rand},n+k}=q_\text{PTS}^{\text{rand},n}$; see Lemma 5.4 in \cite{kantaros2017Csampling}. 

\begin{prop}[Biased Probability Density $f_{\text{new}}^n$]\label{prop:fnew}
Assume that  $p_{\text{new}}^n>\epsilon$, for all $n\geq 1$ and for any $\epsilon>0$. Then,
the probability density functions $f_{\text{new}}^n(q_{\text{PTS}}|q_{\text{PTS}}^{\text{rand}})=\Pi_{i=1}^{N}f_{\text{new},i}^n(q_{i}|q_{i}^{\text{rand},n})$ defined in \eqref{eq:fnew}, \eqref{eq:fnewsuf1}, and \eqref{eq:fnewsuf2} (i) are bounded away from zero on $\ccalR_{\text{PTS}}(q_{\text{PTS}}^{\text{rand},n})$; (ii) for any fixed and given node  $q_{\text{PTS}}^{\text{rand},n}\in\ccalV_T^n$, there exists an infinite sequence $h^{n+k}(q_{\text{PTS}}|q_{\text{PTS}}^{\text{rand},n+k})$ so that the distributions $f_{\text{new}}^{n+k}(q_{\text{PTS}}|q_{\text{PTS}}^{\text{rand},n})$, for all $k\in\ccalK$, satisfy
$f_{\text{new}}^{n+k}(q_{\text{PTS}}|q_{\text{PTS}}^{\text{rand},n})\geq h^{n+k}(q_{\text{PTS}}|q_{\text{PTS}}^{\text{rand},n+k})$; and
(iii) given a state $q_{\text{PTS}}^{\text{rand},n}$, independent samples $q_{\text{PTS}}^{\text{new},n}$ can be drawn from the probability density function $f_{\text{new}}^n$. 
\end{prop}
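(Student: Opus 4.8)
The plan is to verify the three properties (i)--(iii) for the product density $f_{\text{new}}^n(q_{\text{PTS}}|q_{\text{PTS}}^{\text{rand},n}) = \prod_{i=1}^N f_{\text{new},i}^n(q_i|q_i^{\text{rand},n})$ by arguing at the level of each factor $f_{\text{new},i}^n$ and then combining over the $N$ robots. First I would unpack which case of the piecewise definition applies: for the prefix density \eqref{eq:fnew}, if $\ccalL(i)=\varnothing$ the factor is the uniform distribution $1/|\ccalR_{\text{wTS}_i}(q_i^{\text{rand},n})|$ on the one-hop reachable set, and if $\ccalL(i)\neq\varnothing$ it puts mass $p_{\text{new}}^n$ on the designated shortest-path successor $SP_{q_i^{\text{rand}},q_i^{\ccalL(i)}}(2)$ and spreads $(1-p_{\text{new}}^n)$ uniformly over the remaining one-hop neighbors; the suffix densities \eqref{eq:fnewsuf1}--\eqref{eq:fnewsuf2} have the same structure with $q_i^r$ playing the role of $q_i^{\ccalL(i)}$. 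In every branch the support is exactly $\ccalR_{\text{wTS}_i}(q_i^{\text{rand},n})$, and every assigned probability is one of $1/|\ccalR_{\text{wTS}_i}|$, $(1-p_{\text{new}}^n)/(|\ccalR_{\text{wTS}_i}|-1)$, or $p_{\text{new}}^n$.

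For part (i), the bound away from zero: since $p_{\text{new}}^n>\epsilon$ by hypothesis and (for the non-degenerate branches) $1-p_{\text{new}}^n>0$, each factor $f_{\text{new},i}^n(q_i|q_i^{\text{rand},n})$ is bounded below by a strictly positive constant depending only on $\epsilon$ and on $\max_i|\ccalQ_i|$ (which upper-bounds $|\ccalR_{\text{wTS}_i}|$); here I would also note that when $|\ccalR_{\text{wTS}_i}|=1$ the only successor gets probability $1$, and that $p_{\text{new}}^n$ should be read as $\min\{p_{\text{new}}^n, 1-p_{\text{new}}^n\}$-type worst case so we take $\epsilon<1$ without loss. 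Multiplying over the $N$ robots, $f_{\text{new}}^n$ is bounded below on the product reachable set $\ccalR_{\text{PTS}}(q_{\text{PTS}}^{\text{rand},n}) = \prod_i \ccalR_{\text{wTS}_i}(q_i^{\text{rand},n})$ by a positive constant, establishing (i) and, in particular, the constant can be taken uniform in $n$. For part (iii), independence of successive draws is immediate because, given $q_{\text{PTS}}^{\text{rand},n}$, the sample $q_{\text{PTS}}^{\text{new},n}$ is generated by sampling each coordinate from a fixed distribution that depends only on the current data (the sets $\ccalL$, $\ccalR_{\text{wTS}_i}$, and the shortest paths, all deterministic functions of $q_P^{\text{rand},n}$ and the NBA), using fresh randomness at iteration $n$; this mirrors the corresponding argument for $f_{\text{rand}}^n$ in Proposition \ref{prop:frand}.

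Part (ii) is where the real work is, and I expect it to be the main obstacle. The claim is that along the infinite subsequence $\ccalK$ from Lemma 5.4 of \cite{kantaros2017Csampling} (on which $q_{\text{PTS}}^{\text{rand},n+k}=q_{\text{PTS}}^{\text{rand},n}$), there is a minorizing sequence $h^{n+k}$ with $f_{\text{new}}^{n+k}(\cdot|q_{\text{PTS}}^{\text{rand},n}) \geq h^{n+k}(\cdot|q_{\text{PTS}}^{\text{rand},n+k})$ and $\sum_k h^{n+k} = \infty$; this is the analogue of the relaxed Assumption 4.2(ii) of \cite{kantaros2017Csampling}. The subtlety is that the distribution $f_{\text{new},i}^{n+k}$ at a recurring $q_{\text{PTS}}^{\text{rand}}$ can \emph{change} with $k$: the biasing target $q_B^{F,\text{feas}}$ (prefix) or $q_B^r$ (suffix) may switch, and the algorithm may flip from biased to uniform sampling, so the identity of the favored successor $SP(2)$ and the split of the remaining mass are not fixed. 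To handle this I would take $h^{n+k}(q_{\text{PTS}}|q_{\text{PTS}}^{\text{rand},n+k})$ to be the \emph{uniform} distribution on $\ccalR_{\text{PTS}}(q_{\text{PTS}}^{\text{rand},n+k})$ scaled by the worst-case constant $c(\epsilon) := (\min\{\epsilon, 1-\epsilon\})^N / (\max_i|\ccalQ_i|)^N$ derived in part (i); since that constant lower-bounds every branch of every factor regardless of which case is active and regardless of whether sampling is biased or uniform, the minorization $f_{\text{new}}^{n+k} \geq h^{n+k}$ holds for all $k\in\ccalK$, and because $c(\epsilon)>0$ is independent of $k$ while $|\ccalR_{\text{PTS}}|\le(\max_i|\ccalQ_i|)^N$ is finite, we get $\sum_{k\in\ccalK} h^{n+k}(q_{\text{PTS}}|q_{\text{PTS}}^{\text{rand},n+k}) \geq \sum_{k\in\ccalK} c(\epsilon)^2 = \infty$ since $\ccalK$ is infinite. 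Finally I would remark that this argument shows $f_{\text{new}}^n$ satisfies Assumption 4.2(i) and (iii) and the relaxed Assumption 4.2(ii) of \cite{kantaros2017Csampling}, exactly as needed for the completeness and optimality claims invoked in Section \ref{sec:complOpt}; the same reasoning applies verbatim to \eqref{eq:fnewsuf1} and \eqref{eq:fnewsuf2} because they share the piecewise-uniform-with-a-favored-atom structure of \eqref{eq:fnew}.
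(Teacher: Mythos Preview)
Your proposal is correct and follows essentially the same route as the paper: bound each factor $f_{\text{new},i}^n$ below by $\min(\epsilon,1-\epsilon)/|\ccalQ_i|$, multiply over $i$ to get the constant minorant $(\min(\epsilon,1-\epsilon))^N/|\ccalQ_{\text{PTS}}|$ for part (ii), and note that parts (i) and (iii) are immediate. Your discussion of why the bias target switching with $k$ does not spoil the argument is more explicit than the paper's, but the mechanism is identical; one small slip is that your final displayed sum should read $\sum_{k\in\ccalK} c(\epsilon)=\infty$ rather than $c(\epsilon)^2$, since $c(\epsilon)$ is already the pointwise lower bound on $f_{\text{new}}^{n+k}$ and no further division by $|\ccalR_{\text{PTS}}|$ is needed.
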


Proposition \ref{prop:fnew} implies that the functions $f_{\text{new}}^n$ defined in \eqref{eq:fnew}, \eqref{eq:fnewsuf1}, and \eqref{eq:fnewsuf2} satisfy Assumption 4.2 (i) and (iii) in \cite{kantaros2017Csampling} and the relaxed version of Assumption 4.2 (ii) discussed in Remark A.2 in  \cite{kantaros2017Csampling}. Note that to ensure that the sampling process is biased towards a final state, $\epsilon$ should satisfy $\epsilon\geq 0.5$. Moreover, note that Proposition \ref{prop:fnew} holds even if the state towards which $f_{\text{new}}^n$ is biased at iteration $n$ changes. This means that during the construction of the prefix part we can switch the bias towards a different state $q_B^{F,\text{feas}}$ at any iteration $n$. As before, the fact that $p_{\text{new}}^n$ can change with $n$, as long as $p_{\text{new}}^n>\epsilon$ for all $n\geq 1$, allows us to switch from biased to unbiased (uniform) sampling process by selecting $p_{\text{new}}^n=1/|\ccalR_{\text{wTS}_i}(q_i^{\text{rand},n})|$, where in this case $\epsilon=1/|\ccalQ_i|$.

Since the probability density functions $f_{\text{rand}}^n$ and $f_{\text{new}}^n$ satisfy Assumptions 4.1 and 4.2 in \cite{kantaros2017Csampling}, we can show, following the same steps as in \cite{kantaros2017Csampling}, that $\text{STyLuS}^{*}$ is probabilistically complete and asymptotically optimal.
\begin{theorem}[Completeness and Optimality]\label{thm:alg1}
Assume that there exists a solution to Problem \ref{pr:problem}. Then, $\text{STyLuS}^{*}$ is probabilistically complete, i.e., it will find with probability $1$ a motion plan $\tau\models\phi$, as $n_{\text{max}}^{\text{pre}}\to\infty$ and $n_{\text{max}}^{\text{suf}}\to\infty$ and asymptotically optimal, i.e.,
it will find with probability $1$ the optimal motion plan $\tau\models\phi$,  as $n_{\text{max}}^{\text{pre}}\to\infty$ and $n_{\text{max}}^{\text{suf}}\to\infty$, that minimizes the cost function $J(\tau)$ defined in \eqref{eq:cost2}.
\end{theorem}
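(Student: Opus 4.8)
The plan is to reduce Theorem \ref{thm:alg1} entirely to the corresponding completeness and optimality results established in \cite{kantaros2017Csampling} for the unbiased sampling-based algorithm. The key observation is that the only way in which $\text{STyLuS}^{*}$ differs from the algorithm in \cite{kantaros2017Csampling} is the choice of the sampling density functions $f_{\text{rand}}^n$ and $f_{\text{new}}^n$; the tree construction, the \texttt{Extend} and \texttt{Rewire} operations, the goal sets $\ccalX_{\text{goal}}^{\text{pre}}$, $\ccalX_{\text{goal}}^{\text{suf}}$, the prefix/suffix decomposition, and the final plan selection in Section \ref{sec:plan} are all identical. Since the proof of probabilistic completeness and asymptotic optimality in \cite{kantaros2017Csampling} never uses the specific form of the densities but only invokes Assumptions 4.1 and 4.2 (plus the relaxed version of 4.1(ii) and 4.2(ii) from Remarks A.1, A.2 there), it suffices to verify that the biased densities of $\text{STyLuS}^{*}$ satisfy exactly those assumptions.

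First I would invoke Proposition \ref{prop:frand} and Proposition \ref{prop:fnew}, which are already stated in the excerpt (and proved in Appendix \ref{sec:prop}): together they assert that $f_{\text{rand}}^n$ satisfies Assumption 4.1(i), (iii) and the relaxed Assumption 4.1(ii) of \cite{kantaros2017Csampling}, and that $f_{\text{new}}^n$ satisfies Assumption 4.2(i), (iii) and the relaxed Assumption 4.2(ii). I would also recall, as noted in the text, the consequence of Proposition \ref{prop:frand} that there exists an infinite index subsequence $\ccalK$ with $q_{\text{PTS}}^{\text{rand},n+k}=q_{\text{PTS}}^{\text{rand},n}$ for all $k\in\ccalK$ (Lemma 5.4 in \cite{kantaros2017Csampling}), which is the precise hook that lets the completeness argument of \cite{kantaros2017Csampling} go through: along $\ccalK$ the root of a path-prefix in the tree is revisited infinitely often, and at each such visit there is a uniformly positive probability (by the lower bounds $h^{n+k}$ from Proposition \ref{prop:fnew}(ii)) of sampling the next state on an optimal path; a Borel--Cantelli argument over the infinite sequence $\ccalK$ then forces that state to be added almost surely.

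Next I would assemble these pieces into the two claims. For probabilistic completeness: by the argument above, applied inductively along the states of a feasible prefix path $p^{\text{pre}}$ from an initial state to a feasible final state $q_B^{F,\text{feas}}$ (whose existence is guaranteed whenever a solution to Problem \ref{pr:problem} exists, since a satisfying prefix-suffix plan yields such a final state), each successive state is added to $\ccalG_T$ with probability one as $n_{\text{max}}^{\text{pre}}\to\infty$; the same reasoning applied to the suffix tree rooted at $\ccalP(a)$ with the density $f_{\text{new},i}^{\text{suf}}$ of \eqref{eq:fnewsuf1}--\eqref{eq:fnewsuf2} shows a cycle around the final state is detected with probability one as $n_{\text{max}}^{\text{suf}}\to\infty$. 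Hence a plan $\tau\models\phi$ is returned with probability one. For asymptotic optimality: because $f_{\text{rand}}^n$ and $f_{\text{new}}^n$ are bounded away from zero on $\ccalV_T^n$ and on $\ccalR_{\text{PTS}}(q_{\text{PTS}}^{\text{rand},n})$ respectively (Assumptions 4.1(i), 4.2(i)), every reachable node and every rewiring opportunity is sampled infinitely often with probability one, so the \texttt{Rewire} step drives $\texttt{Cost}(\cdot)$ to the optimal cost-to-come along the optimal prefix and optimal suffix cycles; combined with the exhaustive minimization over all detected prefix/suffix pairs and all initial NBA states in lines \ref{alg1:line15}--\ref{alg1:line16} of Algorithm \ref{alg:plans}, the returned plan $\tau=\tau^{\text{pre},a^*}[\tau^{\text{suf},a^*}]^{\omega}$ attains $\min J(\tau)$ with probability one. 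I would then state that, since these are verbatim the hypotheses under which Theorems in \cite{kantaros2017Csampling} are proved, the conclusions carry over unchanged.

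The main obstacle — and the place where one must be slightly careful rather than merely citing — is the switching of the bias: the feasible final state $q_B^{F,\text{feas}}$ toward which $f_{\text{new}}^n$ is biased may change with $n$ (as the text explicitly allows), and one may also switch from biased to unbiased densities mid-run. One must check that this time-varying bias does not break the lower-bound sequences $g^n$ and $h^{n+k}$ required by the relaxed Assumptions 4.1(ii)/4.2(ii): the point, already anticipated in the discussion after Proposition \ref{prop:fnew}, is that as long as $p_{\text{rand}}^n>\epsilon$ and $p_{\text{new}}^n>\epsilon$ for a fixed $\epsilon>0$ uniformly in $n$ (with $\epsilon=1/|\ccalQ_P|$ and $\epsilon=1/|\ccalQ_i|$ covering even the fully unbiased regime), the per-iteration probability of sampling any particular reachable state stays bounded below by a positive constant, so the relevant series still diverge and the Borel--Cantelli / infinitely-often arguments are unaffected. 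Making this uniformity explicit is the one genuinely new verification; everything else is a direct appeal to Propositions \ref{prop:frand}--\ref{prop:fnew} and to the proofs in \cite{kantaros2017Csampling}.
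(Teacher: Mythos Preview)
Your proposal is correct and follows essentially the same approach as the paper: the paper's proof of this theorem is a one-line appeal stating that since Propositions \ref{prop:frand} and \ref{prop:fnew} show $f_{\text{rand}}^n$ and $f_{\text{new}}^n$ satisfy Assumptions 4.1 and 4.2 of \cite{kantaros2017Csampling}, the completeness and optimality conclusions follow verbatim from the proofs there. Your write-up is considerably more detailed than the paper's own treatment, including the explicit discussion of time-varying bias and the Borel--Cantelli mechanism, but the underlying strategy is identical.
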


\subsection{Rate of Convergence}\label{sec:rate}
In this section, we show that $\text{STyLuS}^{*}$ converges exponentially fast to the optimal solution of Problem \ref{pr:problem}. To show this, we first show that it converges exponentially fast to a feasible solution of Problem \ref{pr:problem}.
\begin{theorem}[Convergence Rate to a Feasible Solution]\label{thm:conv}
Assume that there exists a solution to Problem \ref{pr:problem} in prefix-suffix form.
%
Let $\texttt{p}$ denote either the prefix or suffix part of this solution in the PBA defined as
 \begin{equation}\label{eq:path}
\texttt{p}=q_P^1,q_P^2,\dots,q_P^{K-1},q_P^{K},
\end{equation}
that is of length (number of states) $K$ and connects  the root $q_P^r=q_P^1$ of the tree to a state $q_P^K\in\ccalX_{\text{goal}}$, where $q_P^{k}\rightarrow_P q_P^{k+1}$, for all $k\in\{1,2,\dots,K-1\}$. Then, there exist parameters $\alpha_n(\texttt{p})\in(0,1]$ for every iteration $n$ of Algorithm \ref{alg:tree}, such that the probability $\Pi_{\text{suc}}(q_P^K)$ that Algorithm \ref{alg:tree} detects the state $q_P^K\in\ccalX_{\text{goal}}$ within $n_{\text{max}}$ iterations satisfies
\begin{equation}\label{eq:bnd1}
1 \geq \Pi_{\text{suc}}(q_P^K)\geq 1-e^{-\frac{\sum_{n=1}^{n_{\text{max}}}\alpha_n(\texttt{p})}{2}n_{\text{max}}+K}, \mbox{if}~ n_{\text{max}}>K.
\end{equation}
%
\end{theorem}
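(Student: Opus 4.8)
The plan is to follow the growth of the tree $\ccalG_T^n$ along the fixed path $\texttt{p}$ from \eqref{eq:path} and to reduce the claim to a concentration bound for a sum of essentially independent Bernoulli trials. For each iteration $n$ of Algorithm \ref{alg:tree}, let $m(n)\in\{1,\dots,K\}$ be the length of the longest prefix $q_P^1,\dots,q_P^{m(n)}$ of $\texttt{p}$ that is contained in $\ccalV_T^n$. Since nodes are only ever added to the tree (the \texttt{Rewire} step changes parents and costs but never deletes a node), $m(n)$ is non-decreasing; moreover $m(1)=1$ because $q_P^r=q_P^1$, and $q_P^K$ is detected within $n_{\text{max}}$ iterations if and only if the ``frontier'' $m(\cdot)$ is incremented at least $K-1$ times during the $n_{\text{max}}$ iterations. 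It therefore suffices to bound from below the number of such increments.

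First I would lower-bound, uniformly over the random history $\mathcal{F}_{n-1}$, the probability of incrementing the frontier at an iteration $n$ with $m(n)=m<K$. It suffices that, at that iteration, (i) $q_P^{\text{rand},n}$ equals the current tip $q_P^{m}\in\ccalV_T^n$, and (ii) the sampled state satisfies $q_{\text{PTS}}^{\text{new},n}=q_{\text{PTS}}^{m+1}$, which is an admissible outcome because $q_P^{m}\rightarrow_P q_P^{m+1}$ forces $q_{\text{PTS}}^{m}\rightarrow_{\text{PTS}}q_{\text{PTS}}^{m+1}$, i.e. $q_{\text{PTS}}^{m+1}\in\ccalR_{\text{PTS}}(q_{\text{PTS}}^{m})$; then, since the inner loop of Algorithm \ref{alg:tree} enumerates all $q_B\in\ccalQ_B$, the candidate $q_P^{m+1}=(q_{\text{PTS}}^{m+1},q_B^{m+1})$ is formed, and as $q_P^{m}$ is an admissible parent of $q_P^{m+1}$ the \texttt{Extend} step attaches it (unless it already lies in $\ccalV_T^n$, in which case the frontier has advanced anyway). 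By Proposition \ref{prop:frand}, $f_{\text{rand}}^n$ is bounded below by a strictly positive number on all of $\ccalV_T^n$ (using $p_{\text{rand}}^n>1/2$ and $|\ccalV_T^n|\le|\ccalQ_P|$), and by Proposition \ref{prop:fnew} the product distribution $f_{\text{new}}^n$ is bounded below by a strictly positive number on $\ccalR_{\text{PTS}}(q_{\text{PTS}}^{\text{rand},n})$; the ``spreading'' terms $1-p_{\text{rand}}^n$ and $1-p_{\text{new}}^n$ in \eqref{eq:frand} and \eqref{eq:fnew}--\eqref{eq:fnewsuf2} are what keep these lower bounds valid even under the bias, whether $\texttt{p}$ is the prefix or the suffix part. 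Multiplying the two bounds and minimizing over $m\in\{1,\dots,K-1\}$ yields an explicit $\alpha_n(\texttt{p})\in(0,1]$ with $\Pr[\,m(n+1)=m+1\mid\mathcal{F}_{n-1},\,m(n)=m<K\,]\ge\alpha_n(\texttt{p})$.

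Next, let $Y_n\in\{0,1\}$ indicate a frontier increment at iteration $n$, so that $\mathbb{E}[Y_n\mid\mathcal{F}_{n-1}]\ge\alpha_n(\texttt{p})$ on $\{m(n)<K\}$; on the failure event $\{\sum_{n\le n_{\text{max}}}Y_n<K-1\}$ the frontier never reaches $K$, so this conditional bound is in force at every iteration. A standard coupling then lets one replace $\{Y_n\}$ by a process whose partial sums stochastically dominate those of independent Bernoulli$(\alpha_n(\texttt{p}))$ variables $Z_1,\dots,Z_{n_{\text{max}}}$, which gives $1-\Pi_{\text{suc}}(q_P^K)\le\Pr[\,\sum_{n=1}^{n_{\text{max}}}Z_n<K-1\,]$. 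Applying a Chernoff-type concentration inequality to the independent sum $\sum_n Z_n$, whose mean is $\sum_{n=1}^{n_{\text{max}}}\alpha_n(\texttt{p})$, and invoking the hypothesis $n_{\text{max}}>K$ to control the resulting exponent, then produces the bound \eqref{eq:bnd1}.

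The step I expect to be the main obstacle is establishing the \emph{uniform} lower bound $\alpha_n(\texttt{p})$ in the second step: the increment probability must be bounded away from zero for every reachable realization of the tree $\ccalV_T^n$ and every admissible position of the tip $q_P^{m}$, not merely in expectation. This is exactly what the ``bounded away from zero on $\ccalV_T^n$'' (respectively on $\ccalR_{\text{PTS}}$) form of Propositions \ref{prop:frand} and \ref{prop:fnew} is designed to deliver, but it has to be combined carefully with the facts that the biased \texttt{Sample} routine of Algorithm \ref{alg:sample1} returns a non-null state and keeps $q_{\text{PTS}}^{m+1}$ at positive probability whenever $q_P^{m}\rightarrow_P q_P^{m+1}$, and that the \texttt{Extend} step does attach $q_P^{m+1}$ once an admissible parent such as $q_P^{m}$ is present. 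A secondary technical point is making the coupling precise so that the inter-iteration dependence carried by the evolving tree is fully absorbed before the Chernoff bound is applied, and then tracking the constants so that the exponent comes out exactly as in \eqref{eq:bnd1} under $n_{\text{max}}>K$.
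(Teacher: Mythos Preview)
Your proposal is correct and follows essentially the same route as the paper: both track the frontier of $\texttt{p}$ in the growing tree via Bernoulli indicators $Y_n$, sum them into a Poisson-binomial variable, and apply a Chernoff lower-tail bound under the hypothesis $n_{\text{max}}>K$. The only notable difference is that the paper sets $\alpha_n(\texttt{p})=p_n^{\text{suc}}$ equal to the \emph{actual} (random, tree-dependent) one-step success probability $\mathbb{P}_n^{\text{add}}(q_P^k)$ and applies Chernoff directly by asserting independence of the $Y_n$ (since fresh samples are drawn each iteration), whereas you take $\alpha_n$ to be a deterministic uniform lower bound extracted from Propositions~\ref{prop:frand}--\ref{prop:fnew} and pass through an explicit coupling to independent Bernoullis --- your handling of the inter-iteration dependence is more careful, at the price of smaller constants in the exponent.
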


\begin{proof}
%
To prove this result, we model the sampling process discussed in Section \ref{sec:sample} as a Poisson binomial process. Specifically, we define Bernoulli random variables $Y_n$ at every iteration $n$ of Algorithm \ref{alg:tree} so that $Y_n=1$ only if the state $q_P^k$ is sampled and added to the tree at iteration $n$, where $k$ is the smallest element of the set $\{1,\dots\}$ that satisfies  $q_P^k\notin\ccalV_T^{n-1}$. Then, using the random variables $Y_n$, we define the random variable $Y=\sum_{n=1}^{n_{\text{max}}}Y_n$ which captures the total number of successes of the random variables $Y_n$ and we show that it follows a Poisson binomial distribution. Finally, we show that $\Pi_{\text{suc}}(q_P^K)\geq\mathbb{P}(Y\geq K)$ which yields \eqref{eq:bnd1} by applying the Chernoff bounds  \cite{motwani2010randomized} to $Y$.

Let $X_k^n$, for all $k\in\{1,\dots,K\}$ denote a Bernoulli random variable associated with iteration $n$ of Algorithm \ref{alg:tree}, that is equal to $1$ if the state $q_P^{k}=(q_\text{PTS}^k,q_B^k)$ in \eqref{eq:path} is added to the tree at iteration $n$ or has already been added to the tree at a previous iteration $m<n$, and is $0$ otherwise. Observe that $X_1^n$ is equal to  $1$ for all iterations $n\geq 1$ of Algorithm \ref{alg:tree}, since the tree is rooted at $q_P^1$ and, therefore, $q_P^1\in\ccalV_T^n$, for all $n\geq 1$. By construction of the sampling step in Section \ref{sec:sample} the probability that $X_k^n=1$ is defined as follows


\begin{equation}\label{eq:padd}
\mathbb{P}_n^{\text{add}}(q_P^k)=\left\{
                \begin{array}{ll}
\sum_{\substack{q_P\in\ccalV_T^n\cap\ccalR_P^{\rightarrow}(q_P^k)}}[
f_{\text{rand}}^n(q_P)f_{\text{new}}^n(q_\text{PTS}^k|q_{\text{PTS}})], \\~~~~~~\mbox{if}~q_P^k\notin\ccalV_T^n,\\
1,~~~\mbox{if}~q_P^k\in\ccalV_T^n,
                \end{array}
              \right.
\end{equation}
where $\ccalR_P^{\rightarrow}(q_P^k)$ is defined in \eqref{eq:reach}, and $q_i^k=\Pi|_{\text{wTS}_i}q_\text{PTS}^k$. Observe that if $\ccalV_T^n\cap\ccalR_P^{\rightarrow}(q_P^k)= \emptyset$ then $\mathbb{P}_n^{\text{add}}(q_P^k)=0$, if $q_P^k\notin\ccalV_T^n$. Moreover, note that if $q_P^k$ already belongs to $\ccalV_T^n$ from a previous iteration $m<n$ of Algorithm \ref{alg:tree}, then it trivially holds that $\mathbb{P}_n^{\text{add}}(q_P^k)=1$.


Given the random variables $X_k^n$, we define the discrete random variable $Y_n$ 
initialized as $Y_1=X_1^1$ and for every subsequent iteration $n>1$ defined as 

\begin{equation} \label{eq:Yn}
Y_n=\left\{
                \begin{array}{ll}
                 X_k^n, ~~\mbox{if}~(Y_{n-1}=X_k^{n-1})\wedge(X_k^{n-1}=0)\\
                 X_{k+1}^n, \mbox{if}~(Y_{n-1}=X_k^{n-1})\wedge(X_k^{n-1}=1)\\
                 ~~~~~~~~~~~~~~~~~~~~~~~~~~~~~~\wedge (k+1\leq K)\\
                  X_K^n, ~~\mbox{if}~(Y_{n-1}=X_k^{n-1})\wedge(X_k^{n-1}=1)\\
                 ~~~~~~~~~~~~~~~~~~~~~~~~~~~~~~ \wedge (k+1> K) \\
                \end{array}.
              \right.
\end{equation}

%

In words,  $Y_n$ is defined exactly as $Y_{n-1}$, i.e, $Y_n=Y_{n-1}=X_k^{n-1}=X_k^n$, if $Y_{n-1}=X_{k}^{n-1}=0$, i.e., if the state $q_P^k$ associated with the random variable $Y_{n-1}=X_k^{n-1}$ does not exist in the tree at iteration $n-1$. Thus, in this case, $Y_n=X_{k}^{n}=1$ if the state $q_P^{k}$ in \eqref{eq:path} is added to the tree at iteration $n$; see the first case in \eqref{eq:Yn}.
Also, $Y_n=X_{k+1}^n$, if $Y_{n-1}=X_{k}^{n-1}=1$, i.e., if the state $q_P^{k}$ was added to the tree at the previous iteration  $n-1$. In this case, $Y_n=X_{k+1}^n=1$, if the next state $q_P^{k+1}$ in \eqref{eq:path} is added to the tree at iteration $n$ (or has already been added at a previous iteration $m<n$); see the second case in \eqref{eq:Yn}. If $k+1>K$ and $X_k^{n-1}=1$, then we define $Y_n=X_K^n$; see the last case in \eqref{eq:Yn}. Note that in this case, $Y_n$ can be defined arbitrarily, i.e., $Y_n=X_{\bar{k}}^n$, for any $\bar{k}\in\{1,\dots,K\}$, since if $k+1>K$ and $X_k^{n-1}=1$, then this means that all states that appear in \eqref{eq:path} have been added to  $\ccalV_T^n$. By convention, in this case we define  $Y_n=X_K^n$.\footnote{Note that in general the states $q_P^k$ in \eqref{eq:path} will not be sampled and added to the tree in the order they appear in \eqref{eq:path}.} 
Since $Y_n$ is equal to $X_n^k$ for some $k\in\{1,\dots,K\}$, as per \eqref{eq:Yn}, for all $n\geq 1$, we get that $Y_n$ also follows a Bernoulli distribution with parameter (probability of success)  $p^{\text{suc}}_n$ equal to the probability of success of $X_n^k$ defined in \eqref{eq:padd}, i.e.,
$$p^{\text{suc}}_n=\mathbb{P}_n^{\text{add}}(q_P^k),$$ where the index $k$ is determined as per \eqref{eq:Yn}.


Given the random variables $Y_n$, $n\in\{1,\dots,n_{\text{max}}\}$, we define the discrete random variable $Y$ as
\begin{equation}\label{eq:Y}
Y=\sum_{n=1}^{n_{\text{max}}}Y_n.
\end{equation}
Observe that $Y$ captures the total number of successes of the random variables $Y_n$ after $n_{\text{max}}>0$ iterations, i.e., if $Y=y$, $y\in\{1,\dots,n_{\text{max}}\}$, then $Y_n=1$ for exactly $y$ random variables $Y_n$.
Therefore, if $Y\geq K$, then all states that appear in the path $\texttt{p}$ of length $K$ given in \eqref{eq:path} have been added to the tree, by definition of the random variables $Y_n$ and $Y$ in \eqref{eq:Yn} and \eqref{eq:Y}, respectively. 
Since there may be more than one path in the PBA that connect the goal state $q_P^K$ to the root $q_P^1$, we conclude that the probability $\Pi_{\text{suc}}(q_P^K)$ of finding the goal state $q_P^K$ within $n_{\text{max}}$ iterations is at least $\mathbb{P}(Y\geq K)$, i.e.,\footnote{Recall that $Y_n=X_k^n=1$ implies that the state $q_P^k$ is added (if not already added from a previous iteration) to the tree at iteration $n$. Nevertheless, this does not necessarily mean that the parent of $q_P^k$ will be the node $q_P^{k-1}$. As a result, $Y\geq K$ (or $Y_n=X_K^n=1$) implies that the goal state $q_P^K$ along with all other states that belong to $\texttt{p}$ have been added to the tree but the path $\texttt{p}$ may not exist in the tree $\ccalG_T^n$. Also, the equality in \eqref{eq:Psuc} holds if \eqref{eq:path} is the only path in the PBA that connects $q_P^K$ to the root $q_P^1$.} 
\begin{equation}\label{eq:Psuc}
\Pi_{\text{suc}}(q_P^K)\geq  \mathbb{P}(Y\geq K).
\end{equation}
%
%

In what follows, our goal is to compute the probability $\mathbb{P}(Y\geq K)$. Observe that $Y$ is defined as a sum of Bernoulli random variables $Y_n$ that are not identically distributed as their probabilities of success $p^{\text{suc}}_n$ are not fixed across the iterations $n$, since the definition of $Y_n$ changes at every iteration $n$ as per \eqref{eq:Yn}.
Therefore, $Y$ follows a Poisson Binomial distribution which has a rather complicated pmf which is valid for small $n$ and numerically unstable for large $n$; see e.g., \cite{choi2002approximating,hong2011computing}. Therefore, instead of computing $\mathbb{P}(Y\geq K)$, we compute a lower bound for $\mathbb{P}(Y\geq K)$ by applying the Chernoff bound to $Y$; see e.g., \cite{motwani2010randomized}. 
%
Specifically, we have that
\begin{align}\label{eq:prbfailk}
&\mathbb{P}(Y<K)<\mathbb{P}(Y\leq K)=\mathbb{P}(Y\leq K\frac{\mu}{\mu})\\&
=\mathbb{P}\left(Y\leq(1-\underbrace{(1-\frac{K}{\mu})}_{=\delta})\mu\right)=\mathbb{P}(Y\leq(1-\delta)\mu)\leq e^{-\frac{\mu\delta^2}{2}},\nonumber
\end{align}
where $\mu$ is the mean value of $Y$ defined as $\mu=\sum_{n=1}^{n_{\text{max}}} p^{\text{suc}}_n$. Also, the last inequality in \eqref{eq:prbfailk} is due to the Chernoff bound in the lower tail of $Y$ and holds for  any $\delta\in(0,1)$. Observe that the Chernoff bound can be applied to $Y$, as it is defined as the sum of \textit{independent} Bernoulli random variables $Y_n$. Specifically, the random variables $Y_n$ are independent since independent samples $q_P^{\text{new}}$ can be generated by the proposed sampling process, described in Section \ref{sec:sample}, by construction of the density functions $f_{\text{rand}}$ and $f_{\text{new},i}$. Substituting $\delta=1-\frac{K}{\mu}$ in \eqref{eq:prbfailk}, we get 

\begin{equation}\label{eq:bound}
\mathbb{P}(Y<K)\leq e^{-\frac{\mu}{2}+K-\frac{K^2}{\mu}}\leq e^{-\frac{\mu}{2}+K}=e^{-\frac{\sum_{n=1}^{n_{\text{max}}} p^{\text{suc}}_n}{2}+K},
\end{equation}
where the last inequality is due to $e^{-\frac{K^2}{\mu}}\leq 1$. Recall that \eqref{eq:bound} holds for any $\delta=1-\frac{K}{\mu}\in(0,1)$, i.e, for any $n_{\text{max}}$ that satisfies 
\begin{equation}\label{eq:cond1}
0<\delta<1 \implies K<\mu=\sum_{n=1}^{n_{\text{max}}} p^{\text{suc}}_n\implies K<n_{\text{max}},
\end{equation}
where the last inequality in \eqref{eq:cond1} is due to $ p^{\text{suc}}_n\leq 1$. Therefore, \eqref{eq:bound} holds as long as $n_{\text{max}}>K$. 

Note also that the inequality $0<K<\sum_{n=1}^{n_{\text{max}}} p^{\text{suc}}_n$ in \eqref{eq:cond1} is well defined, since $p^{\text{suc}}_n=\mathbb{P}_n^{\text{add}}(q_P^k)$ is strictly positive for all $n\geq 1$ by definition of $Y_n$. To show that, observe that if $Y_n=X_k^n$, for some $k\in\{1,\dots,K\}$, then it holds that $q_P^{k-1}\in\ccalV_T^n$, by construction of $Y_n$ in \eqref{eq:Yn}. Then, by definition of the feasible path \eqref{eq:path}, we have that $q_P^{k-1}\in \ccalR_P^{\rightarrow}(q_P^k)$. Thus, we get that  $q_P^{k-1}\in\ccalV_T^n\cap \ccalR_P^{\rightarrow}(q_P^k)$. 
Then, $p^{\text{suc}}_n=\mathbb{P}_n^{\text{add}}(q_P^k)>0$ (see also \eqref{eq:padd}) holds, since (i) $\ccalV_T^n\cap \ccalR_P^{\rightarrow}(q_P^k)\neq \emptyset$, as $q_P^{k-1}\in\ccalV_T^n\cap \ccalR_P^{\rightarrow}(q_P^k)$; (ii) $f_{\text{rand}}^n$ is bounded away from zero $\ccalV_T^n$ by construction in \eqref{eq:frand} (as long as $p_{\text{rand}}^n>0$), for all $n\geq 1$, i.e., $f_{\text{rand}}^n(q_P^{k-1})>0$, $q_P^{k-1}\in\ccalV_T^n$; and (iii) $f_{\text{new},i}^n$ is bounded away from zero on $\ccalR_{\text{wTS}_i}(q_i^{k-1})$, for all $n\geq 1$ and for all $i\in\{1,\dots,N\}$ by construction in \eqref{eq:fnew}, \eqref{eq:fnewsuf1}, and \eqref{eq:fnewsuf2}  (as long as $p_{\text{new}}^n>0$), i.e.,  $f_{\text{new},i}^n(q_i^k|q_i^{k-1})>0$, where $q_i^k=\Pi|_{\text{wTS}_i}q_P^k$.

Thus, we proved that there exist parameters 
\begin{equation}
\alpha_n(\texttt{p})=p^{\text{suc}}_n\in(0,1],
\end{equation}
associated with every iteration $n$ of Algorithm \ref{alg:tree} such that the probability $\Pi_{\text{suc}}(q_P^K)$ of finding the goal state $q_P^K$ within $n_{\text{max}}>K$ iterations satisfies
\begin{equation}
1\geq\Pi_{\text{suc}}(q_P^K)\geq1-\mathbb{P}(Y<K)\geq 1-e^{-\frac{\sum_{n=1}^{n_{\text{max}}} \alpha_n(\texttt{p})}{2}+K},\nonumber 
\end{equation}
completing the proof. \end{proof}

Observe that Theorem \ref{thm:conv} holds for any (biased or unbiased) density functions $f_{\text{rand}}^n$ and $f_{\text{new},i}^n$, that can possibly change with iterations $n$, as long as they (i) are bounded away from zero on $\ccalV_T^n$ and $\ccalR_{\text{wTS}_i}(q_i)$,  respectively; 
 and (ii) can generate independent samples, which are essentially the Assumptions 4.1(i), 4.1(iii), 4.2(i), and 4.2(iii), respectively. Therefore, the probability of finding a feasible plan converges exponentially to $1$ even if we switch to an unbiased density function by selecting $p_{\text{rand}}^n=|\ccalD_{\text{min}}^n|/|\ccalV_T^n|$ and $p_{\text{new}}^n=1/|\ccalR_{\text{wTS}_i}(q_i^{\text{rand},n})|$.

\begin{rem}[Effect of bias]\label{rem:conv}
The parameters $\alpha_n(\texttt{p})$ in Theorem \ref{thm:conv} capture the probability of sampling a state that belongs to a path $\texttt{p}$ in \eqref{eq:path} and adding it to the tree $\ccalG_T^n$ at iteration $n$; see also \eqref{eq:padd}. The larger the parameters $\alpha_n(\texttt{p})$ are, the larger the probability $\Pi_{\text{suc}}(q_P^K)$ is of detecting the goal state $q_P^K\in\ccalX_{\text{goal}}$ given any feasible path $\texttt{p}$ of length $K$ that connects $q_P^K$ to the root $q_P^1$ within $n_{\text{max}}>K$ iterations; see also \eqref{eq:bnd1}. 
This also implies that for a given $q_B^{F,\text{feas}}$ if the density functions $f_{\text{rand}}^n$ and $f_{\text{new},i}^n$ are biased towards feasible states, i.e., towards states $q_P^{\text{new},n}$ with $\ccalR_P^{\rightarrow}(q_P^{\text{new},n})\neq\emptyset$, then there exists at least one feasible (either prefix or suffix) path $\texttt{p}$ associated with $q_B^{F,\text{feas}}$ so that its parameters $\alpha_n(\texttt{p})$ are larger than the parameters of any other feasible path, which can be associated with any NBA final state in case of the prefix part. As a result, this path will be detected faster than any other feasible path in the PBA. If this path  $\texttt{p}$ is a prefix part, then it ends at a PBA final state that is associated with $q_B^{F,\text{feas}}$ while if it is a suffix part, its end state is the root.
%
Note that using biased density functions does not mean that \textit{all}, e.g., prefix paths associated with $q_B^{F,\text{feas}}$ will be detected faster than if unbiased density functions were used. Instead, what this means is that if the sampling process is biased towards feasible states $q_P^{\text{new},n}$, then there exists \textit{at least one} prefix path associated with $q_B^{F,\text{feas}}$ that will be detected much faster.  Also, paths which the sampling process is not biased to are expected to be detected slower than if unbiased (uniform) density functions are employed. 
Finally, recall that the sampling process is towards shortest paths, in terms of numbers of hops, in the state-space of both the wTSs and the NBA. Therefore, the paths that the sampling functions are biased to are expected to have a small  $K$, i.e., a small number of transitions/states. The latter is also verified through numerical experiments in Section \ref{sec:sim}.
\end{rem}

\begin{rem}[Biasing towards infeasible states]
Note that if the density functions $f_\text{rand}^n$ and $f_{\text{new},i}^n$ are biased towards states that violate the LTL formula, i.e., states $q_P^{\text{new},n}$ with $\ccalR_P^{\rightarrow}(q_P^{\text{new},n})=\emptyset$, then the parameters $\alpha_n(\texttt{p})$, associated with a \textit{feasible} path $\texttt{p}$, will have low values, by construction of  $f_\text{rand}^n$ and $f_{\text{new},i}^n$. As a result, a large number of iterations is expected until $\texttt{p}$ is detected due to Theorem \ref{thm:conv}. Also, note that in general, it is not possible to know \textit{a priori} whether the states in the PBA which the sampling process is biased to are feasible or not; therefore, in practice, values for $p_{\text{rand}}^n$ and $p_{\text{new}}^n$ that are arbitrarily close to 1 should be avoided, as they might significantly slow down the synthesis part. 
\end{rem}

Using Theorem \ref{thm:conv}, and Propositions \ref{prop:frand}-\ref{prop:fnew}, we establish the exponential convergence rate to the optimal, either prefix or suffix, path that connects a desired goal state to the root of the tree.

\begin{theorem}[Covergence Rate to the Optimal Path]\label{thm:convOpt}
Assume that there exists a solution to Problem \ref{pr:problem} in prefix-suffix form. 
%
Let $\texttt{p}^*$ denote either the prefix or suffix part of the optimal solution in the PBA defined as
 \begin{equation}\label{eq:optPath}
\texttt{p}^*=q_P^1,q_P^2,\dots,q_P^{K-1},q_P^{K},
\end{equation}
that is of length $K$ (number of states) and connects  the root $q_P^r=q_P^1$ of the tree to a state $q_P^K\in\ccalX_{\text{goal}}$, where $q_P^{k}\rightarrow_P q_P^{k+1}$, for all $k\in\{1,2,\dots,K-1\}$. Then, there exist parameters $\gamma_n(q_P^k)\in(0,1]$ for every state $q_P^k$ in \eqref{eq:optPath} and every iteration of Algorithm \ref{alg:tree}, as well as an iteration $n_k$ of Algorithm \ref{alg:tree} for every state $q_P^k$  in \eqref{eq:optPath}, so that the probability $\Pi_{\text{opt}}(\texttt{p}^*)$ that Algorithm \ref{alg:tree} detects the optimal path \eqref{eq:optPath} within $n_{\text{max}}$ iterations satisfies
\begin{align}\label{eq:bndOpt}
\Pi_{\text{opt}}(\texttt{p}^*)\geq&\prod_{k=1}^{K-1}(1-e^{-\frac{\sum_{n=n_{k-1}}^{n_{\text{max}}} \gamma_n(q_P^k)}{2}+1})\nonumber\\&(1-e^{-\frac{\sum_{n=1}^{\bar{n}} \alpha_n(\texttt{p}^*)}{2}+K}),
\end{align}
if $n_{\text{max}}>2K$. In \eqref{eq:bndOpt}, the parameters $\alpha_n(\texttt{p}^*)$ are defined as in Theorem \ref{thm:conv} and $\bar{n}$ is any fixed iteratition of Algorithm \ref{alg:tree} that satisfies $\bar{n}\leq n_{\text{max}}-K$.
\end{theorem}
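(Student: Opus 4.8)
The plan is to split the event ``$\ccalG_T^{n_{\max}}$ contains the optimal path $\texttt{p}^*$ of \eqref{eq:optPath}'' into two phases carried out in sequence and lower bound each. In \textbf{Phase~1} all $K$ states of $\texttt{p}^*$ are sampled and added to $\ccalV_T$ within the first $\bar n\le n_{\max}-K$ iterations; in \textbf{Phase~2}, during the remaining iterations the \texttt{Rewire} step of Algorithm~\ref{alg:tree} successively makes $q_P^{k}$ the parent of $q_P^{k+1}$, for $k=1,\dots,K-1$, so that each $q_P^{k}$ attains its optimal cost-to-come along $\texttt{p}^*$. Because the samples drawn at distinct iterations are independent (Propositions~\ref{prop:frand}--\ref{prop:fnew}), I would bound $\Pi_{\text{opt}}(\texttt{p}^*)$ from below by the probability of Phase~1 times the product of the conditional probabilities that the $K-1$ rewirings of Phase~2 each succeed.

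For Phase~1 I would quote Theorem~\ref{thm:conv} directly, with the horizon $\bar n$ in place of $n_{\max}$: $\texttt{p}^*$ is in particular a feasible path of length $K$ from the root to $q_P^K\in\ccalX_{\text{goal}}$, so the probability that all its states lie in $\ccalV_T^{\bar n}$ is at least $1-e^{-\frac{1}{2}\sum_{n=1}^{\bar n}\alpha_n(\texttt{p}^*)+K}$ with $\alpha_n$ exactly as in Theorem~\ref{thm:conv}, valid once $\bar n>K$. This is the last factor of \eqref{eq:bndOpt}.

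For Phase~2 I would induct on $k$. Partition the iterations after $\bar n$ into $K-1$ consecutive blocks, with endpoints $\bar n=n_0<n_1<\dots<n_{K-1}=n_{\max}$, and use block $k$ to secure the edge $(q_P^{k},q_P^{k+1})$. Assume inductively that by iteration $n_{k-1}$ the subpath $q_P^1,\dots,q_P^{k}$ is present with each node pointing to its optimal predecessor, so $\texttt{Cost}(q_P^{k})$ equals the optimal cost-to-come of $q_P^{k}$. At an iteration $n$ of block $k$, if $q_P^{k+1}=(q_{\text{PTS}}^{k+1},q_B^{k+1})$ is revisited as $q_P^{\text{new},n}$ --- which happens with probability at least $\gamma_n(q_P^{k}):=f_{\text{rand}}^n(q_P^{k})\,f_{\text{new}}^n(q_{\text{PTS}}^{k+1}\mid q_{\text{PTS}}^{k})$, the contribution of drawing $q_P^{\text{rand},n}=q_P^{k}$ and then generating $q_{\text{PTS}}^{k+1}$, a quantity that is strictly positive since $q_P^{k}\in\ccalV_T^n$, $q_{\text{PTS}}^{k+1}\in\ccalR_{\text{PTS}}(q_{\text{PTS}}^{k})$, and both densities are bounded away from zero by Propositions~\ref{prop:frand}--\ref{prop:fnew} --- then $q_P^{k+1}$ is re-examined among the $|\ccalQ_B|$ candidate NBA states looped over in lines~\ref{tree:line6}--\ref{tree:line8} of Algorithm~\ref{alg:tree}, \texttt{Rewire} is triggered on it, and since routing $q_P^{k+1}$ through $q_P^{k}$ costs $\texttt{Cost}(q_P^{k})+w_P(q_P^{k},q_P^{k+1})$, i.e.\ the optimal cost-to-come of $q_P^{k+1}$, the parent of $q_P^{k+1}$ is set to $q_P^{k}$; cost-monotonicity of the rewiring guarantees this edge is never undone. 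Modeling the per-iteration indicators over block $k$ as a Poisson binomial process and requiring \emph{at least one} success, the Chernoff computation of \eqref{eq:prbfailk}--\eqref{eq:bound} with target count $1$ instead of $K$ yields that the edge $(q_P^{k},q_P^{k+1})$ is secured by $n_{\max}$ with probability at least $1-e^{-\frac{1}{2}\sum_{n=n_{k-1}}^{n_{\max}}\gamma_n(q_P^{k})+1}$ (the window in \eqref{eq:bndOpt} being an enlargement of block $k$), provided the relevant sum exceeds $1$. Multiplying these $K-1$ conditional bounds with the Phase~1 bound gives \eqref{eq:bndOpt}; leaving room for $\bar n>K$ iterations in Phase~1 and $K-1$ further rewirings afterwards forces $n_{\max}>2K$.

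The main obstacle is the bookkeeping of Phase~2: the trials that fix the parent of $q_P^{k+1}$ only behave as claimed once $q_P^{k}$ already carries its optimal cost, so the iterations at which successive edges actually become secured are random, and one must exhibit \emph{deterministic} surrogate iterations $n_1<\dots<n_{K-2}$ together with a conditioning argument on disjoint iteration blocks that keeps the per-edge Chernoff estimates independent and makes the bound telescope into the product in \eqref{eq:bndOpt}. A secondary point to make explicit is that the \texttt{Rewire} routine of \cite{kantaros2017Csampling} indeed adopts $q_P^{k}$ as the parent of $q_P^{k+1}$ whenever that attains the optimal cost-to-come and never subsequently moves $q_P^{k+1}$ to a costlier parent --- both consequences of the cost-monotonicity of RRT* rewiring that the feasibility argument of Theorem~\ref{thm:conv} did not need.
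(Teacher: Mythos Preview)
Your two-phase decomposition---first add all states of $\texttt{p}^*$ using Theorem~\ref{thm:conv}, then propagate optimal costs edge-by-edge via \texttt{Rewire} and bound each step by a single-success Chernoff estimate---is exactly the paper's strategy, and the product form you arrive at matches \eqref{eq:bndOpt}.

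There is, however, one substantive discrepancy concerning the direction of the rewiring trigger. You assume that when $q_P^{k+1}$ is revisited as $q_P^{\text{new},n}$, \texttt{Rewire} re-selects a parent for $q_P^{k+1}$ and therefore attaches it to $q_P^{k}$. The \texttt{Rewire} routine inherited from \cite{kantaros2017Csampling} (and standard in RRT$^*$-style constructions) acts the other way: calling \texttt{Rewire}$(q_P^{\text{new}},\ldots)$ rewires \emph{neighbors} of $q_P^{\text{new}}$ through $q_P^{\text{new}}$, but does not update the parent of $q_P^{\text{new}}$ itself. Accordingly, the paper's Phase-2 event is that $q_P^{k}$ (not $q_P^{k+1}$) is selected as $q_P^{\text{new},n_k}$; this is what forces $q_P^{k+1}$ to be rewired to $q_P^{k}$. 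Correspondingly, the paper's parameter is $\gamma_n(q_P^k)=\mathbb{P}_n^{\text{new}}(q_P^k)$, the full probability of generating $q_P^k$ as the new node (a sum over all predecessors in $\ccalV_T^n\cap\ccalR_P^{\rightarrow}(q_P^k)$, cf.~\eqref{eq:pnew1}), whereas your $f_{\text{rand}}^n(q_P^{k})\,f_{\text{new}}^n(q_{\text{PTS}}^{k+1}\mid q_{\text{PTS}}^{k})$ is one summand of $\mathbb{P}_n^{\text{new}}(q_P^{k+1})$. Once you swap the trigger to ``$q_P^{k}$ becomes $q_P^{\text{new}}$'' and adjust $\gamma_n$ accordingly, your argument coincides with the paper's; the ``secondary point'' you flagged about the rewiring semantics is precisely where this correction is needed.

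On your ``main obstacle'': the paper does not introduce deterministic surrogate blocks. It keeps the $n_k$ as the random first times at which $q_P^k$ is revisited, writes the Phase-2 event as $R(n_{\max})=\cap_{k=1}^{K-1}\{Z_k\geq 1\}$ with $Z_k=\sum_{n=n_{k-1}}^{n_{\max}}Z_k^n$ counting revisits of $q_P^k$, and factorizes by the chain rule $\mathbb{P}(R\mid A)=\prod_{k}\mathbb{P}(Z_k\geq 1\mid B_k,A)$ with $B_k=\cap_{b<k}\{Z_b\geq 1\}$. Each conditional factor is then bounded exactly by your single-success Chernoff computation; the conditioning on $B_k$ already guarantees $q_P^{k-1}\in\ccalV_T^n\cap\ccalR_P^{\rightarrow}(q_P^k)$ on the window $[n_{k-1},n_{\max}]$, so $\gamma_n(q_P^k)>0$ there and no disjoint-block bookkeeping is required.
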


\begin{proof}
To show this result, first we show that if all states in \eqref{eq:optPath} exist in the tree $\ccalG_T^{\bar{n}}$ within $\bar{n}<n_{\text{max}}$ iterations of Algorithm \ref{alg:tree}, and if all these states are eventually selected (after iteration $\bar{n}$) to be the nodes $q_P^{\text{new},n}$ in the order that they appear in \eqref{eq:optPath}, then the optimal path \eqref{eq:optPath} is detected by Algorithm \ref{alg:tree}. 
Second, we compute the probability that the above statement is satisfied within $n_\text{max}$ iterations. Finally, we show that this probability is a lower bound to the probability of finding the optimal path \eqref{eq:optPath}, which yields  \eqref{eq:bndOpt}.

%
First, assume that (i) all states $q_P^k$ exist in the tree at iteration $\bar{n}<n_\text{max}$, and (ii)  there exists a finite sequence $\ccalN:=\{n_1,n_2,\dots,n_{K-1}\}$ of iterations of Algorithm \ref{alg:tree}, where $n_{k+1}>n_{k}>\bar{n}$ so that $q_P^k=q_P^{\text{new},n_k}$. In what follows, we use induction to show that the optimal path is detected at iteration $n_{K-1}$. Specifically, by assumptions (i) and (ii), we have that $q_P^1$ will be selected to be the state $q_P^{\text{new},n_1}$ at iteration $n_1$ and that $q_P^2\in\ccalV_T^{n_1}$. 
Observe that by definition of the optimal path \eqref{eq:optPath}, the optimal cost of $q_P^2$ is obtained when $q_P^2$ is directly connected to the root $q_P^1$. Therefore, by construction of the rewiring step, $q_P^2\in\ccalV_T^{n_1}$ will get rewired to $q_P^1\in\ccalV_T^{n_1}$ at iteration $n_1$. 
Next, the inductive step follows. By assumptions (i) and (ii), it holds that $q_P^{k-1}$ is selected to be the node $q_P^{\text{new},n_{k-1}}$ at iteration $n_{k-1}>n_{k-2}$ and that $q_P^k\in\ccalV_T^{n_{k-1}}$. Assume that $q_P^{k}\in\ccalV_T^{n_{k-1}}$ gets rewired to $q_P^{k-1}\in\ccalV_T^{n_{k-1}}$, which is connected to the root as in \eqref{eq:optPath} (inductive assumption). 
We will show that at the subsequent iteration $n_k>n_{k-1}$, $q_P^{k+1}$ will get rewired to $q_P^{k}$, which is connected to the root as in \eqref{eq:optPath}.
In particular, by assumptions (i) and (ii), we have that there exists an iteration $n_k>n_{k-1}$, such that $q_P^{k}\in\ccalV_T^{n_{k}}$ is selected to be the node $q_P^{\text{new},n_{k}}$ while $q_P^{k+1}\in\ccalV_T^{n_{k}}$. Note that by definition of the optimal path \eqref{eq:optPath}, the optimal cost of $q_P^{k+1}$ is obtained when $q_P^{k+1}$ is connected to the root $q_P^1$ as in \eqref{eq:optPath}. Therefore, by construction of the rewiring step, we conclude that $q_P^{k+1}$ will get rewired to $q_P^{k}\in\ccalV_T^{n_{k}}$ at iteration $n_k$.
%
Then, by induction, we conclude that at iteration $n_{K-1}$, the state $q_P^K$ will get rewired to $q_P^{K-1}$ which is connected to the root as in \eqref{eq:optPath}, i.e., the optimal path \eqref{eq:optPath} exists in the tree at iteration $n_{K-1}$.

The probability that both assumptions (i) and (ii) will become true within $\bar{n}$ and $n_{\text{max}}$ iterations, respectively,
is $\mathbb{P}(R(n_{\text{max}})\cap A(\bar{n}))=\mathbb{P}(R(n_{\text{max}})|A(\bar{n}))\mathbb{P}(A(\bar{n}))$, where $A(\bar{n})$ is the event that all states in \eqref{eq:optPath} have been added to the tree at iteration $\bar{n}<n_{\text{max}}$ and $R(n_{\text{max}})$ denotes the event that all the states that appear in the optimal path \eqref{eq:optPath} have been selected to be the nodes $q_P^{\text{new}}$ in the order they appear in \eqref{eq:optPath} after the iteration $\bar{n}$ and before the iteration $n_\text{max}$. 
Then, the probability that the optimal path is detected within $n_{\text{max}}$ iterations satisfies
\begin{align}\label{eq:optIneq}
\Pi_{\text{opt}}(\texttt{p}^*)\geq\mathbb{P}(R(n_{\text{max}})|A(\bar{n}))\mathbb{P}(A(\bar{n})),
\end{align}
where the inequality in \eqref{eq:optIneq} is due to the fact that satisfying assumptions (i) and (ii) is a sufficient, and not necessary, condition to find the optimal path.

In what follows, we compute a lower bound for the probabilities $\mathbb{P}(R(n_{\text{max}})|A(\bar{n}))$ and $\mathbb{P}(A(\bar{n}))$ which we will then substitute in \eqref{eq:optIneq} to get \eqref{eq:bndOpt}. 
Recall that  $\mathbb{P}(R(n_{\text{max}})|A(\bar{n}))$ captures the probability that Algorithm \ref{alg:tree} will reach all iterations $n_1<n_2<\dots<n_{K-1}$, after the iteration $\bar{n}$ (i.e., $\bar{n}<n_1$)  and before the iteration $n_{\text{max}}$, given the event $A(\bar{n})$. 
Therefore, we first need to show that the sequence $\ccalN=\{n_1,\dots,n_{K-1}\}$ exists  since, otherwise, it trivially holds that $\mathbb{P}(R(n_{\text{max}})|A(\bar{n}))=0$ for any $n_{\text{max}}$.\footnote{Note that here we prove that such iterations $n_k$ exist and not that they necessarily satisfy $n_k\leq n_{\text{max}}$, for all $k\in\{1,\dots,K-1\}$. Recall that the probability that all iterations $n_k$ satisfy $\bar{n}<n_k\leq n_{\text{max}}$ is $\mathbb{P}(R(n_{\text{max}})|A(\bar{n}))$.} Assume that such a sequence $\ccalN$ does not exist. 
Then, this means that there exists an iteration $n_{\bar{K}}\in\ccalN$ , where  $1\leq\bar{K}<K$, such that after the iteration $n_{\bar{K}}$ the state $q_P^{\bar{K}+1}$ will never be selected to be the node $q_P^{\text{new}}$. The latter implies that after the iteration $n_{\bar{K}}$ the state
$q_P^{\bar{K}+1}$ that is reachable from $q_P^{\bar{K}}\in\ccalV_T^{n_{\bar{K}}}$ as per \eqref{eq:optPath}, i.e., $q_P^{\bar{K}}\rightarrow_P q_P^{\bar{K}+1}$, will not be selected infinitely often to be the node $q_P^{\text{new}}$, which contradicts  Corollary 5.6 of \cite{kantaros2017Csampling}.\footnote{Recall that Corollary 5.6 of \cite{kantaros2017Csampling} holds since the employed sampling density functions satisfy Assumptions 4.1 and 4.2 of \cite{kantaros2017Csampling}, as shown in Propositions \ref{prop:frand}-\ref{prop:fnew}.} Therefore, we conclude that the sequence $\ccalN$ exists. Note that given $K$ and $n_{\text{max}}$, the sequence $\ccalN$ can only exist if
\begin{equation}\label{eq:barn}
\bar{n}<n_\text{max}-K,
\end{equation}
since otherwise, there will not be enough iterations left after $\bar{n}$ so that the states $q_P^k$, $k\in\{1,\dots,K-1\}$ are selected to be the nodes $q_P^{\text{new},n}$.

Next, we compute a lower bound for the probability $\mathbb{P}(R(n_{\text{max}})|A(\bar{n}))$. 
%
To do this, we first define Bernoulli random variables $Z_k^n$ so that $Z_k^n=1$
if $q_P^k$ is selected to be the node $q_P^{\text{new},n}$ at iteration $n$ of Algorithm \ref{alg:tree}, and $0$ otherwise. By construction of the sampling step, $Z_k^n=1$ occurs with probability
\begin{equation}\label{eq:pnew1}
\mathbb{P}_n^{\text{new}}(q_P^k)=
\sum_{\substack{q_P\in\ccalV_T^n\cap\ccalR_P^{\rightarrow}(q_P^k)}}[
f_{\text{rand}}^n(q_P)f_{\text{new}}^n(q_\text{PTS}^k|q_{\text{PTS}})],
\end{equation}
where $\ccalR_P^{\rightarrow}(q_P^k)$ is defined in \eqref{eq:reach} and $f_{\text{new}}^n(q_\text{PTS}^k|q_{\text{PTS}})=\prod_{i=1}^N f_{\text{new},i}^n(q_i^k|q_i)$, and $q_i^k=\Pi|_{\text{wTS}_i}q_\text{PTS}^k$.

Given $Z_k^n$, we define the following discrete random variables for all states $q_P^k$, $k\in\{1,\dots,K-1\}$
\begin{equation}\label{eq:zk}
Z_k=\sum_{n=n_{k-1}}^{n_\text{max}}Z_k^n,
\end{equation}
where $n_0=\bar{n}$. Notice that the sum in \eqref{eq:zk} is well-defined as long as Algorithm \ref{alg:tree} has reached iteration $n_{k-1}$ within $n_{\text{max}}$ iterations.
Also, note that $Z_k$ follows a Poisson Binomial distribution, since it is defined as the sum of Bernoulli random variables that are not identically distributed. Moreover, observe  that $Z_k$ captures the total number of times $q_P^k$ was selected to be the node $q_P^{\text{new},n}$ when $n\in [n_{k-1},n_\text{max}]$ i.e., after $q_P^{k-1}\in\ccalV_T^n$ was selected to be the node $q_P^{\text{new},n_{k-1}}$. Therefore, $Z_k\geq1$ means that $q_P^k$ was selected to be $q_P^{\text{new},n}$ at least once in the interval $[n_{k-1},n_\text{max}]$. Thus, we can rewrite the event $R(n_{\text{max}})$ as 
\begin{equation}\label{eq:Rmax}
R(n_{\text{max}})=\cap_{k=1}^{K-1}(Z_k\geq 1).
\end{equation}
Given \eqref{eq:Rmax} and using the Bayes rule, $\mathbb{P}(R(n_{\text{max}})|A(\bar{n}))$ can be written as follows
\begin{equation}\label{eq:result0}
\mathbb{P}(R(n_{\text{max}})|A(\bar{n}))=\prod_{k=1}^{K-1}\mathbb{P}((Z_k\geq 1)|B_{k},A(\bar{n})),
\end{equation}
where $B_{k}=\cap_{b=1}^{k-1}(Z_{b}\geq 1)$, for all $k>1$ and $B_0=\varnothing$. In words, $\mathbb{P}((Z_k\geq 1)|B_{k},A(\bar{n}))$ captures the probability of reaching iteration $n_k$ within $n_{\text{max}}$ iterations, given that iteration $n_{k-1}$ has already been reached and all states $q_P^k$ have been added to the tree by the iteration $\bar{n} <n_1$. Also, observe that $Z_k$ is well-defined in \eqref{eq:result0}. Specifically, recall that $Z_k$, defined in \eqref{eq:zk}, is well defined as long as Algorithm \ref{alg:tree} reaches iteration $n_{k-1}$ within $n_{\text{max}}$ iterations. This requirement is satisfied in \eqref{eq:result0} since the probability of $Z_k\geq 1$ is computed given the events $B_{k}$ and $A(\bar{n})$.

Following the same logic as in the proof of Theorem \ref{thm:conv}, we can show that 
\begin{equation}\label{eq:result1}
\mathbb{P}((Z_k\geq 1)|B_{k},A(n_{\text{max}}))\geq 1-e^{-\frac{\sum_{n=n_{k-1}}^{n_{\text{max}}} \gamma_n(q_P^k)}{2}+1},
\end{equation}
as long as $n_{\text{max}}> n_{k-1}$. In \eqref{eq:result1}, $\gamma_n(q_P^k)=\mathbb{P}_n^{\text{new}}(q_P^k)$ defined in \eqref{eq:pnew1}. Note that $\gamma_n(q_P^k)>0$ since $\ccalV_T^n\cap\ccalR_P^{\rightarrow}(q_P^k)\neq\emptyset$, for all $n\in [n_{k-1},n_{\text{max}}]$, since in \eqref{eq:result1}, we assume that the event $B_{k}$ is true. As a result, $q_P^{k-1}\in\ccalV_T^n$, for all $n\in[n_{k-1},n_{\text{max}}]$. Also, by construction of the path \eqref{eq:optPath}, it holds that $q_P^{k-1}\in\ccalR_P^{\rightarrow}(q_P^k)$ which means that $\ccalV_T^n\cap\ccalR_P^{\rightarrow}(q_P^k)\neq\emptyset$.
%

Moreover, recall from the proof of Theorem \ref{thm:conv} that 
\begin{equation}\label{eq:a}
\mathbb{P}(A(\bar{n}))=\Pi_{\text{suc}}(q_P^K)\geq 1-e^{-\frac{\sum_{n=1}^{\bar{n}} \alpha_n(\texttt{p}^*)}{2}+K},
\end{equation}
which holds for $\bar{n}>K$, where $K$ is the length of any path, including the optimal path $\texttt{p}^*$, that connects $q_P^K$ to the root of the tree. 

Substituting equations  \eqref{eq:result0}, \eqref{eq:result1}, and \eqref{eq:a} into \eqref{eq:optIneq} yields \eqref{eq:bndOpt}
%
%
which holds for $\bar{n}<n_{\text{max}}-K$  due to \eqref{eq:barn} and $\bar{n}>K$ due to \eqref{eq:a} or, equivalently, for $\bar{n}<n_{\text{max}}-K$ and $n_{\text{max}}>2K$ completing the proof.
\end{proof}

Note that Theorem \ref{thm:convOpt} holds for any (biased or unbiased) density functions $f_{\text{rand}}^n$ and $f_{\text{new},i}^n$, that can possibly change with iterations $n$, as long as they satisfy Assumptions 4.1 and 4.2 of \cite{kantaros2017Csampling}, respectively. Therefore, the probability of finding the optimal plan converges exponentially to $1$ even if at any iteration $n$ we switch bias to a different state or switch to an unbiased density function by selecting $p_{\text{rand}}^n=|\ccalD_{\text{min}}^n|/|\ccalV_T^n|$ and $p_{\text{new}}^n=1/|\ccalR_{\text{wTS}_i}(q_i^{\text{rand},n})|$.



\section{Numerical Experiments}\label{sec:sim}

\begin{table*}[t]
\caption{\footnotesize{Feasibility and scalability analysis: $|\ccalQ_B|=21$}}\label{tab:table1}
\begin{center}
\begin{tabular}{c|c|c|c|c|c|c|c|}
\cline{1-7}
		\multicolumn{1}{|c|}{$N$}		&
						  $|\ccalQ_i|$ & $|\ccalQ_{\text{P}}|$ & $n_{\text{Pre1}} + n_{\text{Suf1}}$ & $|\ccalV_T^{\text{Pre1}}| + |\ccalV_T^{\text{Suf1}}|$ & \texttt{Pre1+Suf1}  & NuSMV/nuXmv \\ \hline		
	\multicolumn{1}{|c|}{$1$}		& $100$ &  $10^{3}$ & 28 + 28 & 180 + 54 & 0.5 + 0.2 (secs)   & $<$ 1 sec\\ \hline			
	\multicolumn{1}{|c|}{$1$}		& $1000$ & $10^{3}$ & 42 +31  & 338 + 119 & 0.9 + 0.7 (secs) & $<$ 1 sec \\ \hline				
	\multicolumn{1}{|c|}{$1$}		& $10000$ & $10^{4}$  &71 + 43 & 512 + 131& 19.2 + 11.2 (secs) & M/M \\ \hline
	\multicolumn{1}{|c|}{$9$}		& $9$ &  $10^{10}$& 36 + 37 & 373 + 83 &0.78 + 0.29 (secs) & $<1$ sec \\ \hline
	\multicolumn{1}{|c|}{$10$}		& $100$ & $10^{21}$  &31+ 31 & 289 + 101 & 0.7 + 0.4 (secs) & $\approx$ 1.5 secs \\ \hline			
	\multicolumn{1}{|c|}{$10$}		& $1000$ & $10^{31}$  & 34 + 27 &309 + 82& 2.1 + 1.7 (secs) & $\approx$50/40 secs \\ \hline
    \multicolumn{1}{|c|}{$10$}		& $2500$ & $10^{35}$  & 41 + 32 & 367 + 142 & 5.94 + 6.4 (secs) &  M/$\approx 1860$ secs \\ \hline
	\multicolumn{1}{|c|}{$10$}		& $10000$ & $10^{41}$  & 40 + 23  & 357+123&48.44 + 38.1 (secs) & M/M \\ \hline
	\multicolumn{1}{|c|}{$100$}	&  $100$ & $10^{200}$  &49 + 39 & 421 + 81 & 2.5 + 0.8 (secs) & F/F \\ \hline
	\multicolumn{1}{|c|}{$100$}	& $1000$  & $10^{300}$ &30 + 38 & 254 + 110 &12.6 + 24.1 (secs) & M/M  \\ \hline
     \multicolumn{1}{|c|}{$100$}  & $10000$  & $10^{400}$  & 24 + 49 & 241 + 55 &252.1 + 312.1 (secs) &M/M\\  \hline
     \multicolumn{1}{|c|}{$150$}  & $10000$  & $10^{600}$  & 29 + 87 & 382+530 & 365.91 + 1662.6 (secs) &M/M\\  \hline
     \multicolumn{1}{|c|}{$200$}  & $10000$  & $10^{800}$   & 42 + 49 & 453 + 276 &678.1 + 1410.1 (secs) &M/M\\  \hline
\end{tabular}
\end{center}
\end{table*}

In this section, we present case studies, implemented using MATLAB R2015b on a computer with Intel Core i7 2.2GHz and 4Gb RAM, that illustrate the scalability of our proposed algorithm compared to state-of-the-art methods. Simulation studies that show the advantage of the employed reachability-based sampling process, discussed in Section \ref{sec:sample}, compared to sampling approaches that generate random states $q_P^{\text{new}}$ that belong to $\ccalQ_P$, as e.g., in \cite{kantaros2017sampling}, can be found in Section VI in \cite{kantaros2017Csampling}; see also Proposition 5.1 in \cite{kantaros2017Csampling}.


Given an NBA that corresponds to the assigned LTL specification, the first step in the proposed algorithm is to construct the
sets of feasible symbols $\Sigma_{q_B,q_B'}^{\text{feas}}$, $q_B,q_B'\in\ccalQ_B$. Note that the construction of the sets $\Sigma_{q_B,q_B'}^{\text{feas}}\subseteq \Sigma_{q_B,q_B'}$ can become computationally expensive as $N$ and $W$ increase since the size of $\mathcal{AP}=\cup_{i=1}^N\cup_{e=1}^W\{\pi_i^{r_j}\}$ also increases and so does the size of the alphabet $\Sigma=2^{\mathcal{AP}}$ and the size of $\Sigma_{q_B,q_B'}$. To mitigate this issue, instead of defining the set of atomic propositions as $\mathcal{AP}=\cup_{i=1}^N\cup_{e=1}^W\{\pi_i^{r_j}\}$, we define the set $\mathcal{AP}$ as $\mathcal{AP}=\{\cup_{e}^E\{\xi_e\}\cup\varnothing\}$, where $\varnothing$ stands for the empty word and $E$ denotes the total number of Boolean formulas $\xi_e$ in conjunctive or disjunctive normal form that appear in the LTL formula and are defined over the atomic propositions $\pi_i^{r_j}$. This allows us to construct smaller LTL formulas defined over $2^{\{\cup_{e}^E\{\xi_e\}\cup\varnothing\}}$. 
As a result, the computational cost of creating the NBA and the sets $\Sigma_{q_B,q_B'}^{\text{feas}}$ decreases. In the following case studies, the sets $\Sigma_{q_B,q_B'}$ are computed using the software package \cite{ltl2nbaBelta} that relies on \cite{gastin2001fast} for the construction of the NBA.


 
\begin{figure}[t]
\centering
  \includegraphics[width=1\linewidth]{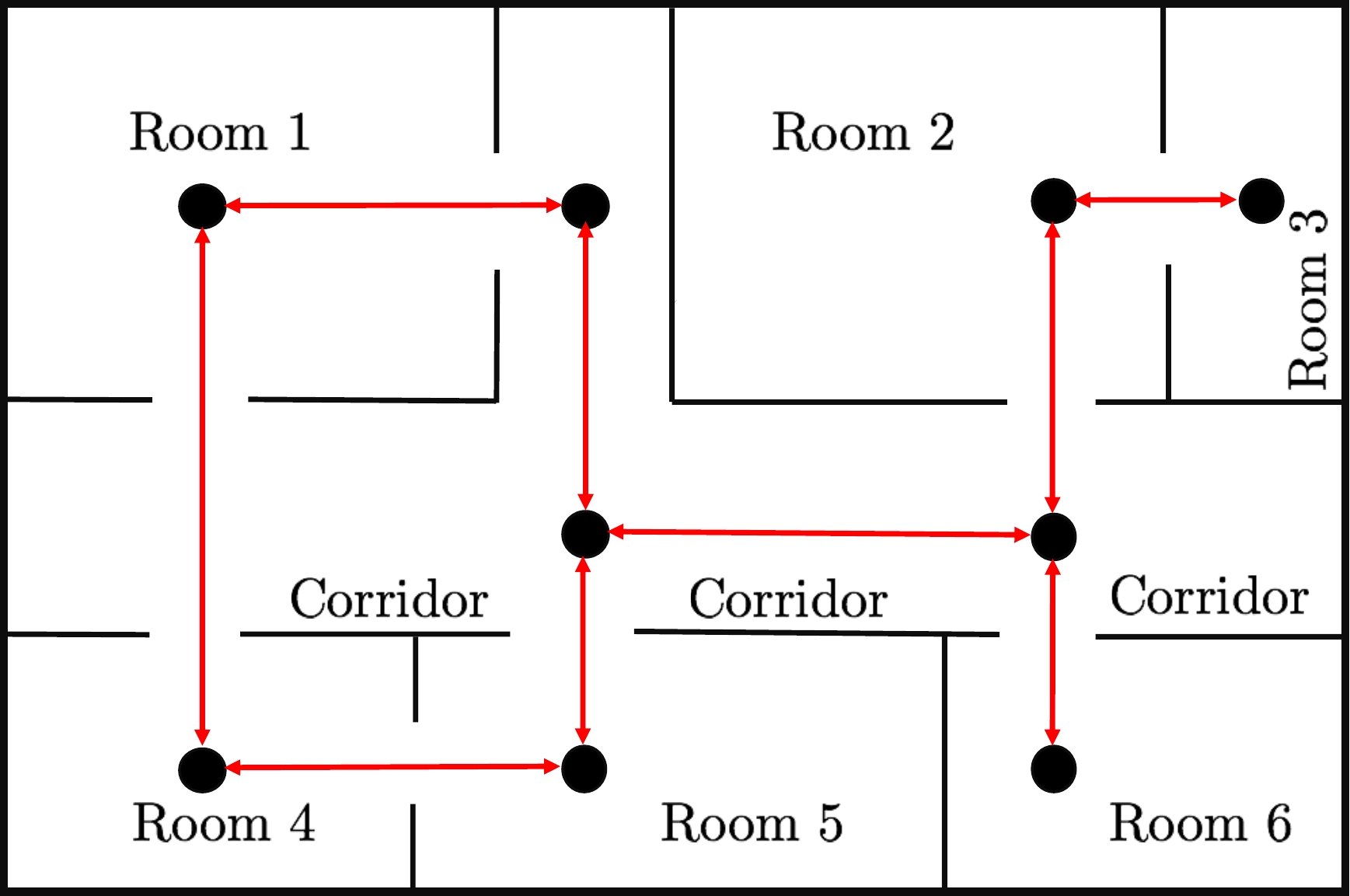}
  \caption{Graphical depiction of the wTS that abstracts robot mobility in an indoor environment for the case study $N=9$ and $|\ccalQ_i|=9$. Black disks stand for the states of wTS and red edges capture transitions among states. The wTS can be viewed as a graph with average degree per node equal to $3$. Self-loops are not shown.}
 \label{fig:constrwts}
\end{figure}

%
\subsection{Completeness, Optimality, and Scalability}\label{sec:cos}
\paragraph{Probabilistic Completeness \& Scalability}In what follows, we examine the performance of $\text{STyLuS}^{*}$ with respect to the number of robots and the size of the wTSs. The results for all considered planning problems are reported in Table \ref{tab:table1}. In Table \ref{tab:table1}, the first, second, and third column show the number $N$ of robots, the size of the state-space $\ccalQ_i$, which is the same for all wTSs, and the size of the state-space of the PBA, defined as $|\ccalQ_P|=|\ccalQ_B|\prod_{i=1}^N|\ccalQ_i|$, respectively. Viewing the wTSs as graphs, the average degree of the nodes of the examined wTSs with $|\ccalQ_i|=9$, $|\ccalQ_i|=10^2$, $|\ccalQ_i|=10^3$,  $|\ccalQ_i|=2.5\cdot 10^3$, and $|\ccalQ_i|=10^4$ is $3$, $12$, $30$, $20$, and $42$, respectively, for all robots $i$ and for each case study in Table \ref{tab:table1}. 
%
The wTSs with $|\ccalQ_i|=9$ is depicted in Figure \ref{fig:constrwts}. All other transition systems are generated randomly.
The fourth and fifth column of Table \ref{tab:table1} show the total number of iterations $n$ of Algorithm \ref{alg:tree} required to find the first prefix and suffix part and the corresponding size of the constructed trees, respectively. The sixth column shows  the time required until the first prefix and suffix part are detected, without executing the rewiring step. The runtime of NuSMV \cite{cimatti2002nusmv} or nuXmv \cite{cavada2014nuxmv} are reported in the last column. The `M', in the last column, means that NuSMV and nuXmv failed to build the model of the multi-robot system and, therefore, we could not proceed to the model-checking process. On the other hand, `F' means that the model was built but the LTL formula is too long, with thousands of logical operators and atomic propositions $\pi_{i}^{r_j}$, to input it to NuSMV or nuXmv in a user-friendly way. Particularly, neither of NuSMV and nuXmv allow to define the LTL specifications in the more compact and symbolic form, discussed before, using the Boolean formulas $\xi_e$. Instead, the LTL formulas have to be expressed explicitly using the atomic propositions $\pi_i^{r_j}$.

For all case studies shown in Table \ref{tab:table1}, we consider the following LTL task 
\begin{align}\label{eq:phi1}
\phi=&G(\xi_1\rightarrow(\bigcirc\neg\xi_1\ccalU \xi_2)) \wedge (\square\Diamond\xi_1) \wedge (\square\Diamond\xi_3) \wedge(\square\Diamond\xi_4) \wedge\nonumber\\& (\neg\xi_1 \ccalU \xi_5) \wedge  (\square\Diamond \xi_5) \wedge (\square\neg\xi_6) \wedge (\Diamond(\xi_7 \vee \xi_8)), 
\end{align}
The LTL formula \eqref{eq:phi1} is satisfied if (i) always when $\xi_1$ is true, then at the next step $\xi_1$ should be false until $\xi_2$ becomes true; (ii) $\xi_1$ is true infinitely often; (iii) $\xi_3$ is true infinitely often; (iv) $\xi_4$ is true infinitely often; (v) $\xi_1$ is false until $\xi_5$ becomes true; (vi) $\xi_6$ is always false; (vii) $\xi_5$ is true infinitely often; and (viii) eventually either $\xi_7$ or $\xi_8$ are true. Recall that, in \eqref{eq:phi1}, $\xi_e$ is a Boolean formula in conjunctive or disjunctive normal form that is true depending on what atomic propositions $\pi_i^{r_j}$ that are true. For instance, for the planning problem with order $10^{21}$ in Table \ref{tab:table1}, $\xi_1$ is defined as 
\begin{align}
\xi_1=&(\pi_1^{r_{100}})\wedge(\pi_8^{r_{20}}\vee\pi_8^{r_{90}}\vee\pi_8^{r_{11}})\nonumber\\&\wedge(\pi_9^{r_1}\vee\pi_9^{r_{10}}\vee\pi_9^{r_{25}}\vee\pi_9^{r_{35}}),\nonumber
\end{align}
which is true if (i) robot 1 is in region $r_{100}$; (ii) robot 8 is in one of the regions  $r_{20}$, $r_{90}$, and $r_{11}$; and (iii) robot 9 is in one of the regions  $r_{1}$, $r_{10}$, $r_{25}$, and $r_{35}$. All other Boolean formulas $\xi_e$ are defined similarly. Similar LTL formulas are widely considered in the relevant literature as they can capture a wide range of robot tasks; see e.g., \cite{guo2015multi}. For instance, the subformula $(\square\Diamond\xi_1) \wedge (\square\Diamond\xi_3) \wedge(\square\Diamond\xi_4) \wedge (\Diamond(\xi_7 \vee \xi_8))$ can capture surveillance and data gathering tasks whereby the robots need to visit certain locations of interest infinitely often to collect data \cite{guo2017distributed,smith2011optimal}. Also, it can capture intermittent connectivity tasks where the robots need to meet infinitely often at communication points to exchange their collected information  \cite{kantaros2016distributedInterm,kantaros2019temporal}. Specifically, in this task, the robots are divided into overlapping robot teams and all robots of each team need to meet infinitely often at communication points that are optimally selected to exchange  the information. Also, the term $(\square\neg\xi_6)$ can capture obstacle avoidance (or collision avoidance) constraints that require the robots to always avoid obstacles. Moreover, $G(\xi_1\rightarrow(\bigcirc\neg\xi_1\ccalU \xi_2))$ can be used to assign priorities to different subtasks. For instance, if $\xi_1$  requires the robots to visit food/water resources and  $\xi_2$ requires the robots to deliver those resources to the end users. Then, the formula $G(\xi_1\rightarrow(\bigcirc\neg\xi_1\ccalU \xi_2))$ requires the robots to never re-gather/visit the resources without first delivering supplies to the users. The subformula $\neg\xi_1 \ccalU \xi_5$ can be interpreted similarly.

The LTL formula \eqref{eq:phi1} corresponds to an NBA with $|\ccalQ_B|=21$, $|\ccalQ_B^0|=1$, $|\ccalQ_B^F|=2$, and $125$ transitions. Note that both final states of the NBA are feasible. Given the NBA, we construct the sets $\Sigma_{q_B,q_B'}^{\text{feas}}\subseteq 2^{\mathcal{AP}}$ in 0.1 seconds approximately for all case studies in Table \ref{tab:table1}. 
To speed up the detection of the first feasible plan, we do not execute the rewiring step until the first final state and a loop around it are detected. Due to this modification, $\text{STyLuS}^{*}$ can only generate feasible plans; see the proof of~Theorem~\ref{thm:convOpt}.
%
 
Observe in Table \ref{tab:table1}, that $\text{STyLuS}^{*}$, NuSMV, and nuXmv have comparable performances when both a small number of robots and small enough wTSs are considered; see e.g., the planning problems with order $10^3$, $10^{10}$, and $10^{21}$. However, as either the number of robots or the size of the wTSs increases, $\text{STyLuS}^{*}$ outperforms both NuSMV and nuXmv in terms of both runtime and the size of problems that it can handle. For instance, both NuSMV and nuXmv fail to build the model when $N=1$ and $|\ccalQ_i|=10^4$, with average degree of the wTS equal to $42$. As a result, it was impossible to synthesize a feasible motion plan with NuSMV or nuXmv using either symbolic techniques (Binary Decision Diagrams) or bounded model checking options. In fact, for this single-robot planning problem, NuSMV could build the model only for much sparser wTSs with average degree equal to $4$. In this case, the model was built in 15 minutes and the verification process finished in almost 1 second. Moreover, SPIN, an explicit state model checker, also failed to build this model due to excessive memory requirements \cite{holzmann2004spin}; comparisons and trade-offs between NuSMV an SPIN can be found in \cite{fraser2009evaluation,choi2007nusmv}. 
On the other hand, for such sparse wTSs, $\text{STyLuS}^{*}$ synthesized a feasible prefix and suffix part in 16 and 14 seconds, respectively.
%
%
%

%
%
Moreover, observe in Table \ref{tab:table1} that $\text{STyLuS}^{*}$ can synthesize feasible motion plans for large-scale multi-robot systems quite fast. For instance, it can synthesize in almost 35 minutes a feasible team plan $\tau$ for a team of $N=200$ robots where each robot is modeled as a transition system with $|\ccalQ_i|=10^4$ states and the PBA has $|\ccalQ_P|=21\times(10^4)^{200}\approx10^{801}$ states. Finally, note that the runtimes reported in Table \ref{tab:table1} can be significantly improved if $\text{STyLuS}^{*}$ is executed in parallel, as in \cite{kantaros2017Dsampling}, and if the shortest paths required by each robot in the sampling step are constructed simultaneously (and not sequentially) across the robots, as also discussed in Section \ref{sec:complexity}. 
Note also that the LTL formulas for the problems that involve more than $100$ robots include thousands of logical operators and atomic propositions $\pi_i^{r_j}$ and, therefore, they are too long to input them to NuSMV and nuXmv in a user-friendly way. 
Also, recall that NuSMV and nuXmv can only generate feasible, and not optimal, plans; see also Section VI in \cite{kantaros2017Csampling} for relevant comparative simulation studies.

\begin{figure}[t]
  \centering
     \subfigure[]{
    \label{fig:iterspars1}
  \includegraphics[width=0.48\linewidth]{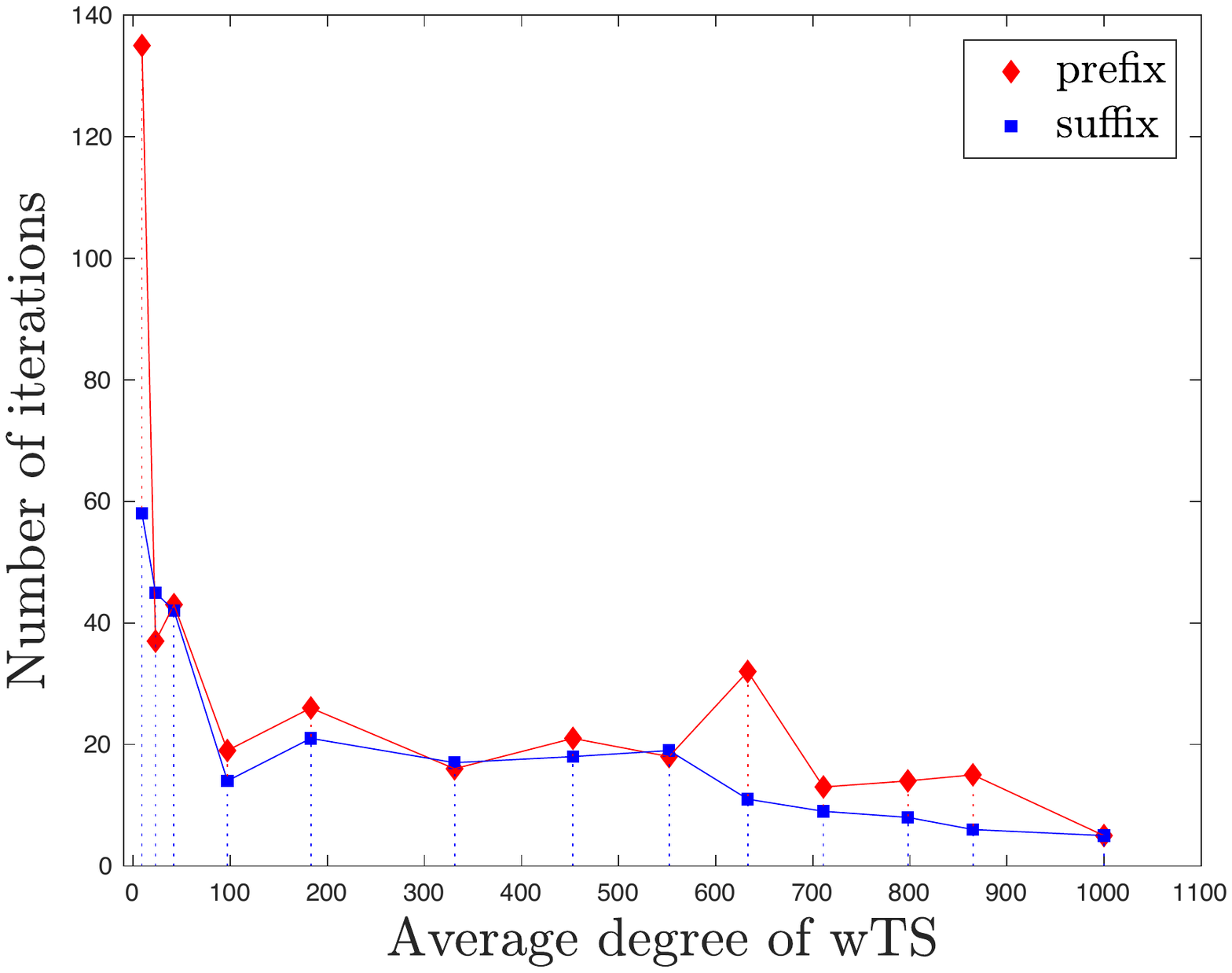}}
  \subfigure[]{
    \label{fig:runtimespars}
  \includegraphics[width=0.48\linewidth]{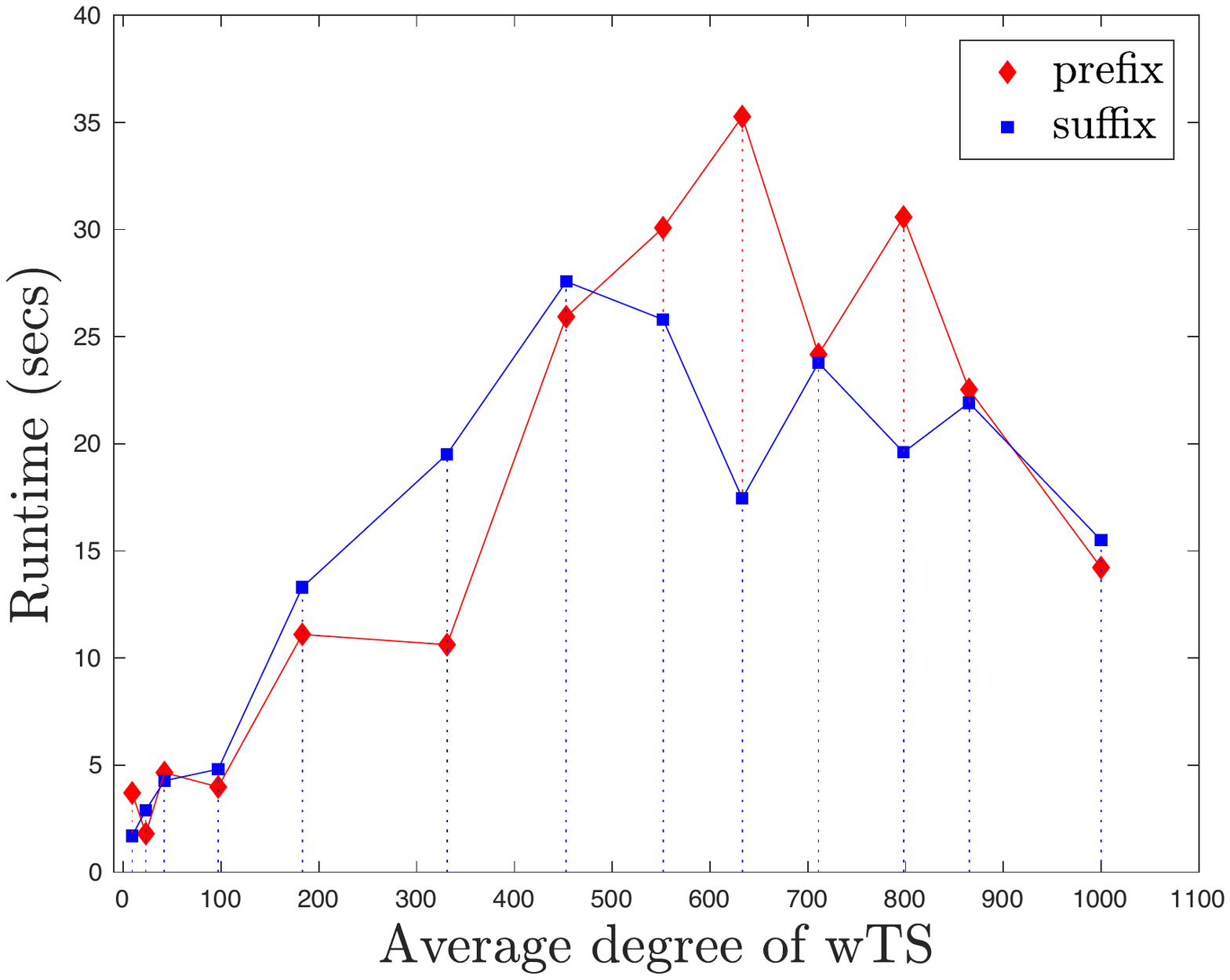}}
  \caption{Case Study I [$N=10$ robots, $|\ccalQ_i|=1000$ states]: Figures \ref{fig:iterspars1} and \ref{fig:runtimespars} show the required number of iterations and total runtime required to construct the first feasible prefix-suffix plan, respectively.}
  \label{fig:sparsity}
\end{figure}

\begin{figure}[t]
  \centering
     \subfigure[]{
    \label{fig:iterspars2}
  \includegraphics[width=0.48\linewidth]{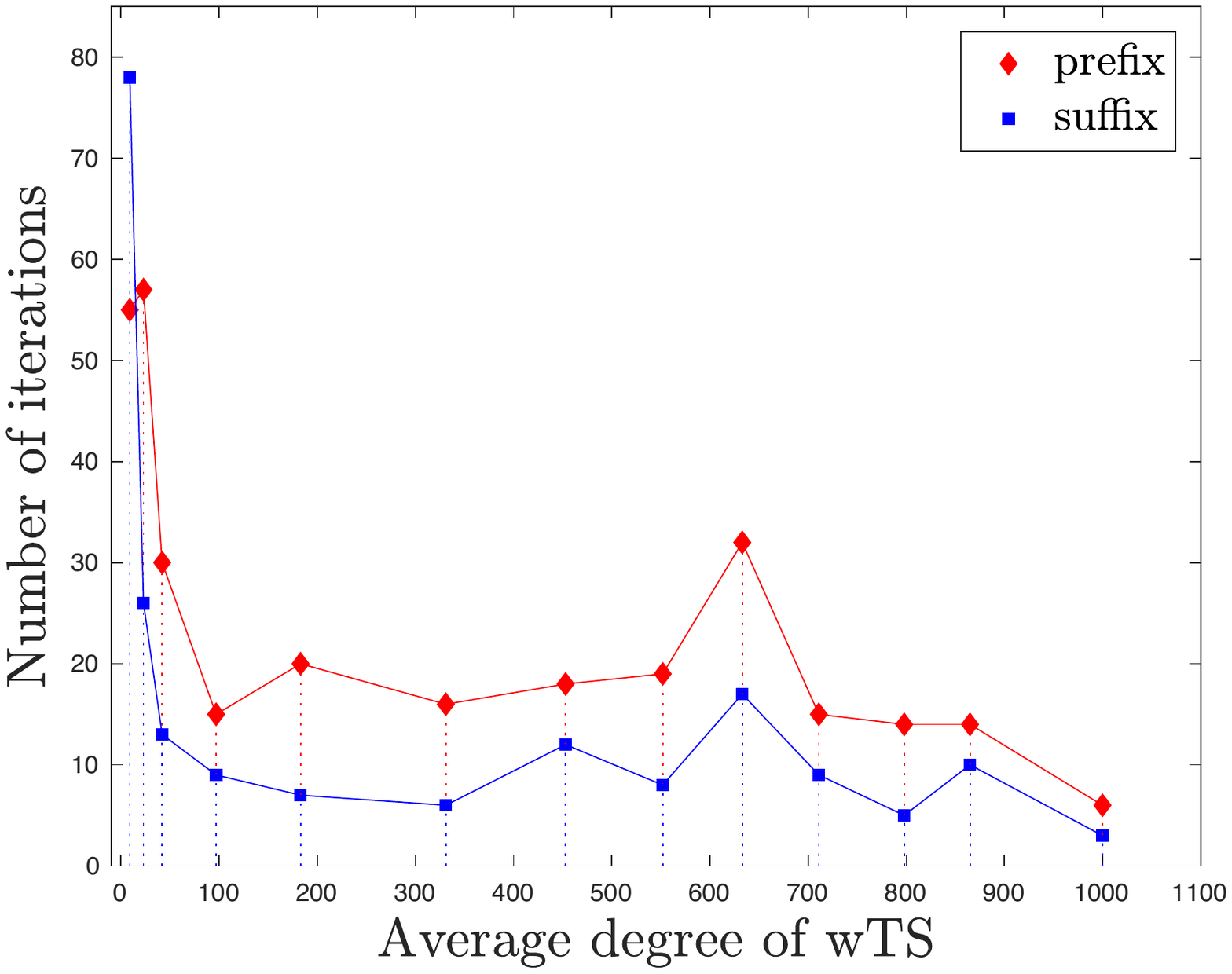}}
  \subfigure[]{
    \label{fig:runtimespars2}
  \includegraphics[width=0.48\linewidth]{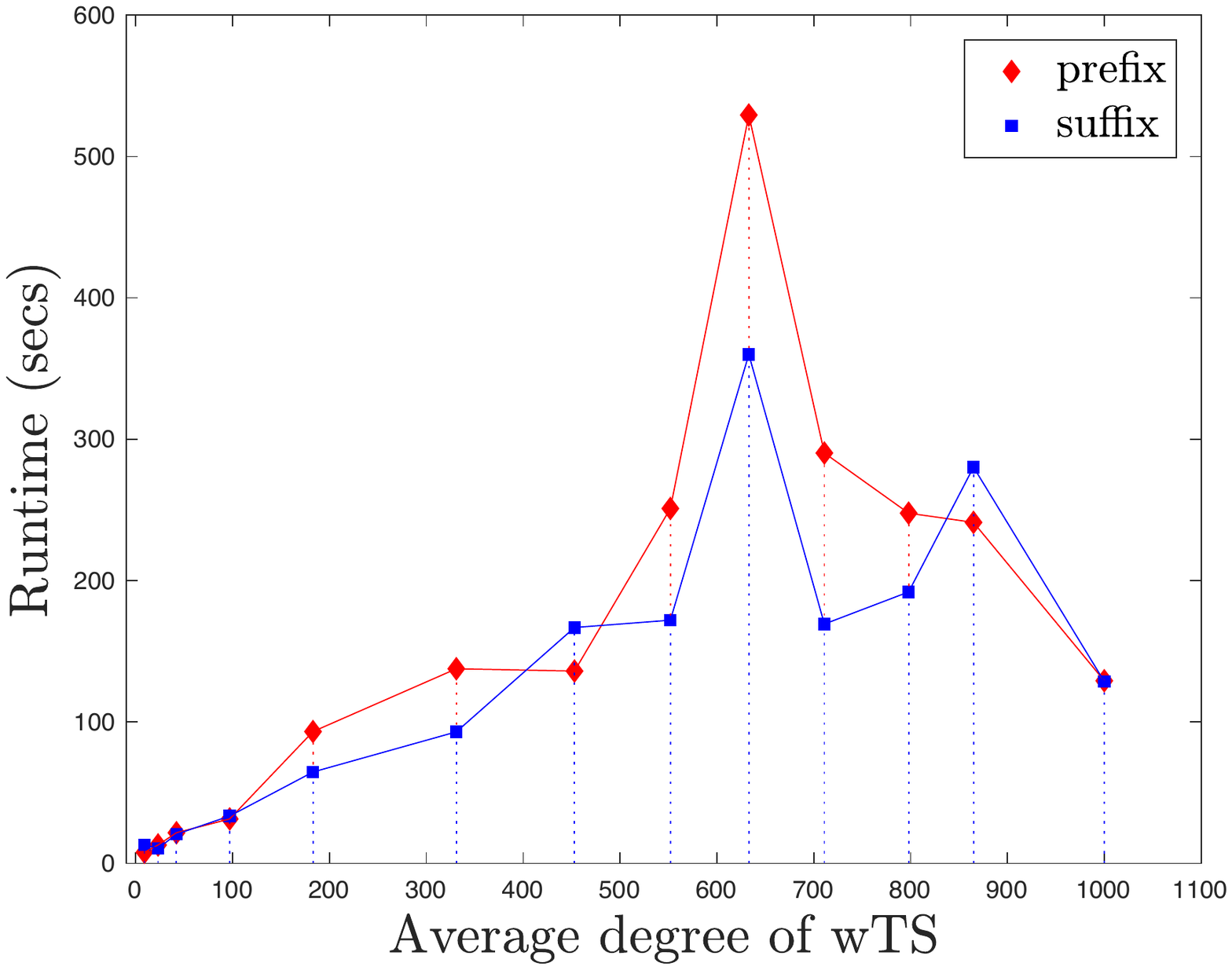}}
  \caption{Case Study I [$N=100$ robots, $|\ccalQ_i|=1000$ states]: Figures \ref{fig:iterspars2} and \ref{fig:runtimespars2} show the required number of iterations and total runtime required to construct the first feasible prefix-suffix plan, respectively. }
  \label{fig:sparsity2}
\end{figure} 

In Figures \ref{fig:sparsity} and \ref{fig:sparsity2}, we examine the performance of $\text{STyLuS}^{*}$ with respect to the sparsity of the transition systems $\text{wTS}_i$, viewing them as graphs, for the planning problems of Table \ref{tab:table1} with order $10^{31}$ and $10^{300}$, respectively. Specifically, in Figure \ref{fig:iterspars1}, we observe that as the wTSs become denser, the required number of iterations to find the first feasible prefix-suffix plan decreases, since the length of the shortest paths in the wTSs, that are used in the sampling process, decreases. We also observe in Figure \ref{fig:runtimespars}  that as the wTSs become denser, the total runtime to find the first feasible prefix-suffix plan initially increases and eventually starts to decrease. The reason is that as the wTSs become denser, the required shortest paths are more difficult to find and, therefore, the computational cost per iteration $n$ of generating the states $q_P^{\text{new}}$ increases; see also Section \ref{sec:complexity}. However, as the wTSs becomes fully connected the total runtime starts to decrease, since only few iterations are required to find a feasible plan, as discussed before. The same observations apply to Figure \ref{fig:sparsity2}, as well, that pertains to the planning problem with order $10^{301}$.

\paragraph{Switching between sampling densities} 
Since optimality depends on the discovery of prefix plans to all feasible final states, bias can be used sequentially to every feasible final state $q_B^{\text{F,feas}}$ to discover such plans, as also discussed in Remark \ref{rem:biasPre}.
Once all feasible final states $q_B^{\text{F,feas}}$ have been detected, or after a pre-determined number of iterations $n$, the rewiring step is activated and uniform sampling is used to better explore the PBA towards all directions in $\ccalQ_P$. Instead, biased sampling favors exploration towards the shortest paths that lead to the final states. 

Note also that the proposed method finds initial feasible plans with small enough cost $J(\tau)$, even though the rewiring step is not activated, due to the biased sampling that favors wTS transitions that correspond to the shortest NBA paths towards $q_B^{\text{F,feas}}$. For instance, observe in Figure \ref{fig:costComp}, which concerns the planning problem with order $10^{10}$, that the cost of the best prefix part decreased from $415.48$ to $314.58$ meters within 5.5 hours. In total, $156$ final states were detected and the first one was found after $0.42$ seconds. On the other hand, using uniform sampling distributions from the beginning, the cost of the best prefix part decreased from $823.96$ to $706.18$ meters within 5.5 hours. In this case, 4 final states were detected and the first one was found after $1.34$ hours.

\paragraph{Asymptotic Optimality} 
Next, we validate the asymptotic optimality of of $\text{STyLuS}^{*}$ discussed in Theorem \ref{thm:alg1}.
Specifically, we consider a team of $N=2$ robots modeled by wTSs with $|\mathcal{Q}_i|=16$ states and $70$ transitions, including self-loops around each state; see Figure \ref{fig:optimWTS}. The assigned task is expressed in the following temporal logic formula. 

\begin{figure}[t]
\centering
  \includegraphics[width=0.8\linewidth]{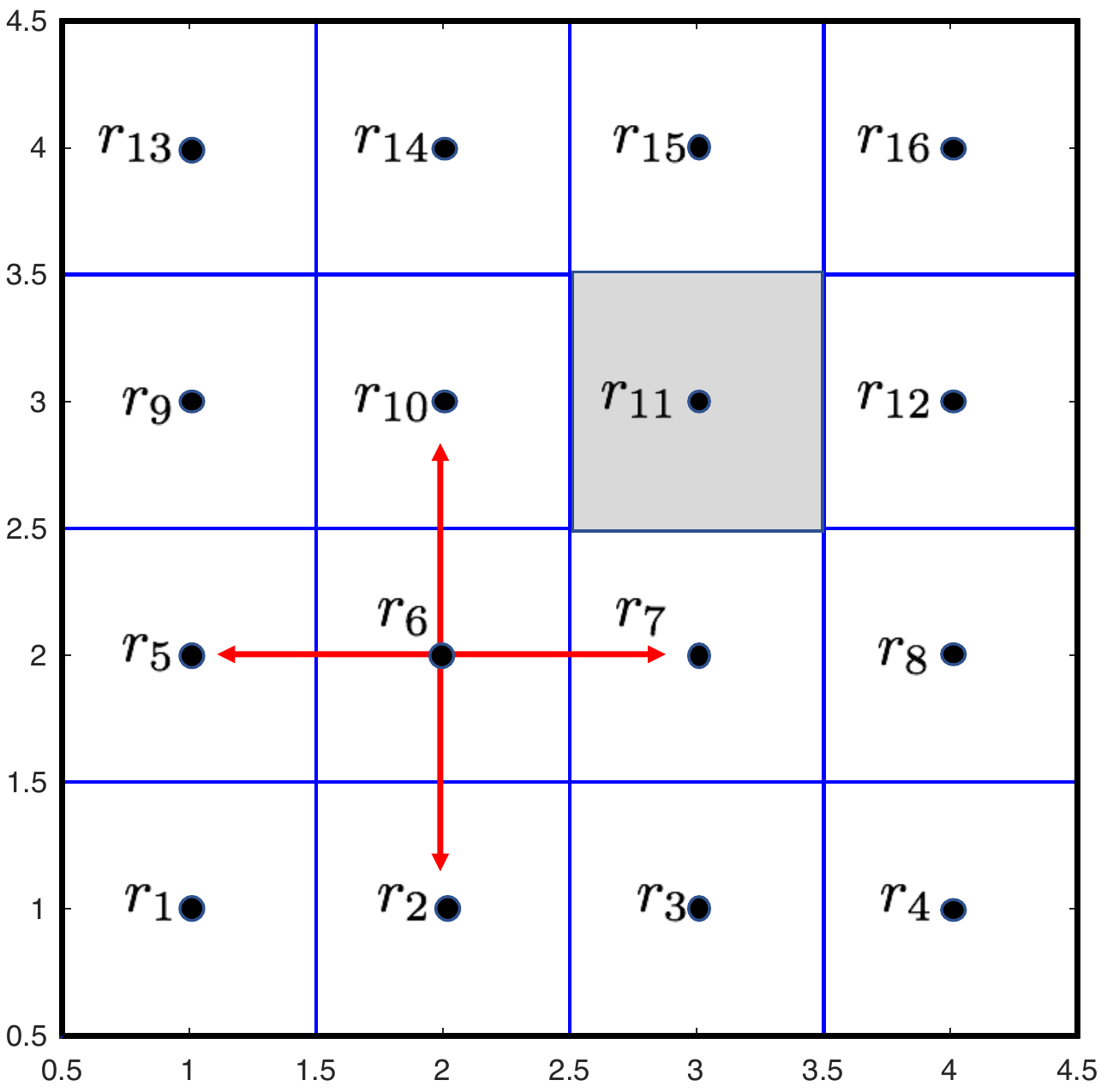}
  \caption{Graphical depiction of the wTS considered for the task defined in \eqref{eq:task2}. The gray cell corresponds to an obstacle that should always be avoided. In each state, a root can take four actions as illustrated by the arrows.}
 \label{fig:optimWTS}
\end{figure}

\begin{align}\label{eq:task2}
\phi&= \square\Diamond(\pi_1^{r_6}\wedge\Diamond(\pi_2^{r_{14}}))\wedge\square(\neg\pi_1^{r_9})\wedge\square (\neg\pi_1^{r_{11}})\wedge\square (\neg\pi_2^{r_{11}})\nonumber\\&\wedge\square(\pi_2^{r_{14}}\rightarrow \bigcirc(\neg\pi_2^{r_{14}}\ccalU\pi_1^{r_{4}}))\wedge (\Diamond\pi_2^{r_{12}}) \wedge (\square\Diamond\pi_2^{r_{10}})
\end{align}
In words, this LTL-based task requires (a) robot 1 to visit location $r_6$, (b) once (a) is true robot 2 to visit location $r_{14}$, (c) conditions (a) and (b) to occur infinitely often, (d) robot 1 to always avoid location $r_9$, (e) once robot 2 visits location $r_{14}$, it should avoid this area until robot 1 visits location $r_{4}$, (f) robot 2 to visit location $r_{12}$ eventually, (g) robot 2 to visit location $r_{10}$ infinitely often and (h) both robots should always avoid the obstacle in region $r_9$. The NBA that corresponds to the LTL specification \eqref{eq:task2} has $|\mathcal{Q}_B|=24$ states with $|\ccalQ_B^0|=1$, $|\ccalQ_B^F|=4$, and $163$ transitions. Specifically, the set of final states is defined as $\ccalB_F=\{6,9,11,13\}$ and all of them are feasible. Also, the state space of the corresponding PBA consists of $\Pi_{i=1}^N|\mathcal{Q}_i||\mathcal{Q}_B|=6,144$ states, which is small enough so that the existing optimal control synthesis methods discussed in Section \ref{sec:prelim} can be used to find the optimal plan.  Figure \ref{fig:asOpt} shows the evolution of the cost of the best prefix part constructed by $\text{STyLuS}^{*}$ with respect to iterations $n$ and time. Observe in Figure \ref{fig:asOpt2} that as $n$ increases, $\text{STyLuS}^{*}$ finds the optimal prefix part which has cost $5$ meters. The required runtime to find the optimal prefix part is illustrated in Figure \ref{fig:asOpt1}. On the other hand, the cost of the prefix part constructed by NuSMV and nuXmv is  $8.8824$ meters.
\begin{figure}[t]
\centering
  \subfigure[]{
    \label{fig:asOpt2}
  \includegraphics[width=0.48\linewidth]{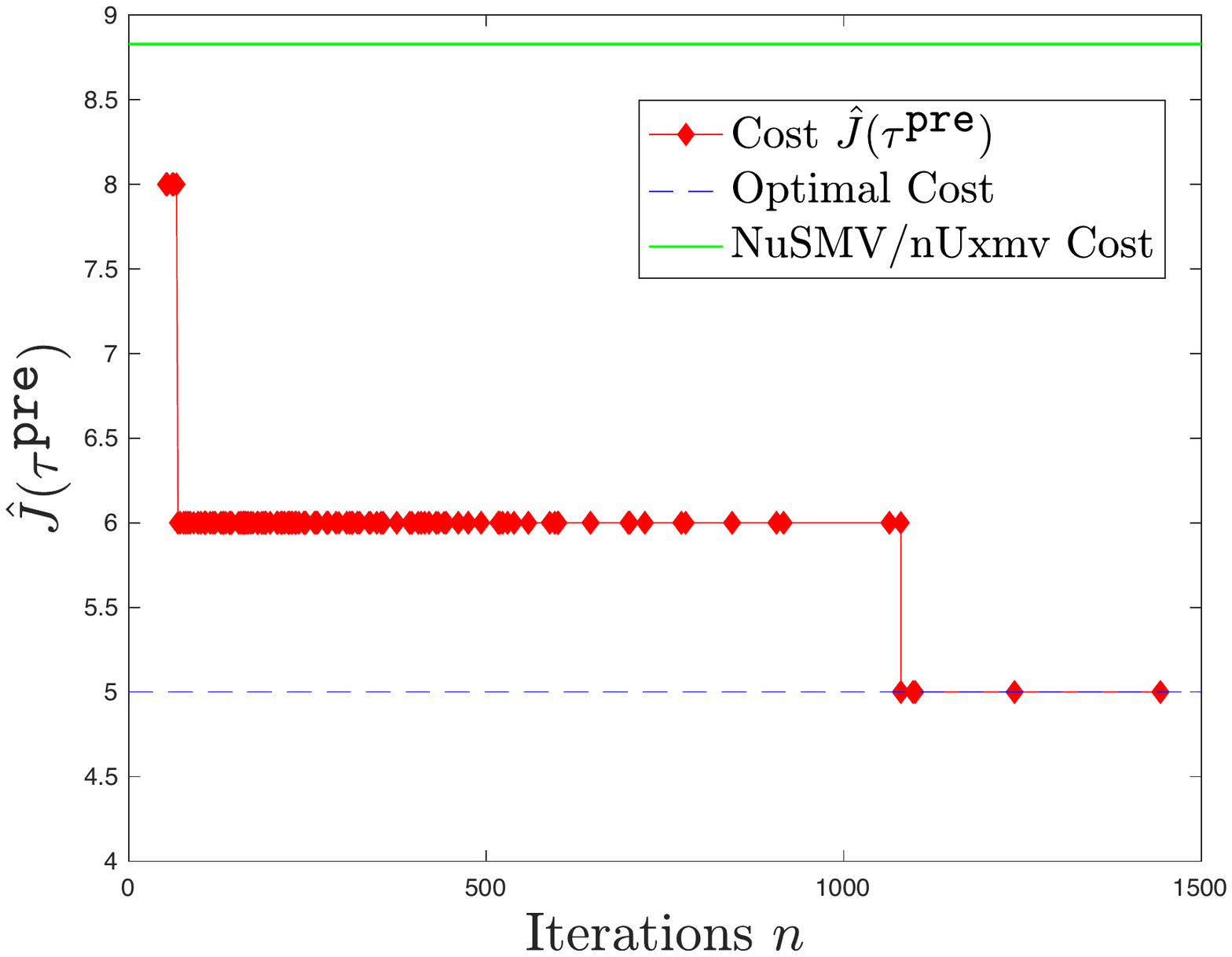}}
     \subfigure[]{
    \label{fig:asOpt1}
  \includegraphics[width=0.48\linewidth]{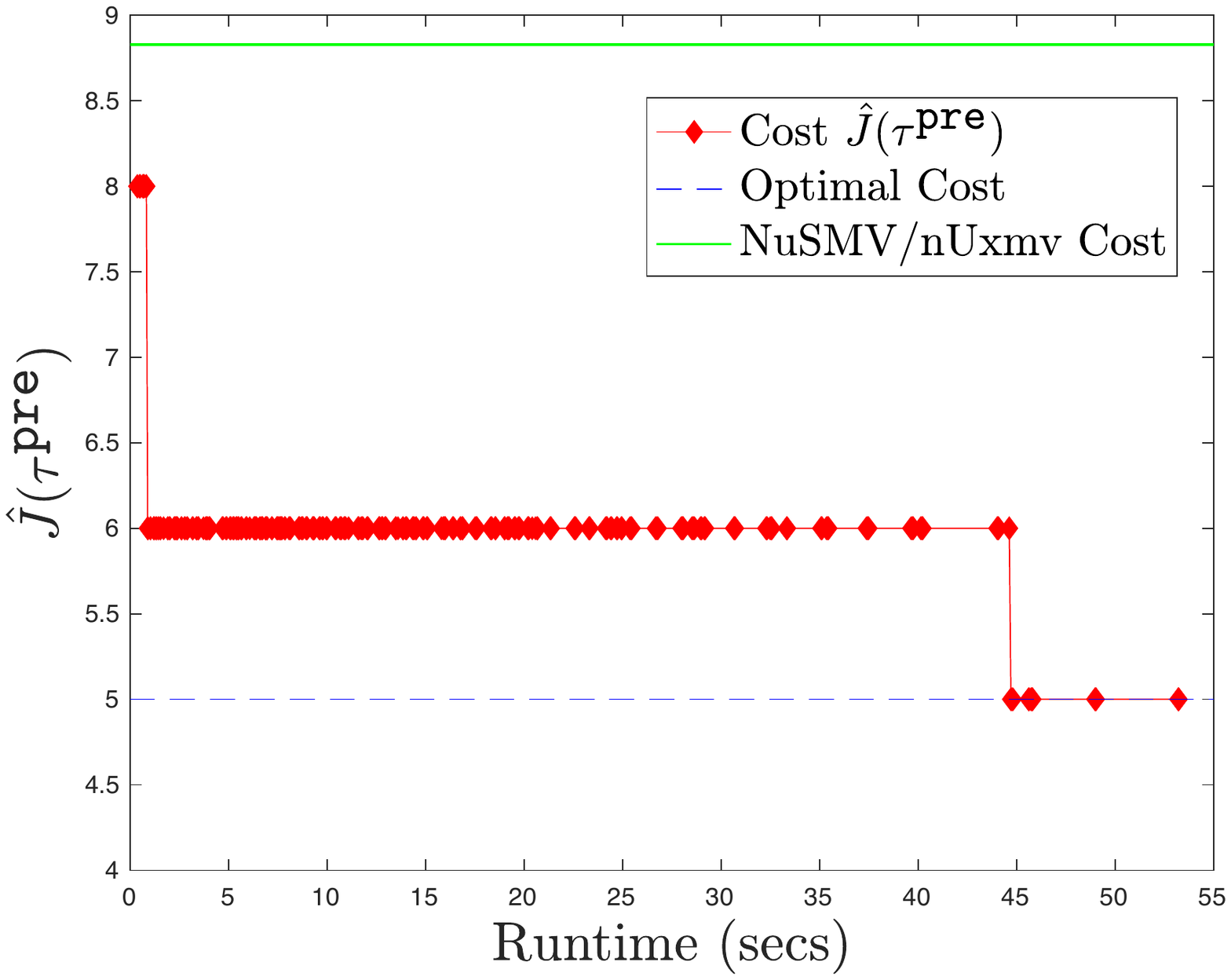}}
  \caption{Case Study III: Figures \ref{fig:asOpt2} and \ref{fig:asOpt1} show the evolution the cost of the best prefix part constructed by $\text{STyLuS}^{*}$ with respect to iterations $n$ and runtime, respectively. Red diamonds denote a new final state detected by $\text{STyLuS}^{*}$. To generate the results of Figure \ref{fig:asOpt} the rewiring step is activated.}
  \label{fig:asOpt}
  \end{figure}

\begin{figure}[t]
  \centering
    \label{fig:iterspars}
  \includegraphics[width=0.8\linewidth]{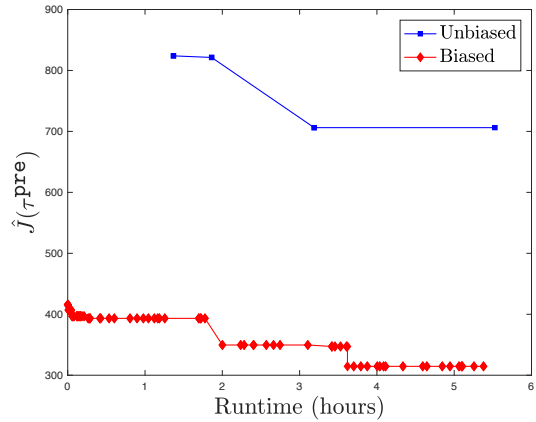}
  \caption{Case Study I [$N=9$ robots, $|\ccalQ_i|=9$ states]: Comparison of the cost of the best prefix part constructed when biased (red line) and unbiased (blue line) from the beginning of $\text{STyLuS}^{*}$ are employed with respect to time. The cost of the best prefix part is reported every time a new final state is detected. Red diamonds and blue squares denote a new final state detected by $\text{STyLuS}^{*}$.}
  \label{fig:costComp}
\end{figure}

\subsection{Scalability for larger and denser NBA}  \label{sec:largeNBA}

\begin{table*}[t]
\caption{\footnotesize{Feasibility and scalability analysis: $|\ccalQ_B|=59$}}\label{tab:table2}
\begin{center}
\begin{tabular}{c|c|c|c|c|c|c|c|}
\cline{1-7}
		\multicolumn{1}{|c|}{$N$}		&
						  $|\ccalQ_i|$ & $|\ccalQ_{\text{P}}|$ & $n_{\text{Pre1}} + n_{\text{Suf1}}$ & $|\ccalV_T^{\text{Pre1}}| + |\ccalV_T^{\text{Suf1}}|$ & \texttt{Pre1+Suf1}  & NuSMV/nuXmv \\ \hline		
	\multicolumn{1}{|c|}{$1$}		& $100$ &  $10^{3}$ & 54 + 92 & 533 + 274 & 2.18 + 1.55 (secs)   & $<1$ sec\\ \hline			
	\multicolumn{1}{|c|}{$1$}		& $1000$ & $10^{3}$ & 78 +51  & 326 + 252 & 1.84 + 1.37 (secs) & $<1$ sec \\ \hline				
	\multicolumn{1}{|c|}{$1$}		& $10000$ & $10^{4}$  &150 + 107 & 769 + 364& 19.2 + 11.2 (secs) & M/M \\ \hline
	\multicolumn{1}{|c|}{$9$}		& $9$ &  $10^{10}$& 93 + 27 & 400 + 168 &20.7 + 18.9 (secs) & $<1$ sec\\ \hline
	\multicolumn{1}{|c|}{$10$}		& $100$ & $10^{21}$  &51+ 39 & 650 + 239 & 2.1 + 0.74 (secs) & $\approx$ 3/2 secs \\ \hline			
	\multicolumn{1}{|c|}{$10$}		& $1000$ & $10^{31}$  & 36 + 154 & 450 + 404 & 3.9 + 6.1 (secs) & $\approx$ 80/65 secs \\ \hline
		\multicolumn{1}{|c|}{$10$}		& $2500$ & $10^{35}$  & 61 + 98 & 710 + 516 &10.4 + 11.9 (secs) & M/$\approx 1920$ secs \\ \hline
	\multicolumn{1}{|c|}{$10$}		& $10000$ & $10^{41}$  & 47 + 164  & 722+604 &56.6 + 98.1(secs) & M/M \\ \hline
	\multicolumn{1}{|c|}{$100$}	&  $100$ & $10^{200}$  &21 + 117 & 154 + 1431 & 1.6 + 18.5 (secs) & F/F \\ \hline
	\multicolumn{1}{|c|}{$100$}	& $1000$  & $10^{300}$ &52 + 74 & 401 + 856 &19.8 + 53.32 (secs) & M/M \\ \hline
     \multicolumn{1}{|c|}{$100$}  & $10000$  & $10^{400}$  & 39 + 89 & 398 + 1621 &306.51 + 1698.12 (secs) & M/M\\  \hline
     \multicolumn{1}{|c|}{$150$}  & $10000$  & $10^{600}$  & 39 + 112 & 526+1864 & 498.12 + 3606.61 (secs) & M/M\\  \hline
     \multicolumn{1}{|c|}{$200$}  & $10000$  & $10^{800}$   & 48 + 103 & 588 + 1926 &702.11 + 3954.21 (secs) & M/M\\  \hline
\end{tabular}
\end{center}
\end{table*}


In this case study, we examine the performance of $\text{STyLuS}^{*}$ with respect to the number of the robots and the size of the wTSs, as in the previous case study, but also for a larger and denser NBA. The results are reported in Table \ref{tab:table2} which has the same structure as Table \ref{tab:table1}. For all case studies shown in Table \ref{tab:table2}, we consider the following LTL task 
\begin{align}\label{eq:phi1}
\phi=&G(\xi_1\rightarrow(\bigcirc\neg\xi_1\ccalU \xi_2)) \wedge (\square\Diamond\xi_1) \wedge (\square\Diamond\xi_3) \wedge(\square\Diamond\xi_4) \wedge\nonumber\\& (\neg\xi_1 \ccalU \xi_5) \wedge (\square\neg\xi_6) \wedge (\square\Diamond(\xi_7 \wedge (\Diamond\xi_8\wedge(\Diamond\xi_5)))), 
\end{align}
The LTL formula \eqref{eq:phi1} is satisfied if (i) always when $\xi_1$ is true, then at the next step $\xi_1$ should be false until $\xi_2$ becomes true; (ii) $\xi_1$ is true infinitely often; (iii) $\xi_3$ is true infinitely often; (iv) $\xi_4$ is true infinitely often; (v) $\xi_1$ is false until $\xi_5$ becomes true; (vi) $\xi_6$ is always false; and (vii) $\xi_7, \xi_8$ and $\xi_5$ are true in this order infinitely often. Also, the LTL formula \eqref{eq:phi1} corresponds to an NBA with $|\ccalQ_B|=59$, $|\ccalQ_B^0|=1$, $|\ccalQ_B^F|=8$, among which 6 final states are feasible, and $884$ transitions. Given the NBA, we construct the sets $\Sigma_{q_B,q_B'}^{\text{feas}}\subseteq 2^{\mathcal{AP}}$ in $0.14$ seconds approximately for all case studies in Table \ref{tab:table2}. 

Observe in Table \ref{tab:table2} that the total number of iterations and the total runtime to detect the first feasible plan have increased compared to the corresponding planning problems in Table \ref{tab:table1}, due to the larger size of the NBA. Moreover, observe that both NuSMV and nuXmv can find the first feasible plan faster than $\text{STyLuS}^{*}$ when planning problems with few robots and small wTSs are considered; see, e.g., the problems with order $10^3$, $10^{10}$, $10^{21}$ in Table \ref{tab:table2}. However, similar to the previous case study, when the number $N$ of robots or the size of the wTSs increases, $\text{STyLuS}^{*}$ outperforms both model checkers both in terms of runtime and  the size of the state-space that they can handle.

 \begin{rem}[Self-loops in wTSs]
Note that all wTSs considered in Tables \ref{tab:table1}-\ref{tab:table2} are constructed so that there are self-loops around each state, modeling in this way waiting actions of robots at all regions $r_j$. If self-loops are removed, the number of required iterations and respective runtime increase, as then the length of the shortest loop around every state increases (from its previous value of $1$). For instance, for the planning problem with order $10^{21}$ in Table \ref{tab:table1}, after removing all self-loops, the total number of iterations required to detect the first feasible prefix and suffix part have increased from $n_{\text{Pre1}}=33$ and $n_{\text{Suf1}}=33$ to $n_{\text{Pre1}}=533$ and $n_{\text{Suf1}}=991$, respectively. Similarly, the corresponding runtimes have increased from $0.7$ and $0.4$ seconds to $7.1$ and $10.6$ seconds. On the other hand, NuSMV and nuXmv can synthesize feasible motion plans in almost $3$ seconds. Adding or removing self-loops
does not seem to affect much the runtime/scalability of NuSMV and nuXmv.
\end{rem}
%

\begin{figure*}[t]
\centering
  \subfigure[$q_P^K=(q_\text{PTS}^K,q_B^K)=((13,10),6),~p_1,~K=13$]{
    \label{fig:proba1}
  \includegraphics[width=0.33\linewidth]{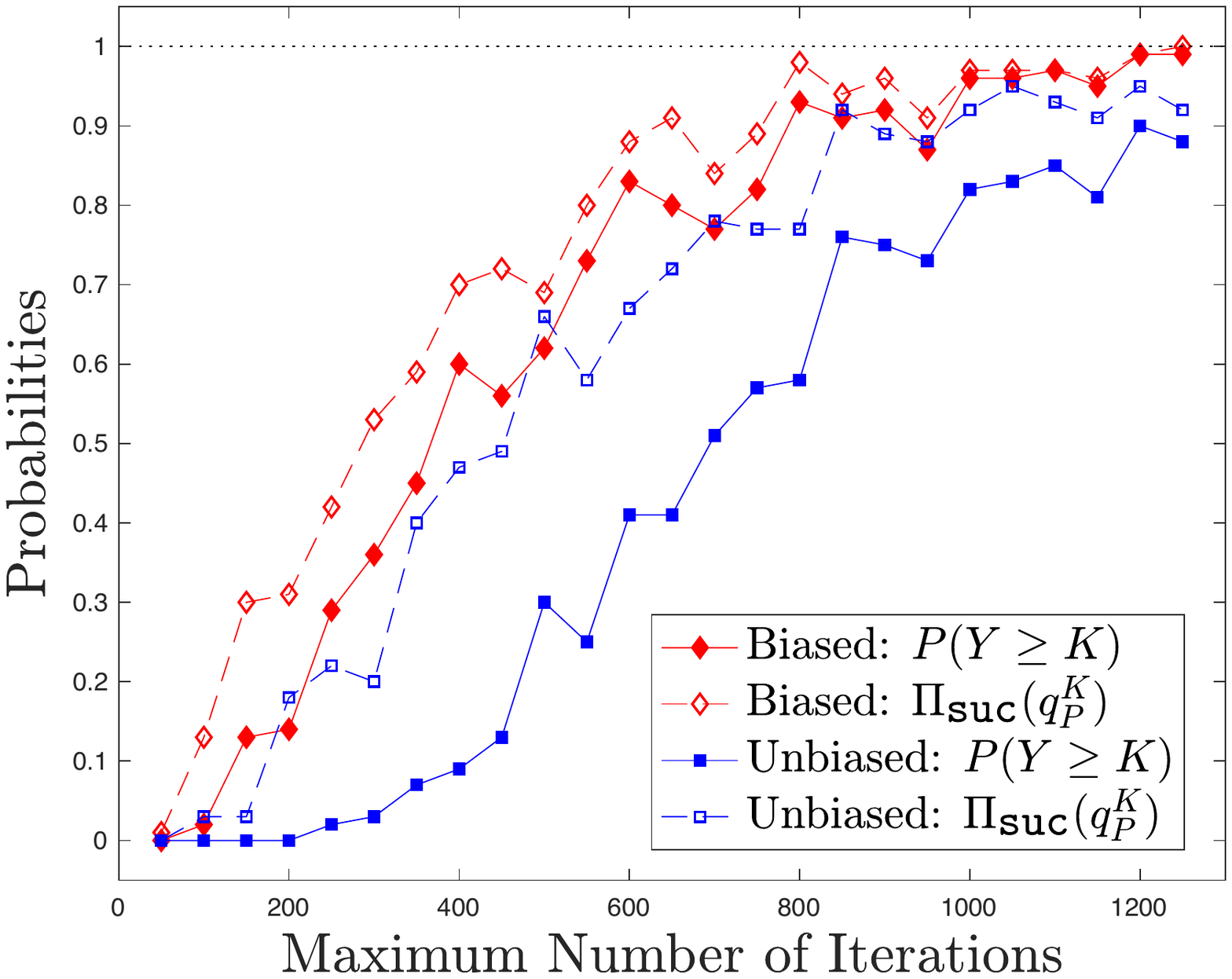}}
     \subfigure[$q_P^K=(q_\text{PTS}^K,q_B^K)=((6,10),9),~p_2, K=6$]{
    \label{fig:proba2}
  \includegraphics[width=0.33\linewidth]{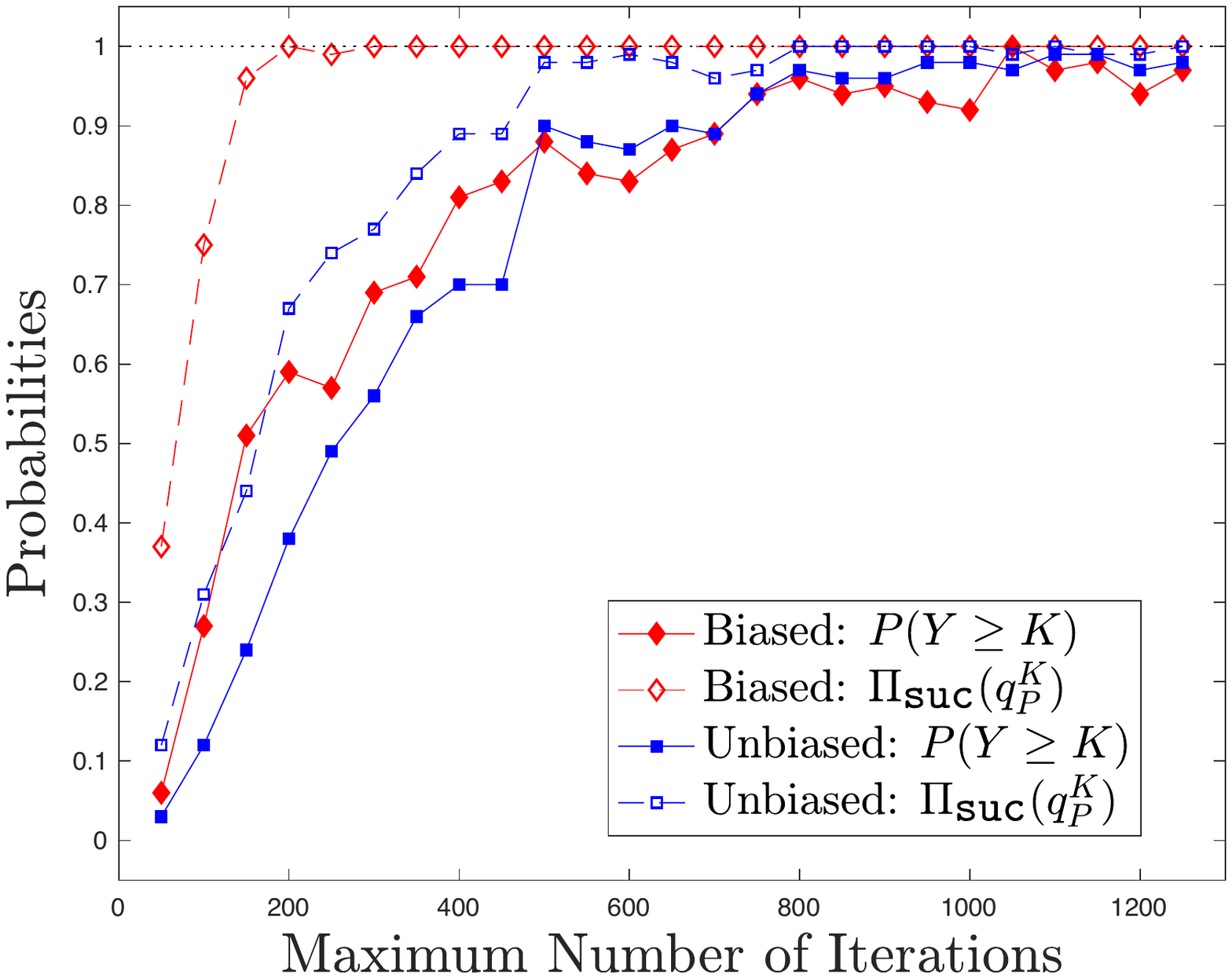}}
       \subfigure[$q_P^K=(q_\text{PTS}^K,q_B^K)=((13,15),13),~p_3, K=7$]{
    \label{fig:proba3}
  \includegraphics[width=0.33\linewidth]{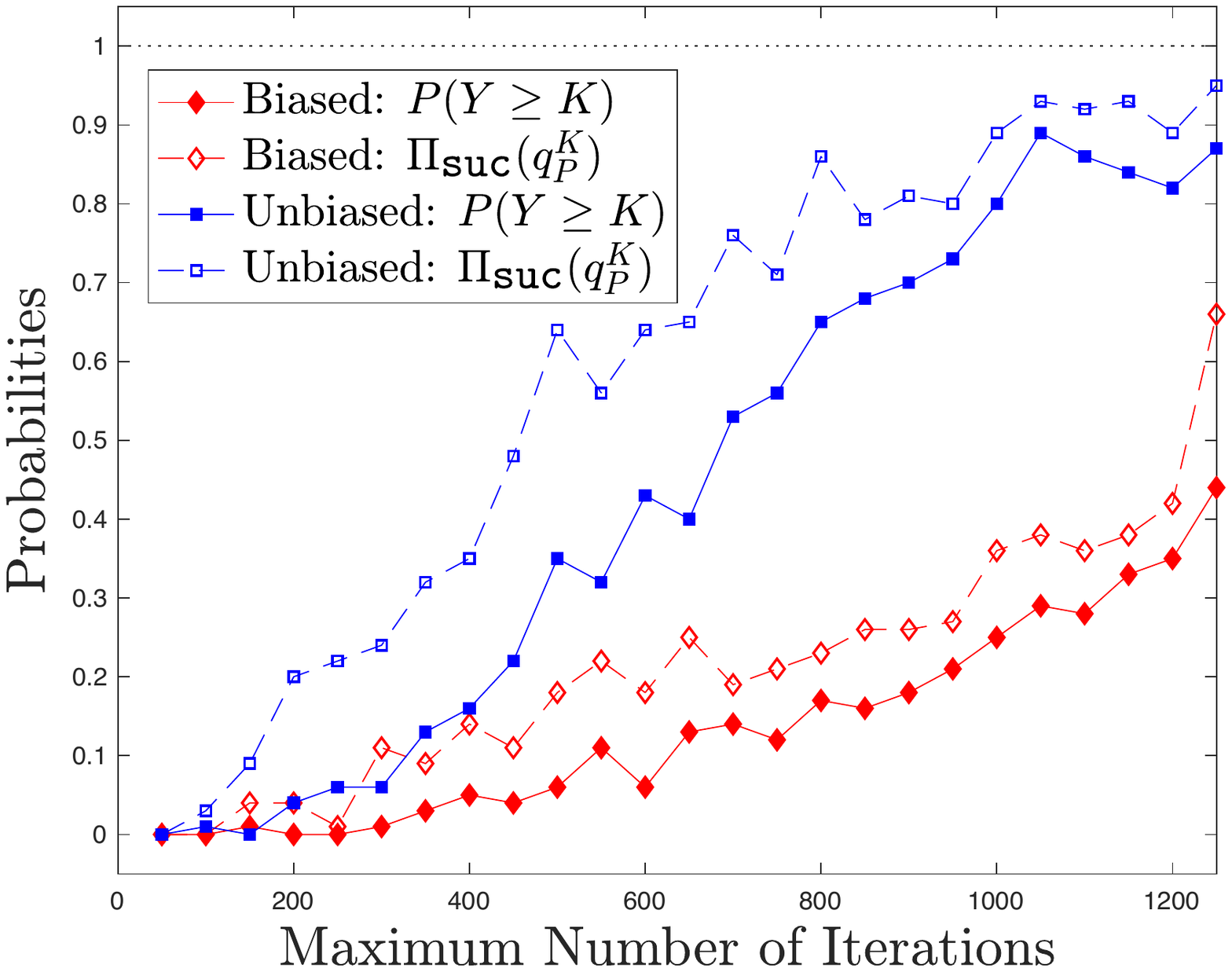}}\\
  \subfigure[$q_P^K=(q_\text{PTS}^K,q_B^K)=((13,10),6),~p_1, K=13$]{
    \label{fig:probb1}
  \includegraphics[width=0.33\linewidth]{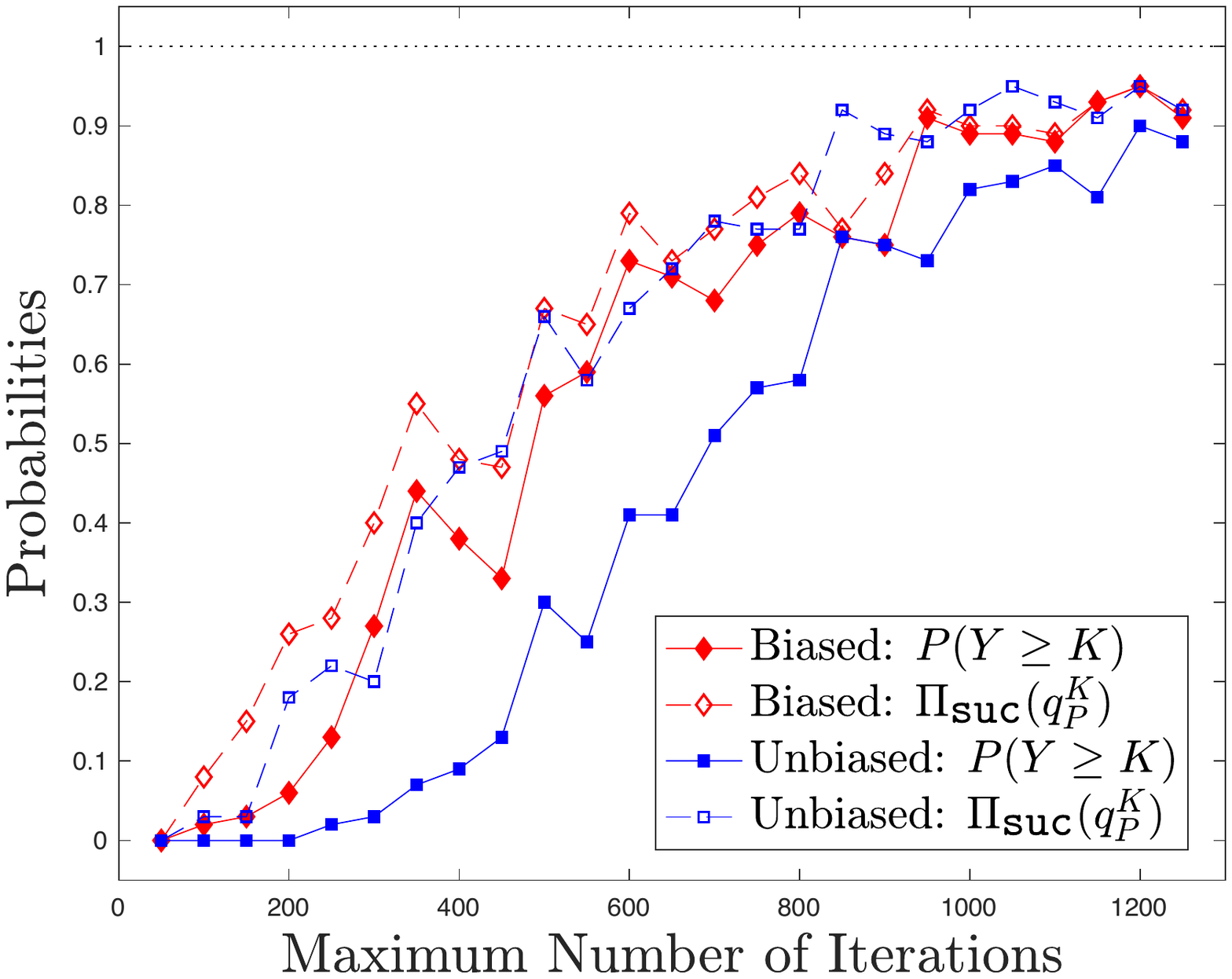}}
     \subfigure[$q_P^K=(q_\text{PTS}^K,q_B^K)=((6,10),9),~p_2, K=6$]{
    \label{fig:probb2}
  \includegraphics[width=0.33\linewidth]{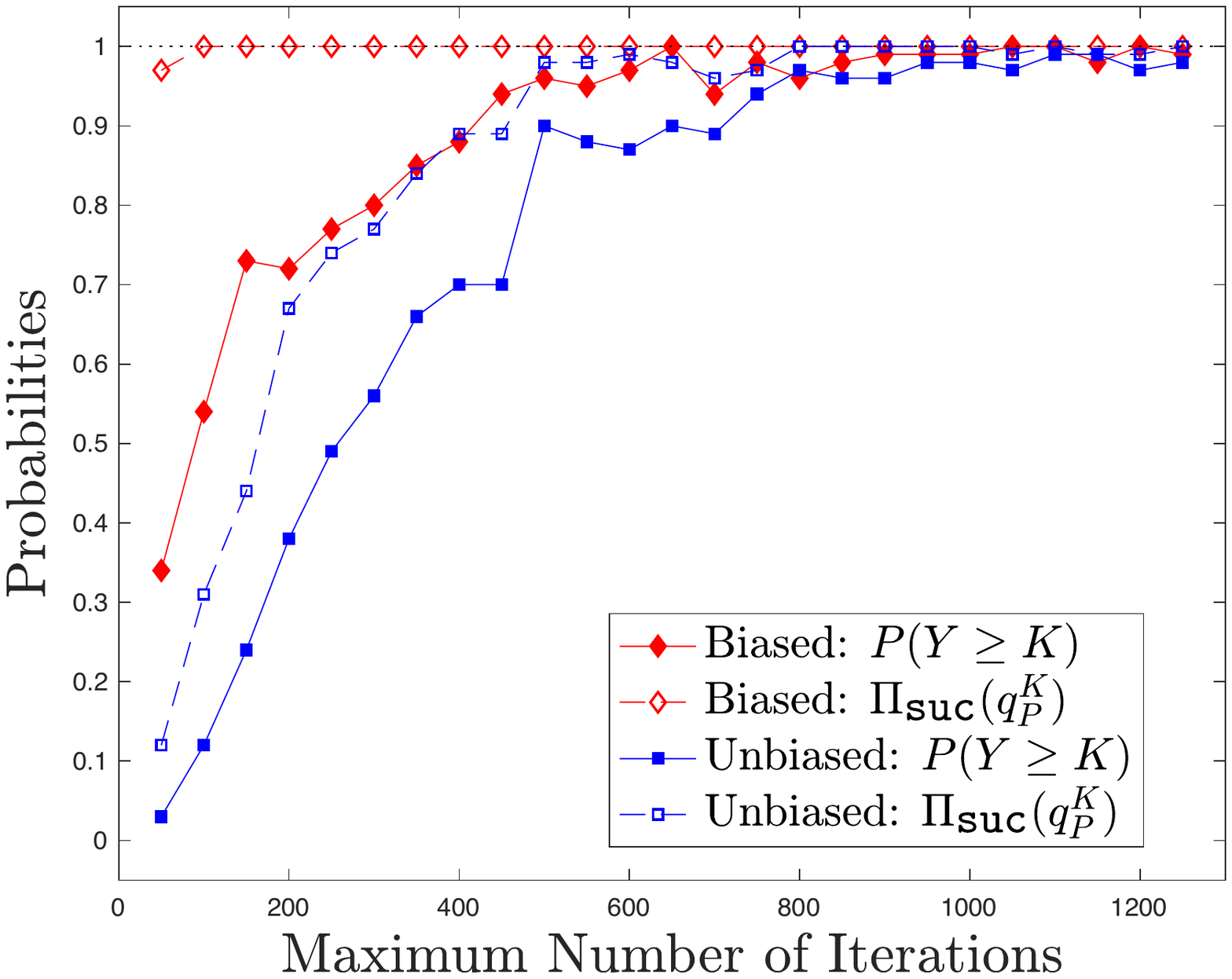}}
       \subfigure[$q_P^K=(q_\text{PTS}^K,q_B^K)=((13,15),13),~p_3, K=7$]{
    \label{fig:probb3}
  \includegraphics[width=0.33\linewidth]{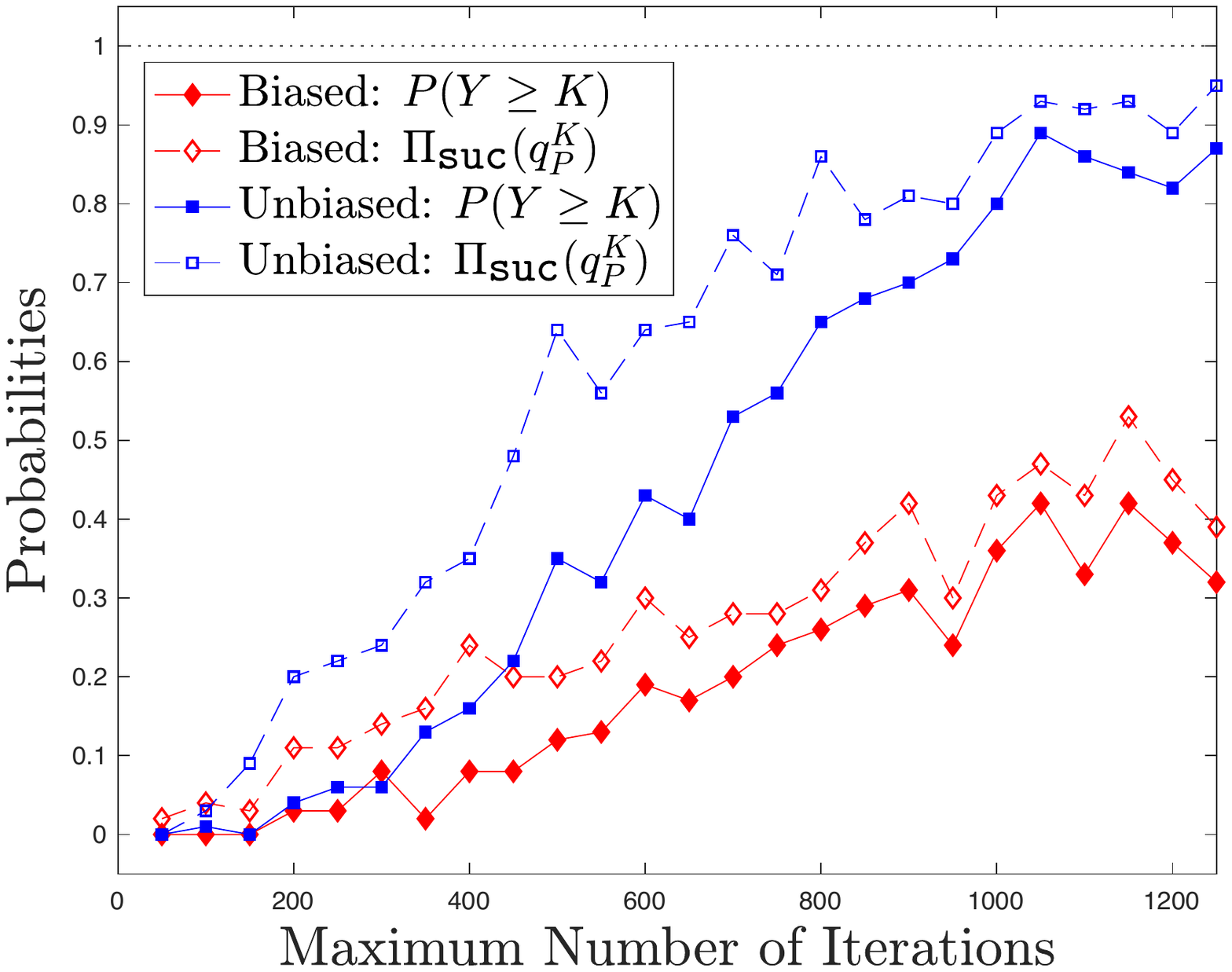}}  
    \caption{ Figures \ref{fig:proba1}-\ref{fig:proba3} and Figures \ref{fig:probb1}-\ref{fig:probb3}  refer to the case where sampling is biased towards $q_B^{\text{F,feas}}=6$ and $q_B^{\text{F,feas}}=9$, respectively.  Illustration of the probabilities $\mathbb{P}(Y\geq K)$  and $\Pi_{\text{suc}}(q_P^K)$ when biased and unbiased sampling is employed for various choices of $q_P^K$ and $n_{\text{max}}^{\text{pre}}$. The probability $\mathbb{P}(Y\geq K)$ is approximated as $\mathbb{P}(Y\geq K)\approx e_{\text{suc}}(n_{\text{max}}^{\text{pre}})/100$, where $e_{\text{suc}}(n_{\text{max}}^{\text{pre}})$ is the number of experiments in which all states of the given feasible prefix path $p_j$, $j\in\{1,2,3\}$ of length $K$ that connects $q_P^K$ to the root are added to the tree.  
    The probability $\Pi_{\text{suc}}(q_P^K)$ is approximated as $\Pi_{\text{suc}}(q_P^K)\approx g_{\text{suc}}(n_{\text{max}}^{\text{pre}})/100$, where $g_{\text{suc}}(n_{\text{max}}^{\text{pre}})$ is the number of experiments in which the final state $q_P^K\in\ccalQ_P^F$ is added to the tree. The probabilities $\mathbb{P}(Y\geq K)$ and $\Pi_{\text{suc}}(q_P^K)$ were estimated from a total of $100$ experiments for every value of $n_{\text{max}}^{\text{pre}}$. To generate the results of Figure \ref{fig:proba} the rewiring step is deactivated. 
}
  \label{fig:proba}
  \end{figure*}
  
 \subsection{Comparison with off-the-shelf model checkers: Summary}

\begin{figure}[t]
  \centering
  \includegraphics[width=0.9\linewidth]{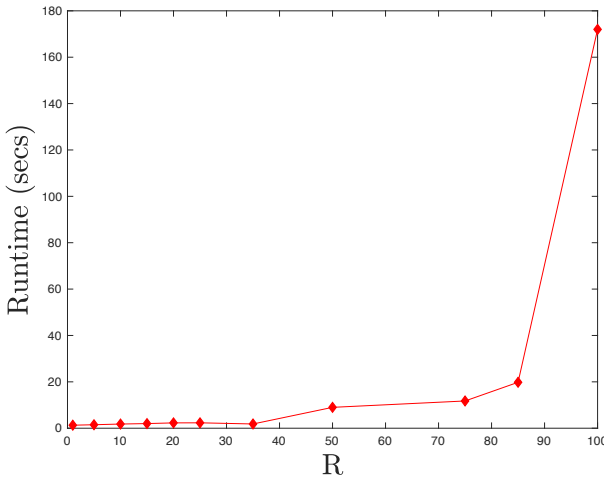}
  \caption{Case Study I [$N=10$ robots, $|\ccalQ_i|=1000$ states]: Comparison of the average runtime for $5$ experiments required to detect the first final state when collision avoidance constraints are imposed for various choices of $R$. }
  \label{fig:colAvR}
\end{figure} 

The numerical experiments presented in Sections \ref{sec:cos} and \ref{sec:largeNBA} show  that $\text{STyLuS}^{*}$ outperforms both NuSMV and nuXmv for large and dense wTS, whether they have self-loops or not, and regardless of the size of the NBA. On the other hand, for small and sparse transition systems with or without self-loops, NuSMV and nuXmv become faster than $\text{STyLuS}^{*}$, as the size of the NBA increases. Also, for small  and sparse transition systems without self-loops, the off-the-shelf model checkers find feasible paths faster than $\text{STyLuS}^{*}$. Nevertheless, NuSMV and nuXmv  can only find feasible paths while $\text{STyLuS}^{*}$ can detect the optimal plan with probability that converges to $1$ exponentially fast.

Finally, note that NuSMV and nuXmv cannot handle collision avoidance constraints that require all robots to maintain a distance between them that is always at least equal to $R$ units; see e.g., \cite{kantaros2018microrobots}. The reason is that a cost function cannot be embedded in the transition systems that are provided as an input to them. Instead, $\text{STyLuS}^{*}$ can check if such safety properties are satisfied every time a sample is taken. Figure \ref{fig:colAvR} shows the total time required by $\text{STyLuS}^{*}$ to detect the first final state for the planning problem with order $10^{31}$ when collision avoidance constraints are imposed. Specifically, observe that as $R$ increases the runtime of $\text{STyLuS}^{*}$ increases, since the number of samples that are rejected increases due to the imposed proximity restrictions.

\subsection{The effect of biased sampling}\label{sec:EffectBias}
Next, we illustrate the effect of introducing bias in the sampling process on the control synthesis performance. First, note that \cite{kantaros2017Csampling} using uniform/unbiased sampling functions can synthesize plans for the synthesis problems considered in Tables \ref{tab:table1}-\ref{tab:table2} with order of states up to $10^{10}$. These are more states than what existing optimal control synthesis algorithms can handle -- see \cite{kantaros2017Csampling} -- but orders of magnitudes less than the states that $\text{StyLuS}^*$ can solve.  

In what follows, we compare the performance of $\text{StyLuS}^*$ for biased and unbiased sampling; see Figure \ref{fig:proba}. In this case study, we consider the same task planning problem considered in \eqref{eq:task2} and Figure \ref{fig:optimWTS}. Note that the considered planning problem is small enough so that comparisons with uniform sampling functions can be provided. Particularly, in Figure \ref{fig:proba}, we show the probabilities $\mathbb{P}(Y\geq K)$ and $\Pi_{\text{suc}}(q_P^K)$ defined in Theorem \ref{thm:conv} for biased and unbiased sampling for various choices of $n_{\text{max}}^{\text{pre}}$ and final states $q_P^K$. Recall that (i) $\mathbb{P}(Y\geq K)$ captures the probability that all states that belong to a given feasible path $\texttt{p}$ of length $K$ that connects $q_P^K$ to the root have been added to the tree within $n_{\text{max}}^{\text{pre}}$ iterations and (ii) $\Pi_{\text{suc}}(q_P^K)\geq \mathbb{P}(Y\geq K)$, using either biased or unbiased sampling. Also, notice that as  $n_{\text{max}}^{\text{pre}}$ increases, both probabilities $\mathbb{P}(Y\geq K)$ and $\Pi_{\text{suc}}(q_P^K)$ converge to $1$, using either biased or unbiased (uniform) sampling density functions, as expected  due to Theorem \ref{thm:conv}.

In Figures \ref{fig:proba1}-\ref{fig:proba3} and  \ref{fig:probb1}-\ref{fig:probb3} the density functions are biased to the feasible final states $q_B^{F,\text{feas}}=6$ and  $q_B^{F,\text{feas}}=9$, respectively. In Figures \ref{fig:proba1} and  \ref{fig:probb1}, the state $q_P^K$ is selected as $q_P^K=((13, 10), 6)\in\ccalQ_P^F$ that is connected to the root through a feasible prefix part $\texttt{p}_1$ with length $K=13$. In Figures \ref{fig:proba2} and \ref{fig:probb2}  the state $q_P^K$ is selected as $q_P^K=((6, 10), 9)\in\ccalQ_P^F$ that is connected to the root through a feasible prefix part $\texttt{p}_2$  with length $K=6$. In fact, $\texttt{p}_2$ corresponds to the optimal prefix part. In Figures  \ref{fig:proba3} and \ref{fig:probb3}  the state $q_P^K$ is selected as $q_P^K=((13, 15), 13)\in\ccalQ_P^F$ that is connected to the root through a feasible prefix part $\texttt{p}_3$  with length $K=7$. Note that the feasible prefix paths $p_1$, $p_2$, and $p_3$ and the respective final states $q_P^K$ are randomly selected.

Observe first that in Figure \ref{fig:proba}, as $n_{\text{max}}^{\text{pre}}$ increases, both probabilities $\mathbb{P}(Y>K)$ and $\Pi_{\text{suc}}(q_P^K)$ converge to $1$ for both biased and uniform (unbiased) sampling, as expected by Theorem \ref{thm:conv}. This shows probabilistic completeness of $\text{STyLuS}^{*}$. Moreover, observe
in Figure \ref{fig:proba1} that for small $n_{\text{max}}^{\text{pre}}$, when sampling is biased towards $q_B^{F,\text{feas}}=6$, the probability $\Pi_{\text{suc}}(q_P^K)$ associated with the NBA final state $6$ is larger compared to the case of uniform sampling, as expected; see also Remark \ref{rem:conv}. The same also holds for Figure \ref{fig:proba2} although sampling is not biased towards the NBA final state $9$. 
Note, however, that this is a problem-specific behavior that depends on the structure of the NBA, and cannot be generalized. On the other hand, in Figure \ref{fig:proba3}, the probability $\Pi_{\text{suc}}(q_P^K)$ associated with NBA final state $13$ is larger for uniform sampling compared to sampling that is biased towards $q_B^{F,\text{feas}}=6$, as expected; see also Remark \ref{rem:conv}. The same observations are also made in Figures \ref{fig:probb1}-\ref{fig:probb3}. 
Observe also that when sampling is biased towards $q_B^{F,\text{feas}}=6$, the probability $\Pi_{\text{suc}}(q_P^K)$ associated with the NBA final state $6$ in Figure \ref{fig:proba1} is higher compared to the one in Figure \ref{fig:proba2}. The reason is that the probability $\Pi_{\text{suc}}(q_P^K)$ in Figures \ref{fig:proba1} and \ref{fig:proba2} is associated with NBA final state $6$ but sampling is steered towards $q_B^{F,\text{feas}}=6$ and $q_B^{F,\text{feas}}=9$, respectively. Moreover, notice in Figures \ref{fig:avgLen6} and \ref{fig:avgLen9} that when biased sampling is employed, feasible prefix paths with smaller length $K$ are detected compared to the case of uniform sampling, as also discussed in Remark \ref{rem:conv}. Finally, note that for a given $n_{\text{max}}^{\text{pre}}$, the size of the resulting trees is larger when uniform sampling is employed. For instance, for $n_{\text{max}}^{\text{pre}}=1000$, the tree consists of $4000$ and $2000$ nodes in average using uniform and biased sampling, respectively. The reason is that uniform sampling explores more states in the PBA while biased sampling  favors exploration in the vicinity of the shortest paths, in terms of hops, to the final states.

\begin{figure}[t]
\centering
  \subfigure[$q_B^{F,\text{feas}}=6$]{
    \label{fig:avgLen6}
  \includegraphics[width=0.48\linewidth]{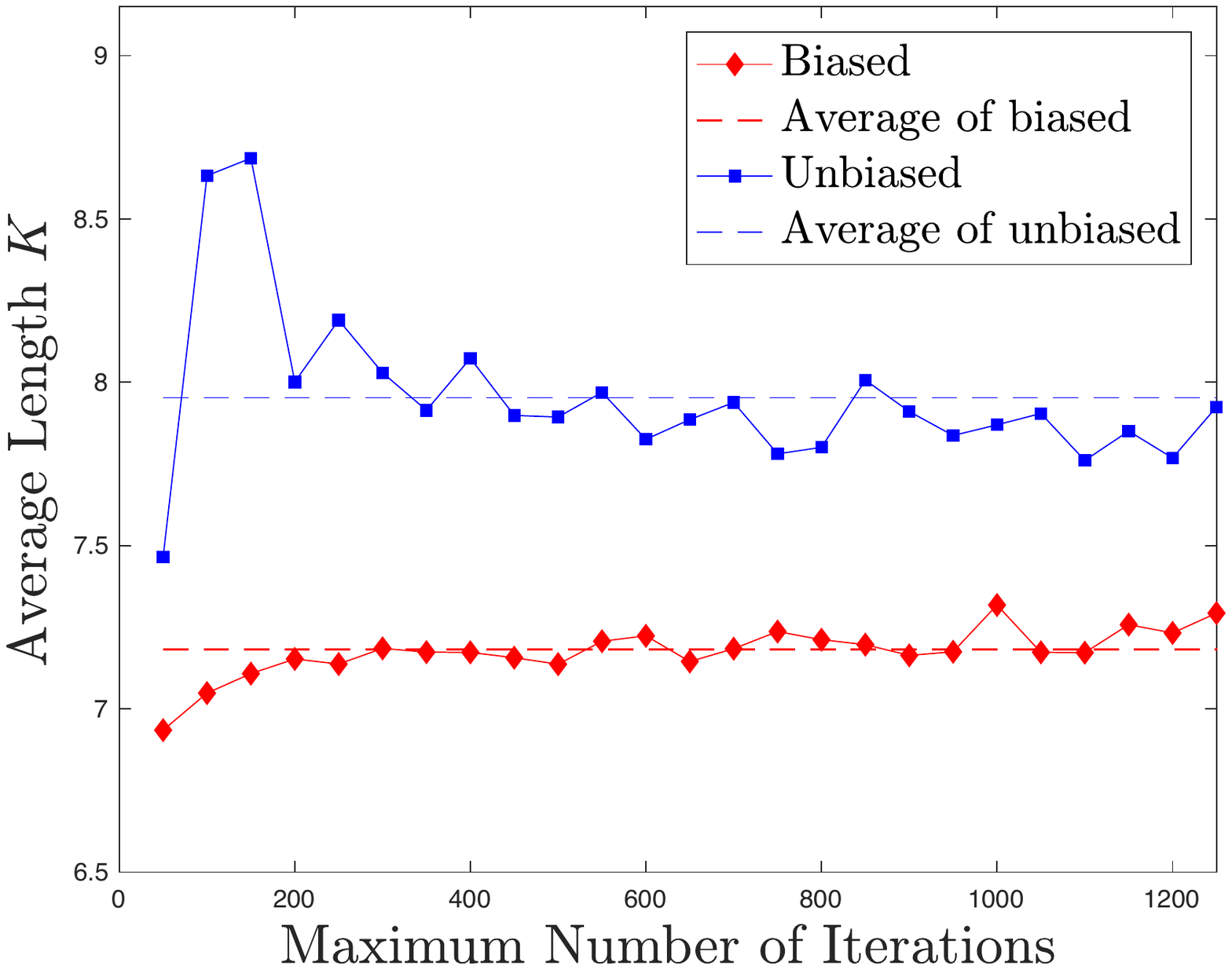}}
     \subfigure[$q_B^{F,\text{feas}}=9$]{
    \label{fig:avgLen9}
  \includegraphics[width=0.48\linewidth]{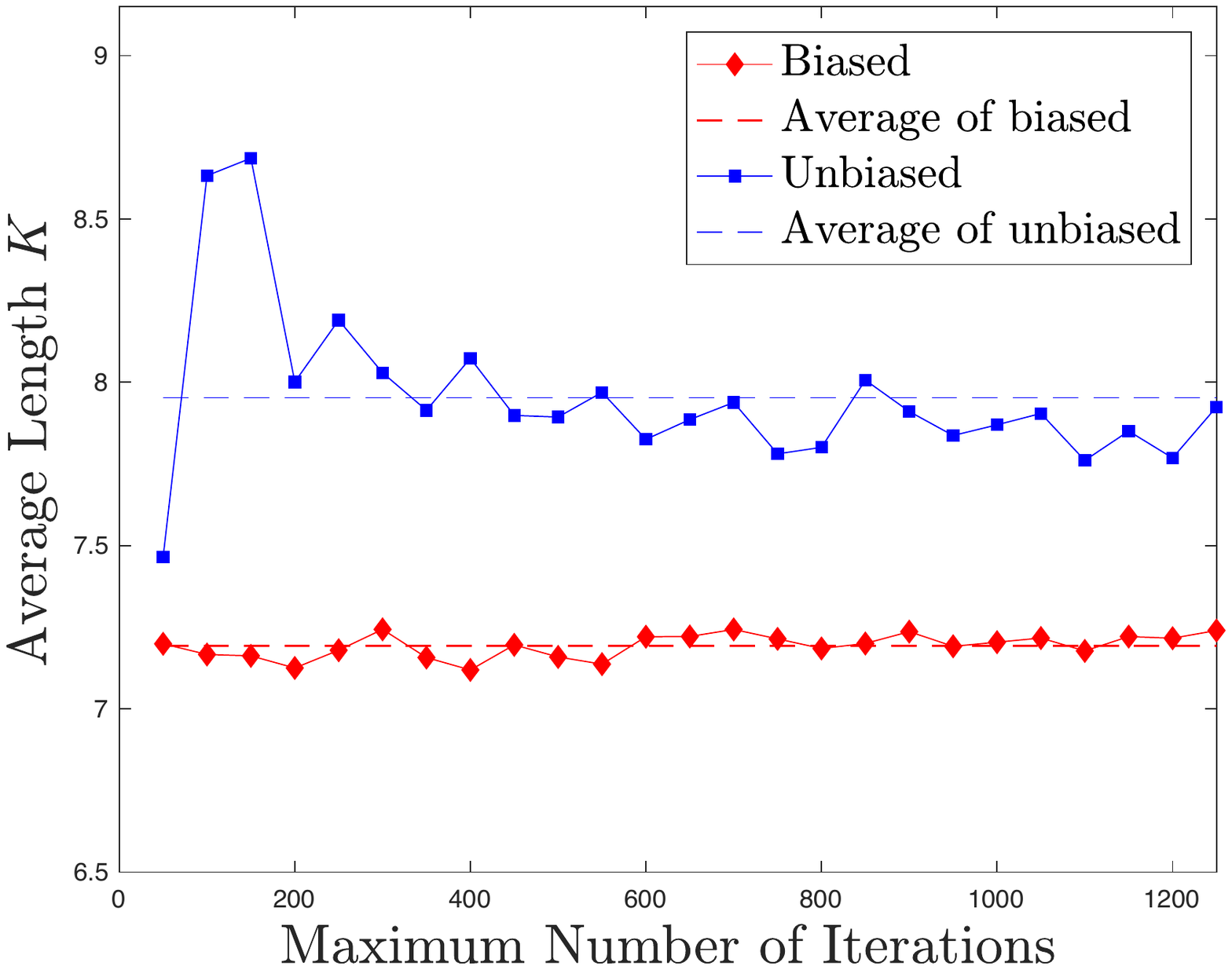}}
  \caption{The red diamonds and the blue squares in Figures \ref{fig:avgLen6} - \ref{fig:avgLen9} depict the average length $K$ of the detected prefix parts for $q_B^{F,\text{feas}}=6$ and $q_B^{F,\text{feas}}=9$, respectively, after 100 experiments for various choices of $n_{\text{max}}^{\text{pre}}$ when biased and unbiased sampling density functions are employed. To generate the results of Figure \ref{fig:avgLen} the rewiring step is deactivated.}
  \label{fig:avgLen}
  \end{figure}


\section{Conclusion}\label{sec:concl}
This paper proposed a new optimal control synthesis algorithm for
multi-robot systems with temporal logic specifications. We showed that the proposed algorithm is probabilistically complete, asymptotically optimal, and converges exponentially fast to the optimal solution. Finally we provided extensive comparative simulation studies showing that the proposed algorithm can synthesize optimal motion plans from product automata with state-spaces hundreds of orders of magnitude larger than those that state-of-the-art methods can manipulate. 

\appendices
\section{Proofs of Propositions}\label{sec:prop}

\subsection{Proof of Proposition \ref{prop:frand}}
Part (i) holds, since $p_{\text{rand}}^n>\epsilon$, by assumption, for all $n\geq 1$. Part (iii) trivially holds by definition of $f_{\text{rand}}^n$ in \eqref{eq:frand}. As for part (ii) observe that $\max(1/|\ccalD_{\text{min}}|,1/|\ccalV_T^n\setminus\ccalD_{\text{min}}^n|)\geq 1/|\ccalQ_P|$, since $\ccalD_{\text{min}}^n\subseteq\ccalV_T^n\subseteq\ccalQ_P$. Combining these two observations, we conclude that $f_{\text{rand}}^n(q_P|\ccalV_T^n)\geq \min(p_{\text{rand}}^{n+k},1-p_{\text{rand}}^{n+k})/|\ccalQ_P|\geq\min(\epsilon,1-\epsilon)/|\ccalQ_P|>0$. Therefore, there exists an infinite sequence $\{g^{n}(q_P|\ccalV_T^n)\}_{n=1}^{\infty}=\{\min(\epsilon,1-\epsilon)/|\ccalQ_P|\}_{n=1}^{\infty}$ such that $f_{\text{rand}}^n(q_P|\ccalV_T^n)\geq g^{n}(q_P|\ccalV_T^n)$ and $\sum_{n=1}^{\infty}g^{n}(q_P|\ccalV_T^n)=\infty$, since  $\min(\epsilon,1-\epsilon)/|\ccalQ_P|$ is a strictly positive constant term completing the proof.

\subsection{Proof of Proposition \ref{prop:fnew}}
Part (i) holds since $f_{\text{new},i}^n(q_{i}|q_{i}^{\text{rand},n})$ is bounded away from zero on $\ccalR_{\text{wTS}_i}(q_i^{\text{rand},n})$, for all robots $i$ as $p_{\text{new}}^n>\epsilon$, for all $n\geq 1$, by assumption. Part (iii) trivially holds by definition of $f_{\text{new}_i}^n$. As for part (ii), following the same logic as in the proof of Proposition \ref{prop:frand}, we can show that for the functions $f_{\text{new},i}^{n+k}$ in \eqref{eq:fnew}, \eqref{eq:fnewsuf1}, and \eqref{eq:fnewsuf2}, it holds that $f_{\text{new},i}^{n+k}(q_{\text{PTS}}|q_{\text{PTS}}^{\text{rand},n})\geq
\min(p_{\text{new}}^{n+k},1-p_{\text{new}}^{n+k})/(|\ccalQ_i|)\geq \min(\epsilon,1-\epsilon)/(|\ccalQ_i|)
$. Thus, we get that $f_{\text{new}}^n(q_{\text{PTS}}|q_{\text{PTS}}^{\text{rand}})=\Pi_{i=1}^{N}f_{\text{new},i}^n(q_{i}|q_{i}^{\text{rand},n})\geq (\min(\epsilon,1-\epsilon))^N/(\Pi_{i=1}^N|\ccalQ_i|)=(\min(\epsilon,1-\epsilon))^N/ |\ccalQ_\text{PTS}|>0$. Therefore, for any fixed and given node  $q_{\text{PTS}}^{\text{rand},n}\in\ccalV_T^n$, there exists an infinite sequence $h^{n+k}(q_{\text{PTS}}|q_{\text{PTS}}^{\text{rand},n+k})=(\min(\epsilon,1-\epsilon))^N/ |\ccalQ_\text{PTS}|>0$ so that 
$f_{\text{new}}^{n+k}(q_{\text{PTS}}|q_{\text{PTS}}^{\text{rand},n})\geq h^{n+k}(q_{\text{PTS}}|q_{\text{PTS}}^{\text{rand},n+k})$ and $\sum_{n=1}^{\infty}h^{n+k}(q_{\text{PTS}}|q_{\text{PTS}}^{\text{rand},n+k})=\infty$, since  $(\min(\epsilon,1-\epsilon))^N/ |\ccalQ_\text{PTS}|>0$ is a strictly positive constant term completing the proof.

\bibliographystyle{IEEEtran}
\bibliography{YK_bib}
\end{document}